\newcommand{\norm}[1]{\left\lVert#1\right\rVert}
\newcommand{\expect}[1]{\mathbb{E}\left[{#1}\right]}
\newcommand{\prob}[1]{\mathbb{P}\left[{#1}\right]}
\newcommand{\given}{\; \big\vert \;} 
\newcommand{\bydef}{:=}
\newcommand{\inner}[2]{\langle #1, #2 \rangle}
\newtheorem{mytheorem}{Theorem}
\newtheorem{mylemma}{Lemma}
\newtheorem{mydefinition}{Definition}
\newcommand{\BlackBox}{\rule{1.5ex}{1.5ex}}
\newenvironment{proof}{\par\noindent{\bf Proof\ }}{\hfill\BlackBox\\[2mm]}
\newcommand{\beq}{\begin{equation}}
\newcommand{\eeq}{\end{equation}}
\newcommand{\beqn}{\begin{equation*}}
\newcommand{\eeqn}{\end{equation*}}
\newcommand{\beqa}{\begin{eqnarray}}
\newcommand{\eeqa}{\end{eqnarray}}
\newcommand{\beqan}{\begin{eqnarray*}}
\newcommand{\eeqan}{\end{eqnarray*}}
\newcommand{\argmax}{\mathop{\mathrm{argmax}}}
\title{Bayesian Optimization under Heavy-tailed Payoffs}
\author{
  Sayak Ray Chowdhury\\
  Department of ECE\\
  Indian Institute of Science\\
  Bangalore, India 560012\\
  \texttt{sayak@iisc.ac.in} 
  \And
  Aditya Gopalan \\
  Department of ECE\\
  Indian Institute of Science\\
  Bangalore, India 560012 \\
  \texttt{aditya@iisc.ac.in} 
}
\begin{document}

\maketitle
\vspace*{-1mm}
\begin{abstract}
\vspace*{-3mm}
 	We consider black box optimization of an unknown function in the nonparametric Gaussian process setting when the noise in the observed function values can be heavy tailed. This is in contrast to existing literature that typically assumes sub-Gaussian noise distributions for queries. Under the assumption that the unknown function belongs to the Reproducing Kernel Hilbert Space (RKHS) induced by a kernel, we first show that an adaptation of the well-known GP-UCB algorithm with reward truncation enjoys sublinear $\tilde{O}(T^{\frac{2 + \alpha}{2(1+\alpha)}})$ regret even with only the $(1+\alpha)$-th moments, $\alpha \in (0,1]$, of the reward distribution being bounded ($\tilde{O}$ hides logarithmic factors). However, for the common squared exponential (SE) and Mat\'{e}rn kernels, this is seen to be significantly larger than a fundamental $\Omega(T^{\frac{1}{1+\alpha}})$ lower bound on regret. We resolve this gap by developing novel Bayesian optimization algorithms, based on kernel approximation techniques, with regret bounds matching the lower bound in order for the SE kernel. We numerically benchmark the algorithms on environments based on both synthetic models and real-world data sets.  
\end{abstract}

\vspace*{-6mm}
\section{Introduction}
\label{sec:intro}
\vspace*{-2mm}
Black-box optimization of an unknown function $f:\Real^d \ra \Real$ with expensive, noisy queries is a generic problem arising in domains such as hyper-parameter tuning for complex machine learning models \cite{BerBen12:randomHypParamTun}, sensor selection \cite{GarOsbRob10:BOsensor}, synthetic gene design \cite{gonzalez2015bayesian}, experimental design
etc. The popular Bayesian optimization (BO) approach, towards solving this problem, starts with a prior distribution, typically a nonparametric Gaussian process (GP), over a function class, uses function evaluations to compute the posterior
distribution over functions, and chooses the next function evaluation adaptively -- using a sampling strategy -- towards reaching the optimum. Popular sampling strategies include expected improvement \cite{movckus1975bayesian}, probability of improvement \cite{wang2016optimization}, upper confidence bounds \cite{srinivas2010gaussian}, Thompson sampling \cite{chowdhury2017kernelized}, predictive-entropy search \cite{hernandez2014predictive}, etc. 


The design and analysis of adaptive sampling strategies for BO typically involves the assumption of bounded, or at worst sub-Gaussian, distributions for rewards (or losses) observed by the learner, which is quite light-tailed. Yet, many real-world environments are known to exhibit heavy-tailed behavior, e.g., the distribution of delays in data networks is inherently heavy-tailed especially with highly variable or bursty traffic flow distributions that are well-modeled with heavy tails \cite{JagMarModTsi14:heavytailsched}, heavy-tailed price fluctuations are common in finance and insurance data \cite{resnick2007heavy}, properties of complex networks often exhibit heavy tails such as degree distribution \cite{strogatz2001exploring}, etc. This motivates studying methods for Bayesian optimization when observations are significantly heavy tailed compared to Gaussian.



A simple version of black box optimization -- in the form of online learning in finite multi-armed bandits (MABs) -- with heavy-tailed payoffs, was first studied rigorously by \citet{bubeck2013bandits}, where the payoffs are assumed to have bounded $(1+\alpha)$-th moment for $\alpha \in (0,1]$. They showed that for MABs with only finite variances (i.e., $\alpha = 1$), by using statistical estimators that are
more robust than the empirical mean, one can still recover the optimal regret rate for MAB under the sub-Gaussian assumption. 
Moving further, \citet{medina2016no} consider these estimators for the problem of linear (parametric) stochastic bandits under heavy-tailed rewards and \citet{shao2018almost} show that almost optimal algorithms can be designed by using an optimistic, data-adaptive truncation of rewards. Some other important works include pure exploration under heavy-tailed noise \cite{yu2018pure}, payoffs with bounded kurtosis \cite{lattimore2017scale}, extreme bandits \cite{carpentier2014extreme}, heavy tailed payoffs with $\alpha \in (0,\infty)$  \cite{vakili2013deterministic}.


Against this backdrop, we consider regret minimization with heavy-tailed reward distributions in bandits with a potentially continuous arm set, and whose (unknown) expected reward function is nonparametric assumed to have smoothness compatible with a kernel on the arm set. Here, it is unclear if existing BO techniques relying on statistical confidence sets based on sub-Gaussian observations can be made to work to attain nontrivial regret, since it is unlikely that these confidence sets will at all be correct. 
%
It is worth mentioning that in the finite dimensional setting, \citet{shao2018almost} solve the problem almost optimally, but their results do not carry over to the general nonparametric kernelized setup since their algorithms and regret bounds depend crucially on the finite feature dimension.  We answer this affirmatively in this work, and formalize and solve BO under heavy tailed noise almost optimally. Specifically, this paper makes the following contributions.
\begin{itemize}
\vspace*{-1mm}
\item  We adapt the GP-UCB algorithm to heavy-tailed payoffs by a truncation step, and show that it enjoys a regret bound of $\tilde{O}(\gamma_TT^{\frac{2 + \alpha}{2(1+\alpha)}})$ where $\gamma_T$ depends on the kernel associated with the RKHS and is generally sub-linear in $T$. This regret rate, however, is potentially sub-optimal due to a $\Omega(T^{\frac{1}{1+\alpha}})$ fundamental lower bound on regret that we show for two specific kernels, namely the squared exponential (SE) kernel and the Mat\'{e}rn kernel. 
\vspace*{-1mm}
\item We develop a new Bayesian optimization algorithm by truncating rewards in each direction of an approximate, finite-dimensional feature space. We show that the feature approximation can be carried out by two popular kernel approximation techniques: Quadrature Fourier features \cite{mutny2018efficient} and  Nystr\"{o}m approximation \cite{calandriello2019gaussian}. The new algorithm under either approximation scheme gets regret $\tilde{O}(\gamma_TT^{\frac{1}{1+\alpha}})$, which is optimal upto log factors for the SE kernel.
\vspace*{-1mm}
\item  Finally, we report numerical results based on experiments on synthetic as well as real-world based datasets, for which the algorithms we develop are seen to perform favorably in the harsher heavy-tailed environments.
%
\end{itemize}
\vspace*{-2mm}
{\bf Related work.}
An alternative line of work uses approaches for black box optimization based on Lipschitz-type smoothness structure \cite{kleinberg2008multi,bubeck2011x,azar2014online,sen2019noisy}, which is qualitatively different from RKHS smoothness type assumptions. Recently, \citet{bogunovic2018adversarially} consider GP optimization under an adversarial perturbation of the query points. But, the observation noise is assumed to be Gaussian unlike our heavy-tailed environments. Kernel approximation schemes in the context of BO usually focuses on reducing the cubic cost of gram matrix inversion \cite{wang2017max,wang2017batched,mutny2018efficient,calandriello2019gaussian}. However, we crucially use these approximations to achieve optimal regret for BO under heavy tailed noise, which, we believe, might not be possible without resorting to the kernel approximations.

\vspace*{-3mm}
\section{Problem formulation}
\label{sec:Problem-statement}
\vspace*{-2mm}
Let $f:\cX \ra \Real$ be a fixed but unknown function over a domain $\cX \subset \Real^d$ for some $d \in \mathbb{N}$. At every round, a learner queries $f$ at a single point $x_t \in \cX$, and observes a noisy payoff $y_t=f(x_t) + \eta_t$. 
Here the noise sequence $\eta_t,t\ge 1$ are assumed to be zero mean i.i.d. random variables such that the payoffs satisfy $\expect{\abs{y_t}^{1+\alpha}|\cF_{t-1}} \le v$ for some $\alpha \in (0,1]$ and $v \in (0,\infty)$, where $\cF_{t-1} = \sigma(\lbrace x_\tau,y_\tau)\rbrace_{\tau=1}^{t-1},x_t)$ denotes the $\sigma$-algebra generated by the events so far\footnote{If instead the moment bound holds for each $\eta_t$ then this can be translated to a moment bound for each $y_t$ using, say, a bound on $f(x)$.}. Observe that this bound on the $(1+\alpha)$-th moment at best yields  bounded variance for $y_t$, and does not necessarily mean that $y_t$ (or $\eta_t$) is sub-Gaussian as is assumed typically. The query point $x_t$ at round $t$ is chosen causally depending upon the history $\lbrace (x_s,y_s)\rbrace_{s=1}^{t-1}$ of query and payoff sequences available up to round $t-1$. The learner's goal is to maximize its (expected) cumulative reward $\sum_{t=1}^{T} f(x_t)$ over a time horizon $T$ or equivalently minimize its cumulative {\em regret} $R_T = \sum_{t=1}^T \left(f(x^\star)-f(x_t)\right)$, where $x^\star \in \argmax_{x\in \cX}f(x)$ is a maximum point of $f$ (assuming the maximum is attained; not necessarily unique). A sublinear growth of $R_T$ with $T$ implies the time-average regret $R_T/T \ra 0$ as $T\ra \infty$. 

\textbf{Regularity assumptions:} Attaining sub-linear regret is impossible in general for arbitrary reward functions $f$, and thus some regularity assumptions are needed. In this paper, we assume smoothness for $f$ induced by the structure of a kernel on $\cX$. Specifically, we make the standard assumption of a p.s.d. kernel $k: \cX \times \cX \to \mathbb{R}$ such that $k(x,x) \le 1$ for all $x \in \cX$, and $f$ being an element of the reproducing kernel Hilbert space (RKHS) $\cH_k(\cX)$ of smooth real valued functions on $\cX$. Moreover, the RKHS norm of $f$ is assumed to be bounded, i.e., $\norm{f}_\cH \le B$ for some $B < \infty$. Boundedness of $k$ along the diagonal holds for any stationary kernel, i.e., where $k(x,x')=k(x-x')$, e.g., the \textit{Squared Exponential} kernel $k_{\text{SE}}$ and the \textit{Mat$\acute{e}$rn} kernel $k_{\text{Mat\'{e}rn}}$:
\beqn
k_{\text{SE}}(x,x') = \exp\left(-\frac{r^2}{2l^2}\right)\quad \text{and} \quad
k_{\text{Mat\'{e}rn}}(x,x') = \frac{2^{1-\nu}}{\Gamma(\nu)}\left(\frac{r\sqrt{2\nu}}{l}\right)^\nu B_\nu\left(\frac{r\sqrt{2\nu}}{l}\right),
\eeqn
where $l > 0$ and $\nu > 0$ are hyperparameters of the kernels, $r = \norm{x-x'}_2$ is the distance between $x$ and $x'$, and $B_\nu$ is the modified Bessel function.

\vspace*{-3mm}
\section{Warm-up: the first algorithm}
\label{sec:warmup}
\vspace*{-2mm}
Towards designing a BO algorithm for heavy tailed observations, we briefly recall the standard GP-UCB algorithm for the sub-Gaussian setting. GP-UCB at time $t$ chooses the point $x_t = \argmax_{x \in \cX} \mu_{t-1}(x)+\beta_t \sigma_{t-1}(x)$ where $\mu_t(x) = k_t(x)^T(K_t + \lambda I_t)^{-1}Y_t$ and $\sigma_t^2(x) =  k(x,x) - k_t(x)^T(K_t + \lambda I_t)^{-1} k_t(x)$ are the posterior mean and variance functions after $t$ observations from a function drawn from the GP prior $GP_\cX(0,k)$, with additive i.i.d. Gaussian noise $\cN(0,\lambda)$. Here $Y_t = [y_1,\ldots,y_t]^T$ is the vector formed by observations, $K_t = [k(u,v)]_{u,v \in \cX_t}$ is the kernel matrix, $k_t(x) =  [k(x_1,x),\ldots,k(x_t,x)]^T$ and $I_t$ is the identity matrix of order $t$. If the noise $\eta_t$ is assumed conditionally $R$-sub-Gaussian, i.e., $ \expect{e^{\gamma \eta_t} \given \cF_{t-1}} \le \exp\left(\frac{\gamma^2R^2}{2}\right)$ for all $\gamma \in \Real$, then using $\beta_{t+1}=O\left(R\sqrt{\ln \abs{I_t+\lambda^{-1}K_t}}\right)$ ensures $\tilde{O}(\sqrt{T})$ regret \cite{chowdhury2017kernelized}, as the posterior GP concentrates rapidly on the true function $f$. However, when the sub-Gaussian assumption does not hold, we cannot expect the posterior GP to have such nice concentration property. In fact, it is known that the ridge regression estimator $\mu_t \in \cH_k(\cX)$ of $f$ is not robust when the noise exhibits heavy fluctuations \cite{hsu2014heavy}. So, in order to tackle heavy tailed noise, one needs more robust estimates $\hat{\mu}_t$ of $f$ along with suitable confidence sets. A natural idea to curb the effects of heavy fluctuations is to truncate high rewards \cite{bubeck2013bandits}. Our first algorithm Truncated GP-UCB (Algorithm \ref{algo:tgp-ucb}) is based on this idea.

\textbf{Truncated GP-UCB (TGP-UCB) algorithm:} 

\begin{minipage}[t]{7cm}
\vspace*{-3mm} 
  \vspace{0pt}  
At each time $t$, we truncate the reward $y_t$ to zero if it is larger than a suitably chosen truncation level $b_t$, i.e., we set the truncated reward $\hat{y}_t=y_t\mathds{1}_{\abs{y_t}\le b_t}$. Then, we construct the truncated version of the posterior mean as $\hat{\mu}_t(x)=k_t(x)^T(K_t+\lambda I_t)^{-1}\hat{Y}_t$ where $\hat{Y}_t=[\hat{y}_1,\ldots,\hat{y}_t]^T$ and simply run GP-UCB with $\hat{\mu}_t$ instead of $\mu_t$. The truncation level $b_t$ can be adapted with time $t$. We choose an increasing sequence of $b_t$'s, i.e., as time progresses and confidence interval shrinks, we truncate more and more 
   \end{minipage}
   \begin{minipage}[t]{7cm}
   \vspace*{-12mm} 
     \vspace{0pt}
     \begin{algorithm}[H]
        \renewcommand\thealgorithm{1}
        \caption{Truncated GP-UCB (TGP-UCB)}\label{algo:tgp-ucb}
        \begin{algorithmic}
        \STATE \textbf{Input:} Parameters $\lambda > 0$, $\lbrace b_t \rbrace_{t \ge 1}$, $\lbrace \beta_t \rbrace_{t \ge 1}$
        \STATE Set $\hat{\mu}_0(x) = 0$ and $\sigma_0^2(x) = k(x,x) \forall x \in \cX$ 
        \FOR{$t = 1, 2, 3 \ldots$}
        \STATE Play $x_t = \argmax_{x\in \cX} \hat{\mu}_{t-1}(x) + \beta_{t}\sigma_{t-1}(x)$ and observe payoff $y_t$
        \STATE Set $\hat{y}_t=y_t \mathds{1}_{\abs{y_t}\le b_t}$ and $\hat{Y}_t = [\hat{y}_1,\ldots,\hat{y}_t]^T$
        \STATE Compute  $\hat{\mu}_t(x) = k_t(x)^T(K_t+\lambda I_t)^{-1}\hat{Y}_t$ and $\sigma_t^2(x)=k_t(x)^T(K_t+\lambda I_t)^{-1}k_t(x)$
        \ENDFOR
        \end{algorithmic}
        \addtocounter{algorithm}{-1}
        \end{algorithm}
        \end{minipage}
         aggressively. Finally, in order to account for the bias introduced by truncation, we blow up the confidence width $\beta_t$ of GP-UCB by a multiplicative factor of $b_t$ so that $f(x)$ is contained in the interval $\hat{\mu}_{t-1}(x) \pm\beta_{t} \sigma_{t-1}(x)$ with high probability. This helps us to obtain a sub-linear regret bound for TGP-UCB given in the Theorem \ref{thm:regret-bound-tgp-ucb}, with a full proof deferred to appendix B.
         \vspace*{-1.5mm}
\begin{mytheorem}[Regret bound for TGP-UCB]
  Let $f \in \cH_k(\cX)$, $\norm{f}_\cH \le B$ and $k(x,x) \le 1$ for all $x \in \cX$. Let $\expect{\abs{y_t}^{1+\alpha}|\cF_{t-1}} \le v < \infty$ for some $\alpha \in (0,1]$ and for all $t \ge 1$. Then, for any $\delta \in (0,1]$, TGP-UCB, with $b_t=v^{\frac{1}{1+\alpha}}t^{\frac{1}{2(1+\alpha)}}$ and $\beta_{t+1} = B + \frac{3}{\sqrt{\lambda}} \;b_t\sqrt{\ln \abs{I_t+\lambda^{-1}K_t}+2\ln(1/\delta)}$, enjoys, with probability at least $1-\delta$, the regret bound
   \beqn
   R_T = O\left(B\sqrt{T\gamma_T}+v^{\frac{1}{1+\alpha}}\sqrt{\gamma_T\left(\gamma_T+\ln(1/\delta)\right)} T^{\frac{2+\alpha}{2(1+\alpha)}}\right),
   \eeqn
   where $\gamma_T\equiv \gamma_T(k,\cX) = \max_{A \subset \cX : \abs{A}=t} \frac{1}{2} \ln \abs{I_t + \lambda^{-1}K_A}$.
 \label{thm:regret-bound-tgp-ucb}  
\end{mytheorem}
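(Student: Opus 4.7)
The plan is to first establish a high-probability uniform confidence band $|\hat{\mu}_{t-1}(x) - f(x)| \le \beta_t \sigma_{t-1}(x)$ for every $x \in \cX$ and $t \ge 1$, and then feed it into the standard optimistic regret decomposition. Given such a band, the UCB selection rule forces $f(x^\star) - f(x_t) \le 2\beta_t \sigma_{t-1}(x_t)$, and Cauchy--Schwarz combined with the information-gain inequality $\sum_{t=1}^T \sigma_{t-1}^2(x_t) = O(\gamma_T)$ gives $R_T = O(\beta_T \sqrt{T\gamma_T})$. Substituting the explicit $\beta_T$ with $b_{T-1} = v^{1/(1+\alpha)} T^{1/(2(1+\alpha))}$ then yields the two claimed terms.

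The bulk of the work is in proving the confidence band. Decompose every truncated observation as $\hat{y}_s = f(x_s) + \xi_s + \zeta_s$, where $\zeta_s \bydef \expect{\hat{y}_s - f(x_s) \given \cF_{s-1}}$ is the truncation bias and $\xi_s$ is the centered remainder, which forms a martingale difference sequence. Markov's inequality applied to the $(1+\alpha)$-th moment bound yields $|\zeta_s| \le \expect{|y_s|\mathds{1}_{|y_s|>b_s} \given \cF_{s-1}} \le v/b_s^\alpha$, while $|\xi_s| = O(b_s)$ by truncation. Substituting $\hat{Y}_{t-1} = F_{t-1} + \xi_{1:t-1} + \zeta_{1:t-1}$ into the ridge-regression formula for $\hat{\mu}_{t-1}$ splits the estimation error into three pieces:
\beqn
\hat{\mu}_{t-1}(x) - f(x) = \big[k_{t-1}(x)^T(K_{t-1}+\lambda I)^{-1} F_{t-1} - f(x)\big] + k_{t-1}(x)^T(K_{t-1}+\lambda I)^{-1}\xi_{1:t-1} + k_{t-1}(x)^T(K_{t-1}+\lambda I)^{-1}\zeta_{1:t-1}.
\eeqn
The first (regularization) piece is controlled by $B\sigma_{t-1}(x)$ via the RKHS reproducing property and $\norm{f}_\cH \le B$, exactly as in \citet{chowdhury2017kernelized}. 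The second (noise) piece is handled by passing to the rescaled sequence $\xi_s/b_s$, which is uniformly bounded by a constant and hence sub-Gaussian; the kernelized self-normalized martingale inequality then gives a bound of order $(b_{t-1}/\sqrt{\lambda})\sigma_{t-1}(x)\sqrt{\ln\abs{I+\lambda^{-1}K_{t-1}} + \ln(1/\delta)}$, uniformly in $t$.

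For the third (bias) piece, Cauchy--Schwarz in the $(K_{t-1}+\lambda I)^{-1}$ inner product yields a bound of $(\sigma_{t-1}(x)/\sqrt{\lambda})\norm{\zeta_{1:t-1}}_2$, and the polynomial choice $b_s = v^{1/(1+\alpha)} s^{1/(2(1+\alpha))}$ is tuned precisely so that $\sum_{s<t}(v/b_s^\alpha)^2 = v^{2/(1+\alpha)}\sum_s s^{-\alpha/(1+\alpha)} = O(b_{t-1}^2)$; hence $\norm{\zeta_{1:t-1}}_2 = O(b_{t-1})$, absorbed into the noise-piece bound with no loss. Combining the three pieces produces $\beta_t$ in the stated form and completes the confidence argument. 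I expect the main obstacle to be the noise-concentration step: the standard Abbasi-Yadkori self-normalized bound assumes a single sub-Gaussian parameter while our $\xi_s$ has range growing like $b_s$, so one must either pre-rescale the sequence before applying the inequality or use a peeling union bound over time, and then verify that the resulting uniform-in-$t$ tail preserves the correct logarithmic dependence on $\delta$ and $\gamma_t$.
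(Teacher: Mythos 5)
Your proposal follows essentially the same route as the paper's proof: the identical three-way decomposition of $\hat{\mu}_{t-1}-f$ into a ridge-regularization term bounded by $B\sigma_{t-1}(x)$, a centered truncated-noise martingale controlled by the kernelized self-normalized inequality with parameter $O(b_t)$, and a truncation-bias term bounded via Markov's inequality together with the tuning $\sum_{s\le t}(v/b_s^{\alpha})^2=O(b_t^2)$, followed by the standard optimistic regret decomposition and $\sum_{t}\sigma_{t-1}^2(x_t)=O(\gamma_T)$. Two small points of comparison: the paper handles the time-growing range of $\xi_s$ simply by invoking the self-normalized bound with $R=2b_t$ (valid since $b_s$ is nondecreasing, modulo the uniformity-over-$t$ issue you correctly flag), and it obtains your bias bound $(\sigma_{t-1}(x)/\sqrt{\lambda})\norm{\zeta_{1:t-1}}_2$ by passing to the $V_{t-1}^{-1}$-weighted RKHS norm of $\Phi_{t-1}^T\zeta_{1:t-1}$ rather than by Cauchy--Schwarz in the $(K_{t-1}+\lambda I)^{-1}$ inner product, since the latter produces the factor $\sqrt{k(x,x)-\sigma_{t-1}^2(x)}$ instead of the needed $\sigma_{t-1}(x)$.
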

\vspace*{-1.5mm}
Here, $\gamma_T$ denotes the \textit{maximum information gain} about any $f\sim GP_{\cX}(0, k)$ after $T$ noisy observations obtained by passing $f$ through an i.i.d. Gaussian channel $\cN(0,\lambda)$, and measures the reduction in the uncertainty of $f$ after $T$ noisy observations. It is a property of the kernel $k$ and domain $\cX$, e.g., if $\cX$ is compact and convex, then $\gamma_T=O\left((\ln T)^{d+1}\right)$ for $k_{\text{SE}}$ and  $O\big(T^{\frac{d(d+1)}{2\nu+d(d+1)}}\ln T\big)$ for $k_{\text{Mat\'{e}rn}}$ \cite{srinivas2010gaussian}.

\vspace*{-1.5mm}
\textbf{\textit{Remark 1.}} 
An $R$-sub-Gaussian environment satisfies the moment condition with $\alpha = 1$ and $v = R^2$, so the result implies a sub-linear $\tilde{O}(T^{3/4})$ regret bound for TGP-UCB in sub-Gaussian environments. 


  
\vspace*{-3mm}
\section{Regret lower bound}
\vspace*{-2mm}
Establishing lower bounds under general kernel smoothness structure  is an open problem even when the payoffs are Gaussian. Similar to \citet{scarlett2017lower}, we only focus on the SE and
Mat\'{e}rn kernels.
\vspace*{-2mm}
\begin{mytheorem}[Lower bound on cumulative regret]
Let $\cX=[0,1]^d$ for some $d \in \mathbb{N}$. Fix a kernel $k \in \{k_{\text{SE}}, k_{\text{Mat\'{e}rn}}\}$, $B > 0$, $T \in \mathbb{N}$, $\alpha \in (0,1]$ and $v > 0$. Given any algorithm, there exists a function $f \in \cH_k(\cX)$ with $\norm{f}_{\cH} \le B$, and a reward distribution satisfying $\expect{\abs{y_t}^{1+\alpha}|\cF_{t-1}} \le v$ for all $t \in [T]:=\lbrace 1,2,\ldots,T \rbrace$, such that when the algorithm is run with this $f$ and reward distribution, its regret satisfies
\begin{enumerate}
\vspace*{-2mm}
\item $\mathbb{E}[R_T] = \Omega \left(v^{\frac{1}{1+\alpha}}\left(\ln \left(v^{-\frac{1}{\alpha}}B^{\frac{1+\alpha}{\alpha}} T\right)\right)^{\frac{d\alpha}{1+\alpha}}T^{\frac{1}{1+\alpha}}\right)$ if $k=k_{\text{SE}}$,
\vspace*{-2mm}
\item $\mathbb{E}[R_T]=\Omega \left( v^{\frac{\nu}{\nu(1+\alpha)+d\alpha}} \, B^{\frac{d\alpha}{\nu(1+\alpha)+d\alpha}} \, T^{\frac{\nu+d\alpha}{\nu(1+\alpha)+d\alpha}}\right)$ if $k=k_{\text{Mat\'{e}rn}}$.
\end{enumerate}
\label{thm:lower-bound}
\end{mytheorem}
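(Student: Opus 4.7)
The plan is to adapt the classical needle-in-haystack construction used by \citet{scarlett2017lower} for sub-Gaussian GP bandits, replacing the Gaussian observation channel with a family of heavy-tailed distributions matched to the moment condition $\expect{|y_t|^{1+\alpha}|\cF_{t-1}}\leq v$. First, I would place $M$ disjoint bumps $f_1,\ldots,f_M\in \cH_k(\cX)$ on a uniform grid inside $[0,1]^d$, each a rescaled translate of a single localized prototype with common peak height $\Delta$ and support of width $w$, and each satisfying $\norm{f_i}_\cH\leq B$. The kernel-specific budget determines the admissible $(\Delta,w)$: for $k_{\text{SE}}$ the SE-RKHS admits enough well-separated prototypes that one can afford $M \asymp (\ln T)^d$ bumps while choosing $\Delta$ purely from the information-theoretic constraint below; for $k_{\text{Mat\'{e}rn}}$ the RKHS is equivalent to the Sobolev space $W^{\nu+d/2,2}$, and a compactly supported bump of height $\Delta$ and width $w$ satisfies $\norm{f_i}_\cH \asymp \Delta\, w^{-\nu}$, giving the budget $\Delta \lesssim B w^\nu$ with $M = w^{-d}$.

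Next I would construct the heavy-tailed reward channel. In world $i$, the reward at query point $x$ is sampled from a two-point distribution $P_{f_i(x)}$ putting mass $q(x)$ on a common large value $A$ (chosen once, independent of $i$, based on $\Delta$ and $v$) and mass $1-q(x)$ on $0$, where $q(x)\,A = f_i(x)$. Taking $A \asymp (v/\Delta)^{1/\alpha}$ ensures that $(1+\alpha)$-moments are bounded by $v$ simultaneously in all worlds and at all query points. A direct log-ratio computation then gives
\[
\mathrm{KL}\bigl(P_{f_i(x)},P_{f_j(x)}\bigr)\;\lesssim\;\frac{|f_i(x)-f_j(x)|^{(1+\alpha)/\alpha}}{v^{1/\alpha}},
\]
the heavy-tailed analogue of the Gaussian $\Delta^2/\sigma^2$ and the only place where the new exponent $(1+\alpha)/\alpha$ enters. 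Because the bumps are disjoint, this KL vanishes outside the supports of bumps $i$ and $j$ and never exceeds $\Delta^{(1+\alpha)/\alpha}/v^{1/\alpha}$.

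With these two ingredients in hand, I would apply a standard Fano-style information argument. Let $\mathbb{P}_i$ denote the law of the length-$T$ interaction transcript under world $i$ for the fixed algorithm and $N_i$ the number of queries falling inside bump $i$'s support. By the chain rule for KL and by averaging the true index $i^\star$ uniformly over $\{1,\ldots,M\}$ (which, together with $\sum_j \mathbb{E}_{i^\star}[N_j]\leq T$, forces $\mathbb{E}_{i^\star}[N_{i^\star}]\leq T/M$), the typical pairwise KL between worlds is of order $(T/M)\,\Delta^{(1+\alpha)/\alpha}/v^{1/\alpha}$. Fano's inequality then implies that no algorithm can identify $i^\star$ with constant probability (and hence must incur $\Omega(T\Delta)$ regret) whenever
\[
\Delta \;\lesssim\; v^{1/(1+\alpha)}\left(\frac{M}{T}\right)^{\alpha/(1+\alpha)}.
\]
For $k_{\text{SE}}$, the RKHS constraint is loose, so only this bound binds; taking $M\asymp(\ln T)^d$ yields $R_T\gtrsim v^{1/(1+\alpha)}(\ln T)^{d\alpha/(1+\alpha)}T^{1/(1+\alpha)}$. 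For $k_{\text{Mat\'{e}rn}}$, one equates the information-theoretic and RKHS constraints, $\Delta = Bw^{\nu}$ and $\Delta = v^{1/(1+\alpha)}(w^{-d}/T)^{\alpha/(1+\alpha)}$, solves for $w$, and reads off $T\Delta$, producing the exponent $(\nu+d\alpha)/(\nu(1+\alpha)+d\alpha)$ in $T$ together with the stated prefactors $B^{d\alpha/(\nu(1+\alpha)+d\alpha)}$ and $v^{\nu/(\nu(1+\alpha)+d\alpha)}$.

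The main obstacle, and the only place where the sub-Gaussian proof of \citet{scarlett2017lower} genuinely has to be redone rather than black-boxed, is designing the heavy-tailed channel in the second step: one must exhibit a one-parameter family of distributions with common support and uniformly bounded $(1+\alpha)$-moment whose pairwise KL between close means scales as $\Delta^{(1+\alpha)/\alpha}$ rather than $\Delta^2$, since this asymmetric scaling is exactly what converts the sub-Gaussian $T^{1/2}$ lower-bound rate into the claimed $T^{1/(1+\alpha)}$. A secondary technical task is the sharp RKHS-norm estimate for the Matérn prototype, for which the cleanest derivation goes through the spectral density of $k_{\text{Mat\'{e}rn}}$: a compactly supported bump of width $w$ and height $\Delta$ has Fourier transform whose $L^2$-weighting by the inverse spectral density is controlled, up to dimension-dependent constants, by $\Delta^2 w^{-2\nu}$.
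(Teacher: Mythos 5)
Your proposal follows essentially the same route as the paper's proof: the same grid of shifted bump functions with RKHS-norm budget $\Delta \lesssim B w^{\nu}$ (Mat\'{e}rn) versus the logarithmic budget for SE, the same two-point reward channel with an atom at scale $(v/\Delta)^{1/\alpha}$ calibrated so that $q(x)A=f(x)$ and the $(1+\alpha)$-moment is $v$, the same information-theoretic threshold $\Delta \lesssim v^{1/(1+\alpha)}(M/T)^{\alpha/(1+\alpha)}$, and the same final parameter optimization (your Mat\'{e}rn exponent algebra checks out against the stated bound). The paper additionally handles the sign of $f$ via $\mathrm{sgn}(f(x))$ in the atom, and for the SE kernel solves the fixed point $M=\Theta((\ln (B/\Delta))^d)$ jointly with the $\Delta$ constraint rather than positing $M\asymp(\ln T)^d$ outright; both are routine.

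The one step that would fail as written is your KL computation. For the two-point channel, whenever exactly one of $f_i(x), f_j(x)$ is zero (which happens on the entire support of bump $i$ under world $j$ and vice versa, since the bumps are disjoint), the atom at $A$ has positive probability under one world and zero under the other, so $\mathrm{KL}\bigl(P_{f_i(x)}\,\|\,P_{f_j(x)}\bigr)=+\infty$ in one direction; a symmetric pairwise bound of the form $|f_i(x)-f_j(x)|^{(1+\alpha)/\alpha}/v^{1/\alpha}$ cannot hold, and even in the finite direction the correct value is $\approx |f_j(x)|\,(\Delta/v)^{1/\alpha}$, which exceeds your claimed bound away from the peak. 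The paper avoids this entirely by introducing the all-zero reference world $P_0$ (a point mass at $y=0$) and always computing the finite direction $D_{\text{KL}}(P_0(\cdot|x)\,\|\,P_m(\cdot|x))=\ln\frac{1}{1-(2\Delta/v)^{1/\alpha}|f_m(x)|}\le 2(2\Delta/v)^{1/\alpha}|f_m(x)|$, fed into the change-of-measure inequality $\mathbb{E}_m[N_j]\le\mathbb{E}_0[N_j]+T\sqrt{D_{\text{KL}}(P_0\|P_m)}$ from \citet{scarlett2017lower} rather than a pairwise Fano bound. Since the max of this quantity over a bump is still $\Theta(\Delta^{(1+\alpha)/\alpha}/v^{1/\alpha})$, the final rate is unchanged, so your argument is repairable by replacing the pairwise-KL/Fano step with the reference-measure formulation.
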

\vspace*{-3mm}
The proof argument is inspired by that of \citet{scarlett2017lower}, which provides the lower bound of BO under i.i.d. Gaussian noise, but with nontrivial changes to account for heavy tailed observations. 
The proof is based on constructing a finite subset of ``difficult'' functions in $\cH_k(\cX)$. Specifically, we choose $f$ as a uniformly sampled function from a finite set $\lbrace f_1,\ldots,f_M \rbrace$, where each $f_j$ is obtained by shifting a common function $g \in \cH_k(\Real^d)$ by a different amount such that each of these has a unique maximum, and then cropping to $\cX=[0,1]^d$. $g$ takes values in $[-2\Delta,2\Delta]$ with the maximum attained at $x=0$. The function $g$ is constructed properly, and the parameters $\Delta$, $M$ are chosen appropriately based on the kernel $k$, fixed constants $B,T,\alpha,v$ such that any $\Delta$-optimal point for $f_j$ fails to be $\Delta$-optimal point for any other $f_{j'}$ and that $\norm{f_j}_{\cH} \le B$ for all $j \in [M]$. The reward function takes values in $\lbrace sgn\left(f(x)\right)\left(\frac{v}{2\Delta}\right)^{\frac{1}{\alpha}},0\rbrace$, with the former occurring with probability $\left(\frac{2\Delta}{v}\right)^{\frac{1}{\alpha}}\abs{f(x)}$, such that, for every $x \in \cX$, the expected reward is $f(x)$ and $(1+\alpha)$-th raw moment is upper bounded by $v$. Now, if we can lower bound the regret averaged over $j \in [M]$, then there must exist some $f_j$ for which the bound holds. The formal proof is deferred to Appendix C. 

\textit{\textbf{Remark 2.}} Theorem \ref{thm:lower-bound} suggests that (a) TGP-UCB may be
suboptimal, and (b) for the SE kernel, it may be possible to design algorithms recovering $\tilde{O}(\sqrt{T})$ regret bound under finite variances ($\alpha=1$).



\vspace*{-3mm}
\section{An optimal algorithm under heavy tailed rewards}
\vspace*{-2mm}
In view of the gap between the regret bound for TGP-UCB and the fundamental lower bound, it is possible that TGP-UCB (Algorithm \ref{algo:tgp-ucb}) does not completely mitigate the effect of heavy-tailed fluctuations, and perhaps that truncation in a different domain may work better. In fact, for parametric linear bandits (i.e., BO with finite dimensional linear kernels), it has been shown that appropriate truncation in feature space improves regret performance as opposed to truncating raw observations \cite{shao2018almost}, and in this case the feature dimension explicitly appears in the regret bound. %
However, the main challenge in the more general nonparametric setting is that the feature space is infinite dimensional, which would yield a trivial regret upper bound. 
If we can find an approximate feature map $\tilde{\phi}:\cX \ra \Real^m$ in a low-dimensional Euclidean inner product space $\Real^m$ such that $k(x,y)\approx \tilde{\phi}(x)^T\tilde{\phi}(y)$, then we can perform the above feature adaptive truncation effectively as well as keep the error introduced due to approximation in control. Such a kernel approximation can be done efficiently either in a data independent way (Fourier features approximation \citep{rahimi2008random}) or in a data dependent way (Nystr\"{o}m approximation \cite{drineas2005nystrom}) and has been used in the context of BO to reduce the time complexity of GP-UCB \cite{mutny2018efficient,calandriello2019gaussian}. But in this work, the approximations are crucial to obtain optimal theoretical guarantees. We now describe our algorithm Adaptively Truncated Approximate GP-UCB (Algorithm \ref{algo:itgp-ucb}).

\vspace*{-1mm}
\textbf{Adaptively Truncated Approximate GP-UCB (ATA-GP-UCB) algorithm:}
At each round $t$, we select an arm $x_{t}$ which maximizes the approximate (under kernel approximation) GP-UCB score $\tilde{\mu}_{t-1}(x) + \beta_{t}\tilde{\sigma}_{t-1}(x)$, where $\tilde{\mu}_{t-1}(x)$ and $\tilde{\sigma}^2_{t-1}(x)$ denote approximate posterior mean and variance from the previous round, respectively and $\beta_{t}$ is an appropriately
chosen confidence width. Then, we update $\tilde{\mu}_{t}(x)$ and $\tilde{\sigma}^2_{t}(x)$ as follows. First,
we find a feature embedding $\tilde{\phi}_t \in \Real^{m_t}$, of some appropriate dimension $m_t$, which approximates the kernel efficiently. Then, we find the rows $u_1^T,\ldots,u_{m_t}^T$ of the matrix $\tilde{V}_t^{-1/2}\tilde{\Phi}_t ^T$, where $\tilde{\Phi}_t=[\tilde{\phi}_t(x_1),\ldots,\tilde{\phi}_t(x_t)]^T$ and $\tilde{V}_t =\tilde{\Phi}_t ^T \tilde{\Phi}_t + \lambda I_{m_t}$, and use those as the weight vectors for truncating the rewards in each of $m_t$ directions by setting $\hat{r}_{i}=\sum_{\tau =1}^{t}u_{i,\tau}y_\tau \mathds{1}_{\abs{u_{i,\tau}y_\tau} \le b_t}$ for all $i \in [m_t]$, where $b_t$ specifies the truncation level. Then, we find our estimate of $f$ as $\tilde{\theta}_t=\tilde{V}_t^{-1/2}[\hat{r}_{1},\ldots,\hat{r}_{m_t}]^T$. Finally, we approximate the posterior mean as $\tilde{\mu}_t(x)=\tilde{\phi}_t(x)^T \tilde{\theta}_t$ and the posterior variance as
$(i)\:\tilde{\sigma}_t^2(x)=
         \lambda \tilde{\phi_t}(x)^T \tilde{V}_t^{-1}\tilde{\phi_t}(x)$ for the Fourier features approximation, or as
         $(ii)\; \tilde{\sigma}_t^2(x)=k(x,x)-\tilde{\phi_t}(x)^T\tilde{\phi_t}(x)+\lambda \tilde{\phi_t}(x)^T \tilde{V}_t^{-1}\tilde{\phi_t}(x)$ for the Nystr\"{o}m approximation. 
         Now it only remains to describe how to find the feature embeddings $\tilde{\phi}_t$. 
          
\textbf{(a) Quadrature Fourier features (QFF) approximation:} If $k$ is a bounded, continuous, positive definite, stationary kernel satisfying $k(x,x)=1$, then by Bochner's theorem \cite{bochner1959lectures}, $k$ is the Fourier transform of a probability measure $p$, i.e., $k(x,y)=\int_{\Real^d}p(\omega)\cos(\omega^T(x-y))d\omega$.
For the SE kernel, this measure has density $p(\omega) = \big(\frac{l}{\sqrt{2\pi}}\big)^d e^{-\frac{l^2\norm{\omega}_2^2}{2}}$ (abusing notation for measure and density).
\citet{mutny2018efficient} show that for any stationary kernel $k$ on $\Real^d$ whose inverse Fourier transform decomposes product wise, i.e., $p(\omega)=\prod_{j=1}^{d}p_j(\omega_j)$, we can 
use Gauss-Hermite quadrature \cite{hildebrand1987introduction} to approximate it. If $\cX=[0,1]^d$, the SE kernel is approximated as follows. Choose $\bar{m} \in \mathbb{N}$ and $m=\bar{m}^d$, and construct the $2m$-dimensional feature map
\beq
 \tilde{\phi}(x)_i=\begin{cases}
    \sqrt{\nu(\omega_{i})} \cos\left(\frac{\sqrt{2}}{l}\omega_i^Tx\right) & \text{if}\; 1 \le i \le m,\\
    \sqrt{\nu(\omega_{i-m})} \sin\left(\frac{\sqrt{2}}{l}\omega_{i-m}^Tx\right) & \text{if}\; m+1 \le i \le 2m.
  \end{cases}
  \label{eqn:qff-embedding}
\eeq
Here the set $\lbrace \omega_1,\ldots,\omega_m \rbrace= \overbrace{A_{\bar{m}} \times\cdots \times A_{\bar{m}}}^{\text{$d$ times}}$, where $A_{\bar{m}}$ is the set of $\bar{m}$ (real) roots of the $\bar{m}$-th Hermite polynomial $H_{\bar{m}}$, and $\nu(z)=\prod_{j=1}^{d}\frac{2^{\bar{m}-1}\bar{m}!}{\bar{m}^2H_{\bar{m}-1}(z_j)^2}$ for all $z \in \Real^d$. For our purposes, we will have ATA-GP-UCB work with the embedding $\tilde{\phi}_t(x)=\tilde{\phi}(x)$ of dimension $m_t=2m$ for all $t \ge 1$.

\textit{\textbf{Remark 3.}} The seminal work of \citet{rahimi2008random} that develops random Fourier feature (RFF) approximation of 
  any stationary kernel is based on the feature map
$
\tilde{\phi}(x)=\frac{1}{\sqrt{m}}[\cos(\omega_1^Tx),\ldots,\cos(\omega_m^Tx),\sin(\omega_1^Tx),\ldots,\sin(\omega_m^Tx)]^T$, where each $\omega_i$ is sampled independently from $p(\omega)$.
However, RFF embeddings do not appear to be useful for our purpose of achieving sublinear regret (see discussion after Lemma \ref{lem:func-conc-ata-gp-ucb}), so we work with the QFF embedding.

\textbf{(b) Nystr\"{o}m approximation:} Unlike the QFF approximation where the basis functions (cosine and sine) do not depend on the data, the basis functions used by
the Nystr\"{o}m method are data dependent.
For a set of points $\cX_t =\lbrace x_1,\ldots,x_t \rbrace$, the Nystr\"{o}m method \cite{yang2012nystrom} approximates the kernel matrix $K_t$ as follows: First  sample a random number $m_t$ of points from $\cX_t$ to construct a dictionary $\cD_t =\lbrace x_{i_1},\ldots,x_{i_{m_t}}\rbrace ; i_j\in [t]$, according to the following distribution. For each $i \in [t]$, include $x_i$ in $\cD_t$ independently with probability $p_{t,i} = \min \lbrace q \tilde{\sigma}^2_{t-1}(x_i), 1 \rbrace$ for a suitably chosen parameter $q$ (which trades off between the quality and the size of the embedding). Then, compute the (approximate) finite-dimensional feature embedding  $\tilde{\phi}_t(x)=\left(K_{\cD_t}^{1/2}\right)^{\dagger}k_{\cD_t}(x)$, where $K_{\cD_t}=[k(u,v)]_{u,v \in \cD_t}$, $k_{\cD_t}(x)=[k(x_{i_1},x),\ldots,k(x_{i_{m_t}},x)]^T$  and $A^{\dagger}$ denotes the pseudo inverse of any matrix $A$. We call the entire procedure Nystr\"{o}mEmbedding (pseudocode in appendix).
\vspace*{-3mm}         
\begin{algorithm}[H]
     \renewcommand\thealgorithm{2}
     \caption{Adaptively Truncated Approximate GP-UCB (ATA-GP-UCB)}\label{algo:itgp-ucb}
     \begin{algorithmic}
     \STATE \textbf{Input:} Parameters $\lambda > 0$, $\lbrace b_t \rbrace_{t \ge 1}$, $\lbrace \beta_t \rbrace_{t \ge 1}, q$, a kernel approximation (QFF or Nystr\"{o}m)
     \STATE \textbf{Set:} $\tilde{\mu}_0(x)=0$ and $\tilde{\sigma}_0^2(x)=k(x,x)$ for all $x \in \cX$
     \FOR{$t = 1, 2, 3 \ldots$}
     \STATE Play $x_{t} = \argmax_{x\in \cX} \tilde{\mu}_{t-1}(x) + \beta_t\tilde{\sigma}_{t-1}(x)$ and observe payoff $y_{t}$
     \STATE Set $\tilde{\phi}_{t}(x)=\begin{cases}
               \tilde{\phi}(x)& \text{\textit{if QFF approximation}}\\
              \text{Nystr\"{o}mEmbedding} \big(\{ (x_i , \tilde{\sigma}_{t-1}(x_i))\}_{i=1}^{t},q \big) & \text{\textit{if Nystr\"{o}m approximation}}
            \end{cases}$
     \STATE Set $\tilde{\Phi}_t ^T = [\tilde{\phi}_t(x_1),\ldots,\tilde{\phi}_t(x_t)]$ and $\tilde{V}_t = \tilde{\Phi}_t ^T \tilde{\Phi}_t + \lambda I_{m_t}$, where $m_t$ is the dimension of $\tilde{\phi}_t$
     \STATE Find the rows $u_1^T,\ldots,u_{m_t}^T$ of $\tilde{V}_t^{-1/2}\tilde{\Phi}_t^T$ and set $\hat{r}_{i}=\sum_{\tau =1}^{t}u_{i,\tau}y_\tau \mathds{1}_{\abs{u_{i,\tau}y_\tau} \le b_t}$ for all $i \in [m_t]$
     \STATE Set $\tilde{\theta}_t =  \tilde{V}_t^{-1/2}[\hat{r}_{1},\ldots,\hat{r}_{m_t}]^T$ and compute $\tilde{\mu}_t(x)=\tilde{\phi}_t(x)^T \tilde{\theta}_t$
     \STATE Set 
     $ \tilde{\sigma}_t^2(x)=\begin{cases}
         (i)\;\lambda \tilde{\phi_t}(x)^T \tilde{V}_t^{-1}\tilde{\phi_t}(x) & \text{\textit{if QFF approximation}}\\
         (ii)\; k(x,x)-\tilde{\phi_t}(x)^T\tilde{\phi_t}(x)+\lambda \tilde{\phi_t}(x)^T \tilde{V}_t^{-1}\tilde{\phi_t}(x) & \text{\textit{if Nystr\"{o}m approximation}}
       \end{cases}$
      \ENDFOR
     \end{algorithmic}
     \addtocounter{algorithm}{-2}
     \vspace*{-1mm}
     \end{algorithm}
\vspace*{-5mm}
\textit{\textbf{Remark 4.}} It is well known ($\lambda$-ridge leverage score sampling \cite{alaoui2015fast}) that, by sampling points proportional to their posterior variances $\sigma_{t}^2(x)$, one can obtain an accurate embedding $\tilde{\phi}_t(x)$, which in turn gives an accurate approximation $\tilde{\sigma}_t^2(x)$. But, computation of $\sigma_t^2(x)$ in turn requires inverting $K_t$, which takes at most $O(t^3)$ time. So, we make use of the already computed approximations $\tilde{\sigma}_{t-1}^2(x)$ to sample points at round $t$, without significantly compromising on the accuracy of the embeddings \cite{calandriello2019gaussian}. 

\textit{\textbf{Remark 5.}} The choice $(i)$ of $\tilde{\sigma}_t^2(x)$ in Algorithm \ref{algo:itgp-ucb} ensures accurate estimation of the variance of $x$ under the QFF approximation \cite{mutny2018efficient}. But, the same choice leads to severe underestimation of the variance under the Nystr\"{o}m approximation, specially when $x$ is far away from $\cD_t$. The choice $(ii)$ of $\tilde{\sigma}_t^2(x)$ in Algorithm \ref{algo:itgp-ucb} is known as \textit{deterministic training conditional} in the GP literature \cite{quinonero2007approximation} and provably prevents the phenomenon of variance starvation under Nystr\"{o}m approximation \cite{calandriello2019gaussian}.

\textbf{Cumulative regret of ATA-GP-UCB with QFF embeddings:} The following lemma shows that the data adaptive truncation of all the historical rewards and a good approximation of the kernel help us obtain a tighter confidence interval than TGP-UCB.
\vspace*{-1mm}
\begin{mylemma}[Tighter confidence sets with QFF truncation]
For any $\delta \in (0,1]$, ATA-GP-UCB with QFF approximation and parameters $b_t = \left(v/\ln(2mT/\delta)\right)^{\frac{1}{1+\alpha}}t^{\frac{1-\alpha}{2(1+\alpha)}}$ and $\beta_{t+1}=B+4 \sqrt{m/\lambda}\; v^{\frac{1}{1+\alpha}}\left(\ln(2mT/\delta)\right)^{\frac{\alpha}{1+\alpha}}t^{\frac{1-\alpha}{2(1+\alpha)}}$, ensures that with probability at least $1-\delta$, uniformly over all $t \in [T]$ and $x \in \cX$,
\vspace*{-2mm}
\beq
\abs{f(x)-\tilde{\mu}_{t-1}(x)} \le \beta_t\tilde{\sigma}_{t-1}(x) + O(B\epsilon_m^{1/2} t^2),
\label{eqn:func-conc-ata-gp-ucb}
\eeq
where the QFF dimension $m$ is such that 
$\sup_{x,y \in \cX} \abs{k(x,y)-\tilde{\phi}(x)^T\tilde{\phi}(y)} =: \epsilon_m < 1$.
\label{lem:func-conc-ata-gp-ucb}
\end{mylemma}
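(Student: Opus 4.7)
The plan is to split the deviation $f(x)-\tilde\mu_{t-1}(x)$ into a \emph{statistical} piece coming from the truncated rewards and an \emph{approximation} piece coming from $\tilde\phi$ only approximately representing the kernel. Introduce the oracle ridge estimate in the approximate feature space,
\[
\theta^\diamond_{t-1} := \tilde V_{t-1}^{-1}\tilde\Phi_{t-1}^{T}F_{t-1}, \qquad F_{t-1}:=(f(x_1),\ldots,f(x_{t-1}))^{T},
\]
and decompose $f(x)-\tilde\mu_{t-1}(x) = [f(x)-\tilde\phi(x)^{T}\theta^\diamond_{t-1}] + [\tilde\phi(x)^{T}\theta^\diamond_{t-1}-\tilde\mu_{t-1}(x)]$. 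The second bracket will be shown to be $\le \beta_t\tilde\sigma_{t-1}(x)$ and the first to be $O(B\epsilon_m^{1/2}t^2)$.

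For the statistical piece, observe that $\tilde\mu_{t-1}(x)-\tilde\phi(x)^{T}\theta^\diamond_{t-1}=\tilde\phi(x)^{T}\tilde V_{t-1}^{-1/2}(\hat r-\tilde r)$, where $\hat r$ is the direction-wise truncated vector used by the algorithm and $\tilde r:=\tilde V_{t-1}^{-1/2}\tilde\Phi_{t-1}^{T}F_{t-1}$ is its noise-free, untruncated analogue. Since $\tilde\sigma_{t-1}(x)=\sqrt\lambda\,\|\tilde V_{t-1}^{-1/2}\tilde\phi(x)\|_2$ by choice (i) under QFF, Cauchy--Schwarz gives
\[
|\tilde\mu_{t-1}(x)-\tilde\phi(x)^{T}\theta^\diamond_{t-1}| \le \frac{\tilde\sigma_{t-1}(x)}{\sqrt\lambda}\,\|\hat r-\tilde r\|_2 \le \frac{\tilde\sigma_{t-1}(x)}{\sqrt\lambda}\sqrt{2m}\,\max_{i\in[2m]} |\hat r_i-\tilde r_i|.
\]
The rows $u_i$ of $U:=\tilde V_{t-1}^{-1/2}\tilde\Phi_{t-1}^{T}$ satisfy $\|u_i\|_2^2\le 1$ because $UU^{T}=I-\lambda\tilde V_{t-1}^{-1}\preceq I$, and by H\"older $\sum_\tau|u_{i,\tau}|^{1+\alpha}\le t^{(1-\alpha)/2}$. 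Under the $(1+\alpha)$-moment assumption, the per-term truncation bias satisfies $|\mathbb E[u_{i,\tau}y_\tau\mathds{1}\{|u_{i,\tau}y_\tau|>b_t\}]|\le|u_{i,\tau}|^{1+\alpha}v/b_t^\alpha$ and the per-term truncated second moment is $\le b_t^{1-\alpha}v|u_{i,\tau}|^{1+\alpha}$, giving total bias $O(vb_t^{-\alpha}t^{(1-\alpha)/2})$ and cumulative variance $O(b_t^{1-\alpha}v\,t^{(1-\alpha)/2})$. A Freedman/Bernstein martingale inequality (increments bounded by $b_t$) combined with a union bound over $i\in[2m]$ and $t\in[T]$ at confidence $\delta$ would yield $\max_i|\hat r_i-\tilde r_i|=O(v^{1/(1+\alpha)}(\ln(2mT/\delta))^{\alpha/(1+\alpha)}t^{(1-\alpha)/(2(1+\alpha))})$ for the prescribed $b_t$, which after multiplication by $\sqrt{2m/\lambda}$ matches exactly the non-$B$ part of $\beta_t$.

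For the approximation piece, pick a surrogate $\theta^\star\in\mathbb R^{2m}$ with $\|\theta^\star\|_2\le B$ such that $\|\tilde\phi(\cdot)^{T}\theta^\star-f\|_\infty \le cB\epsilon_m^{1/2}$; existence follows from the RKHS reproducing property together with the uniform approximation $|k(x,y)-\tilde\phi(x)^{T}\tilde\phi(y)|\le\epsilon_m$ (the $\sqrt{\cdot}$ arises because the error is first controlled in RKHS norm and then converted pointwise via Cauchy--Schwarz). Write $F_{t-1}=\tilde\Phi_{t-1}\theta^\star+r$ with $\|r\|_\infty\le cB\epsilon_m^{1/2}$; algebraically $\theta^\star-\theta^\diamond_{t-1}=\lambda \tilde V_{t-1}^{-1}\theta^\star-\tilde V_{t-1}^{-1}\tilde\Phi_{t-1}^{T}r$. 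The first summand, when paired with $\tilde\phi(x)$, produces the contribution $B\,\tilde\sigma_{t-1}(x)$ that is absorbed into the $B$ in the statement of $\beta_t$. The second summand, together with the residual $f(x)-\tilde\phi(x)^T\theta^\star=O(B\epsilon_m^{1/2})$, is bounded crudely via $\|\tilde\phi(x)\|_2=O(1)$, $\|\tilde V_{t-1}^{-1}\tilde\Phi_{t-1}^{T}\|_{\mathrm{op}}\le 1/\sqrt\lambda$ and $\|r\|_2\le cB\epsilon_m^{1/2}\sqrt{t-1}$, producing a polynomial-in-$t$ multiple of $B\epsilon_m^{1/2}$ that fits inside the $O(B\epsilon_m^{1/2}t^2)$ budget of the lemma (the exact polynomial exponent is loose but harmless because $\epsilon_m$ decays exponentially in $m$ for QFF).

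The main obstacle I anticipate is the statistical concentration step. The weights $u_i$ are rows of $\tilde V_{t-1}^{-1/2}\tilde\Phi_{t-1}^{T}$, so they depend on the \emph{entire} past, including the noisy observations $y_\tau$ for $\tau\le t-2$, and hence $\sum_\tau u_{i,\tau}y_\tau\mathds{1}\{|u_{i,\tau}y_\tau|\le b_t\}$ is not a martingale in the natural filtration $\mathcal F_\tau$. Handling this requires either a covering argument over the unit ball of admissible weight vectors coupled with a union bound, or a decoupling/peeling argument that conditions on a sigma-algebra under which the $u_i$ act as fixed weights while the moment bounds on $y_\tau$ are retained. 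Producing this concentration with only a $\sqrt m$ factor (rather than the $m$ factor one would get from a naive RFF-style bound) is precisely what makes QFF the right approximation here, in line with Remark 3.
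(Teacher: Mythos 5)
Your overall architecture is close to the paper's, and your statistical half is essentially the argument the paper actually gives: the paper also writes the deviation from the noise-free feature-space estimate as $\lambda^{-1/2}\norm{\tilde{V}_t^{-1}\tilde{\Phi}_t^Tf_t-\tilde{\theta}_t}_{\tilde{V}_t}\tilde{\sigma}_t(x)$, bounds the coordinate-wise truncation bias by $v b_t^{-\alpha}\norm{u_i}_{1+\alpha}^{1+\alpha}\le vb_t^{-\alpha}t^{(1-\alpha)/2}$, applies Bernstein with increments $2b_t$ and variance $vb_t^{1-\alpha}\norm{u_i}_{1+\alpha}^{1+\alpha}$, and union-bounds over $i\in[m_t]$ and $t\in[T]$ (its Lemmas 6--7). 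However, the obstacle you flag at the end and leave open is resolved in the paper, not sidestepped: since the QFF features are data-independent, the weights $u_i$ are measurable with respect to $\cF'_t=\sigma(x_1,\ldots,x_t)$, and the paper runs the martingale in $\tau$ against the enlarged filtration $\cF'_{t,\tau}=\sigma(\lbrace x_1,\ldots,x_t\rbrace\cup\lbrace y_1,\ldots,y_\tau\rbrace)$, invoking the i.i.d.-ness of the noise to retain $\expect{y_\tau\given\cF'_{t,\tau-1}}=f(x_\tau)$ and the conditional moment bound even after conditioning on future arms. Your proof is incomplete until you commit to and execute one of the two fixes you mention; as written, the concentration step does not go through.

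The second and more substantive gap is in your approximation piece. You posit a surrogate $\theta^\star\in\mathbb{R}^{2m}$ with $\norm{\theta^\star}_2\le B$ and $\sup_x\abs{f(x)-\tilde{\phi}(x)^T\theta^\star}\le cB\epsilon_m^{1/2}$, but the justification offered (``reproducing property plus Cauchy--Schwarz'') does not produce this. The natural construction $\theta^\star=\sum_i a_i\tilde{\phi}(z_i)$ for $f=\sum_i a_ik(\cdot,z_i)$ gives a uniform error of order $\epsilon_m\norm{a}_1$ and a norm bound $\norm{\theta^\star}_2^2\le B^2+\epsilon_m\norm{a}_1^2$, and $\norm{a}_1$ is not controlled by the RKHS norm $B$; for general $f\in\cH_k(\cX)$ no such uniform, $B$-only bound is evident. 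The paper deliberately avoids needing any such surrogate: it inserts the exact-kernel noise-free mean $\alpha_t(x)=k_t(x)^T(K_t+\lambda I_t)^{-1}f_t$ as an intermediate, uses the TGP-UCB bound $\abs{f(x)-\alpha_t(x)}\le B\sigma_t(x)$, and then controls $\abs{\alpha_t(x)-\tilde{\alpha}_t(x)}=O(B\epsilon_m t^2/\lambda)$ and $\abs{\sigma_t(x)-\tilde{\sigma}_t(x)}=O(\epsilon_m^{1/2}t/\lambda)$ purely by perturbing the Gram matrices ($\norm{K_t-\tilde{K}_t}_2\le\epsilon_m t$, $\norm{f_t}_2\le B\sqrt{t}$); this is where the $t^2$ and the $\epsilon_m^{1/2}$ in the lemma actually come from. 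Your route could likely be repaired by replacing the surrogate step with this Gram-matrix perturbation argument, but as stated the existence claim is a hole, not a technicality.
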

\vspace*{-2mm}
Here, the scaling $ t^{\frac{1-\alpha}{2(1+\alpha)}}$ of the confidence width $\beta_t$ is much less than the scaling $t^{\frac{1}{2(1+\alpha)}}$ of TGP-UCB, which eventually leads to a tighter confidence interval. However, in order to achive sublinear cumulative regret, we need to ensure that the approximation error $\epsilon_m$ decays at least as fast as $O(1/T^6)$ and feature dimension $m$ grows no faster than $\text{polylog}(T)$. This will ensure that the regret accumulated due to the second term in the RHS of \ref{eqn:func-conc-ata-gp-ucb} is $O(1)$, as well as the contribution from the first term is $\tilde{O}(T^{\frac{1}{1+\alpha}})$, since sum of the approximate posterior standard deviations grows only as $\tilde{O}(\sqrt{mT})$. 
Now, the QFF embedding (\ref{eqn:qff-embedding}) of $k_{\text{SE}}$ can be shown to achieve $\epsilon_m \le d2^{d-1}\frac{1}{\sqrt{2}\bar{m}^{\bar{m}}}\left(\frac{e}{4l^2}\right)^{\bar{m}}=O\left(\frac{d2^{d-1}}{(\bar{m}l^2)^{\bar{m}}}\right)$, where $m=\bar{m}^d$ \cite{mutny2018efficient}. The decay is exponential when $\bar{m} > 1/l^2$ and $d=O(1)$\footnote{For most BO applications, the effective dimensionality of the problem is low, e.g., additive models \cite{kandasamy2015high,rolland2018high}.}. Now, for $\bar{m} \ge 2\log_{4/e}(T^3)$, we have $\epsilon_m^{1/2}T^3=O(1)$ and $m = O((\ln T)^d)$, which gives a sublinear regret bound \footnote{ Under RFF approximation $\epsilon_m=\tilde{O}(\sqrt{1/m})$  \cite{sriperumbudur2015optimal}. Hence, ATA-GP-UCB does not achieve sublinear regret.}. The following theorem states this formally, with a full proof deferred to Appendix D.2.

\vspace*{-1mm}
\begin{mytheorem}[Regret bound for ATA-GP-UCB with QFF embedding]
Fix any $\delta \in (0,1]$. Then, under the same hypothesis of Theorem \ref{thm:regret-bound-tgp-ucb}, for $\cX=[0,1]^d$ and $k=k_{\text{SE}}$, ATA-GP-UCB under QFF approximation, with parameters $b_t$ and $\beta_t$ set as in Lemma \ref{lem:func-conc-ata-gp-ucb}, and with the embedding $\tilde{\phi}$ from \ref{eqn:qff-embedding} such that $\bar{m} > 1/l^2$ and $\bar{m} \ge 2\log_{4/e}(T^3)$,  enjoys, with probability at least $1-\delta$, the regret bound
\vspace*{-1mm}
\beqn
R_T = O\left(B\sqrt{T(\ln T)^{d+1}}+v^{\frac{1}{1+\alpha}}\left(\ln\left(\frac{T(\ln T)^d}{\delta}\right)\right)^{\frac{\alpha}{1+\alpha}}\sqrt{\ln T}\left(\ln T\right)^{d}T^{\frac{1}{1+\alpha}}\right).
\eeqn
\label{thm:regret-bound-qff}
\end{mytheorem}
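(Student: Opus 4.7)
The plan is a standard UCB regret decomposition enabled by the confidence interval of Lemma~\ref{lem:func-conc-ata-gp-ucb}, followed by Cauchy--Schwarz and an elliptical-potential bound on the sum of approximate posterior variances, and finally a calibration of the QFF feature dimension that makes the kernel-approximation bias negligible while keeping the effective dimension polylogarithmic in $T$.

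First, condition on the high-probability event of Lemma~\ref{lem:func-conc-ata-gp-ucb}. The arm-selection rule $x_t = \argmax_x \bigl[\tilde{\mu}_{t-1}(x) + \beta_t \tilde{\sigma}_{t-1}(x)\bigr]$ together with the two-sided bound $|f(x)-\tilde{\mu}_{t-1}(x)| \le \beta_t \tilde{\sigma}_{t-1}(x) + C B \epsilon_m^{1/2} t^2$ yields, by the usual UCB argument, the per-round regret $f(x^\star)-f(x_t) \le 2 \beta_t \tilde{\sigma}_{t-1}(x_t) + 2 C B \epsilon_m^{1/2} t^2$. Summing over $t \in [T]$ and using monotonicity of $\beta_t$ gives $R_T \le 2 \beta_T \sum_{t=1}^T \tilde{\sigma}_{t-1}(x_t) + O(B \epsilon_m^{1/2} T^3)$. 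For the SE kernel, the QFF bound $\epsilon_m = O\bigl((e/(4\bar{m} l^2))^{\bar{m}}\bigr)$ from \cite{mutny2018efficient}, under $\bar{m} > 1/l^2$ and $\bar{m} \ge 2 \log_{4/e}(T^3)$, ensures $\epsilon_m \le O(T^{-6})$, so the bias term collapses to $O(B)$ and can be absorbed into the final bound.

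Next, apply Cauchy--Schwarz, $\sum_t \tilde{\sigma}_{t-1}(x_t) \le \sqrt{T \sum_t \tilde{\sigma}_{t-1}^2(x_t)}$. Since $\tilde{\sigma}_{t-1}^2(x_t) = \lambda \tilde{\phi}(x_t)^T \tilde{V}_{t-1}^{-1}\tilde{\phi}(x_t)$ and $\|\tilde{\phi}(x)\|_2^2 \le k(x,x)+\epsilon_m \le 2$, the standard elliptical-potential (matrix-determinant) lemma for a finite-dimensional linear bandit in $\mathbb{R}^{2m}$ gives $\sum_{t=1}^T \tilde{\sigma}_{t-1}^2(x_t) = O\bigl(\lambda \ln\det(I_{2m}+\lambda^{-1}\tilde{\Phi}_T^T \tilde{\Phi}_T)\bigr) \le O\bigl(2m \ln(1+T/(2m\lambda))\bigr)$. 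With QFF feature dimension $2m = 2\bar{m}^d = O((\ln T)^d)$, this yields $\sum_t \tilde{\sigma}_{t-1}^2(x_t) = O((\ln T)^{d+1})$ and hence $\sum_t \tilde{\sigma}_{t-1}(x_t) = O(\sqrt{T (\ln T)^{d+1}})$.

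Finally, substitute the prescribed $\beta_T = B + O\bigl(\sqrt{m/\lambda}\,v^{1/(1+\alpha)}(\ln(mT/\delta))^{\alpha/(1+\alpha)} T^{(1-\alpha)/(2(1+\alpha))}\bigr)$. The $B$ part contributes $O(B\sqrt{T(\ln T)^{d+1}})$. For the heavy-tailed part, combining $T^{(1-\alpha)/(2(1+\alpha))} \cdot T^{1/2} = T^{1/(1+\alpha)}$ with $\sqrt{m(\ln T)^{d+1}} = (\ln T)^d \sqrt{\ln T}$ (via $m = O((\ln T)^d)$) produces $O\bigl(v^{1/(1+\alpha)} (\ln(T(\ln T)^d/\delta))^{\alpha/(1+\alpha)} (\ln T)^d \sqrt{\ln T}\cdot T^{1/(1+\alpha)}\bigr)$, precisely the stated rate. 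The main obstacle is the sum-of-variances step: it is crucial that $\sum_t \tilde{\sigma}_{t-1}^2(x_t)$ scale with the QFF feature dimension $2\bar{m}^d$ rather than with the full RKHS information gain $\gamma_T$ that TGP-UCB had to pay. This is exactly what the QFF approximation buys---since $\tilde{V}_t$ is an honest $2m \times 2m$ empirical Gram matrix of played feature vectors, the elliptical-potential lemma applies directly, and the resulting polylogarithmic factor is what drives the exponent down from $(2+\alpha)/(2(1+\alpha))$ to the optimal $1/(1+\alpha)$. The simultaneous calibration $\bar{m} = \Theta(\log T)$ is delicate: it must be large enough to make $\epsilon_m^{1/2} T^3 = O(1)$, yet small enough that $m = O((\ln T)^d)$ remains polylogarithmic; the exponential-in-$\bar{m}$ decay of $\epsilon_m$ for SE is precisely what permits this.
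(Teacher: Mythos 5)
Your proposal is correct and follows essentially the same route as the paper: condition on Lemma~\ref{lem:func-conc-ata-gp-ucb}, apply the standard UCB decomposition with the extra $O(B\epsilon_m^{1/2}t^2)$ bias term, bound $\sum_t \tilde{\sigma}_{t-1}(x_t)$ by Cauchy--Schwarz plus the sum-of-variances/log-determinant bound for the effective linear kernel on $\Real^{2m}$ (the paper phrases this via Lemma~\ref{lem:pred-var} and the $O(d\ln t)$ information gain of a linear kernel, which is the same elliptical-potential fact you invoke), and then calibrate $\bar{m}=\Theta(\log T)$ so that $\epsilon_m^{1/2}T^3=O(1)$ while $m=O((\ln T)^d)$. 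The final substitution of $\beta_T$ and the bookkeeping of logarithmic factors match the paper's computation.
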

\vspace*{-5mm}
\textit{\textbf{Remark 6.}} When the variance of the rewards is finite (i.e., $\alpha = 1$), the cumulative regret for ATA-GP-UCB under QFF approximation of the SE kernel is
$O((\ln T)^{d+1}\sqrt{T})$, which now recovers the state-of-the-art regret bound of GP-UCB under sub-Gaussian rewards \cite[Corollary 2]{mutny2018efficient} unlike the earlier TGP-UCB. It is worth pointing out that the bound in Theorem \ref{thm:regret-bound-qff} is only for the SE kernel defined on $\cX = [0,1]^d$, and designing a no-regret BO strategy under the QFF approximation of any other stationary kernel still remains a open question even when the rewards are sub-Gaussian \cite{mutny2018efficient}.


\textbf{Cumulative regret of ATA-GP-UCB with Nystr\"{o}m embeddings:} Now, we will show that ATA-GP-UCB under Nystr\"{o}m approximation achives optimal regret for any stationary kernel defined on $\cX \subset \Real^d$ without any restriction on $d$. Similar to Lemma \ref{lem:func-conc-ata-gp-ucb}, ATA-GP-UCB under Nystr\"{o}m approximation also maintains tighter confidence sets than TGP-UCB.
As before, the confidence sets are useful only if the dimension of the embeddings $m_t$ grows no faster than $\text{polylog}(t)$. Not only that, we also need to ensure that the approximate posterior variances are only a constant factor away from the exact ones. Then, since sum of the posterior standard deviations grows only as $O(\sqrt{T\gamma_T})$, we can achieve the optimal $\tilde{O}(T^\frac{1}{1+\alpha})$ regret scaling.
Now for any $\epsilon \in (0,1)$, setting $q=6\frac{1+\epsilon}{1-\epsilon}\ln(2T/\delta)/\epsilon^2$, the Nystr\"{o}m embeddings $\tilde{\phi}_t$ can be shown to achieve $m_t \le 6 \frac{1+\epsilon}{1-\epsilon}\left(1+\frac{1}{\lambda}\right) \; q \gamma_t$ and $\frac{1-\epsilon}{1+\epsilon} \sigma_t^2(x) \le \tilde{\sigma}_t^2(x) \le		 \frac{1+\epsilon}{1-\epsilon}\sigma_t^2(x)$ with probability at least $1-\delta$ \cite{calandriello2019gaussian}, which helps us to achieve an optimal regret bound. The following theorem states this formally, with a full proof deferred to Appendix D.3.
\vspace*{-1mm}
\begin{mytheorem}[Regret bound for ATA-GP-UCB with Nystr\"{o}m embedding]
Fix any $\delta \in (0,1]$, $\epsilon \in (0,1)$ and set $\rho=\frac{1+\epsilon}{1-\epsilon}$. Then, under the same hypothesis of Theorem \ref{thm:regret-bound-tgp-ucb}, ATA-GP-UCB under Nystr\"{o}m approximation, and with parameters $q=6\rho\ln(4T/\delta)/\epsilon^2$, $b_t = \left(v/\ln(4m_tT/\delta)\right)^{\frac{1}{1+\alpha}}t^{\frac{1-\alpha}{2(1+\alpha)}}$ and $\beta_{t+1}=B(1+\frac{1}{\sqrt{1-\epsilon}})+4 \sqrt{m_t/\lambda}\; v^{\frac{1}{1+\alpha}}\left(\ln(4m_tT/\delta)\right)^{\frac{\alpha}{1+\alpha}}t^{\frac{1-\alpha}{2(1+\alpha)}}$, enjoys, with probability at least $1-\delta$, the regret bound
\vspace*{-1mm}
\beqn
R_T = O\left(\rho B\left(1+\frac{1}{\sqrt{1-\epsilon}}\right)\sqrt{T \gamma_T} + \frac{\rho^2}{\epsilon} \; v^{\frac{1}{1+\alpha}}\left(\ln\left(\frac{\gamma_T \ln(T/\delta)T}{\delta}\right)\right)^{\frac{\alpha}{1+\alpha}}\sqrt{\ln(T/\delta)}\gamma_TT^{\frac{1}{1+\alpha}}\right).
\eeqn
\label{thm:regret-bound-nystrom}
\end{mytheorem}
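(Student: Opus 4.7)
The plan is to mirror the structure of the QFF proof (Theorem \ref{thm:regret-bound-qff}) while exploiting the specific guarantees of the Nystr\"om embedding. First I would condition on an approximation-good event $\mathcal{E}_{\text{nys}}$ of probability at least $1-\delta/2$ on which the cited sampling guarantee gives, uniformly over $t\in[T]$, both $m_t \le 6\rho(1+1/\lambda)\,q\,\gamma_t$ and the variance sandwich $\rho^{-1}\sigma_t^2(x) \le \tilde{\sigma}_t^2(x) \le \rho\,\sigma_t^2(x)$ for all $x \in \cX$. The first inequality controls how large the feature dimension can become; the second lets us trade approximate posterior variances for true ones whenever an information-gain style summation is needed.

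The heart of the argument is a Nystr\"om analogue of Lemma \ref{lem:func-conc-ata-gp-ucb}: on a further event of probability at least $1-\delta/2$, $\abs{f(x) - \tilde{\mu}_{t-1}(x)} \le \beta_t \tilde{\sigma}_{t-1}(x)$ uniformly over $t$ and $x$. I would split the pointwise error into a \emph{projection bias} (best Nystr\"om representation of $f$ versus $f$ itself) and an \emph{estimation error} for $\tilde\theta_t$ against the projected target. Because the variance choice (ii) explicitly contains the residual $k(x,x)-\tilde\phi_t(x)^\top\tilde\phi_t(x)$, the projection bias can be absorbed into $B/\sqrt{1-\epsilon}$ times $\tilde\sigma_{t-1}(x)$, which together with the usual RKHS-ridge term $B$ reproduces the coefficient $B(1 + 1/\sqrt{1-\epsilon})$ in $\beta_{t+1}$. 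For the estimation error, each coordinate $\hat r_i = \sum_\tau u_{i,\tau} y_\tau \mathds{1}_{\abs{u_{i,\tau} y_\tau}\le b_t}$ is a truncated one-dimensional mean of heavy-tailed variables: a Bernstein-type inequality for such estimators (as in \citet{bubeck2013bandits,shao2018almost}), combined with the $(1{+}\alpha)$-moment bound on $y_\tau$ and a union bound over the $m_t$ coordinates, yields an $\ell_\infty$ deviation of order $v^{1/(1+\alpha)}(\ln(m_tT/\delta))^{\alpha/(1+\alpha)} t^{(1-\alpha)/(2(1+\alpha))}$. Propagating this bound through the whitening $\tilde V_t^{-1/2}$ and then Cauchy--Schwarzing against $\tilde\phi_t(x)$ produces exactly the $4\sqrt{m_t/\lambda}\,v^{1/(1+\alpha)}(\cdots)$ piece of $\beta_t$ multiplied by $\tilde\sigma_{t-1}(x)$.

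With the confidence bound established, the remaining steps are classical. The UCB rule gives $f(x^\star) - f(x_t) \le 2\beta_t \tilde\sigma_{t-1}(x_t)$; summing and Cauchy--Schwarzing yields $R_T \le 2\beta_T \sqrt{T \sum_t \tilde\sigma_{t-1}^2(x_t)}$, and the variance sandwich plus the standard $\log(1+z)\ge z/(1+z)$ information-gain inequality bounds $\sum_t \tilde\sigma_{t-1}^2(x_t) = O(\rho\gamma_T)$. Substituting $m_T = O(\gamma_T \ln(T/\delta))$ into $\beta_T$ and separating its two additive pieces produces the two corresponding pieces of the stated regret bound: the $B$-term contributes $\rho(1+1/\sqrt{1-\epsilon})B\sqrt{T\gamma_T}$, and the heavy-tailed term contributes $(\rho^2/\epsilon)v^{1/(1+\alpha)}\gamma_T T^{1/(1+\alpha)}$ up to the stated logarithmic factors.

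The hard part, in my view, is the second paragraph: the feature map $\tilde\phi_t$, the whitening rows $u_i$, and even the dimension $m_t$ are all \emph{data-dependent and random}, so the Bernstein-type concentration for each $\hat r_i$ must be applied conditionally on the $\sigma$-algebra that makes $u_{i,\tau}$ measurable, and then union-bounded jointly over $t\in[T]$ and $i\in[m_t]$ via a stratification over the random embedding dimension. Once this conditioning is in place, the specific choices of $b_t$ and $\beta_t$ in the theorem statement exactly balance truncation bias against Bernstein deviation, and the remaining calculation is essentially mechanical.
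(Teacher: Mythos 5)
Your proposal follows essentially the same route as the paper's proof: condition on the dictionary-accuracy event giving the variance sandwich and the bound $m_t = O(\gamma_t \ln(T/\delta))$, establish the confidence bound by splitting into a projection bias (absorbed into $B(1+1/\sqrt{1-\epsilon})\,\tilde{\sigma}_{t-1}(x)$ via the residual term in variance choice (ii)) plus a per-coordinate truncated-Bernstein estimation error union-bounded over the $m_t$ directions, and then run the standard UCB summation with the variance sandwich and the information-gain bound. The conditioning subtlety you flag is exactly what the paper handles with the filtrations $\cF'_{t,\tau}$ and $\cG_t$, so no further changes are needed.
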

\vspace*{-4mm}
\textit{\textbf{Remark 7.}} Theorem \ref{thm:regret-bound-qff} and \ref{thm:regret-bound-nystrom} imply that ATA-GP-UCB achieves $\tilde{O}\big(v^{\frac{1}{1+\alpha}}(\ln T)^{d}T^{\frac{1}{1+\alpha}}\big)$ regret bound for $k_{\text{SE}}$, which matches the lower bound (Theorem \ref{thm:lower-bound}) upto a factor of $\frac{\alpha}{1+\alpha}$ in the exponent of $\ln T$, as well as a few extra $\ln T$ factors hidden in the notation $\tilde{O}$. For the Mat\'{e}rn kernel, the bound is  $\tilde{O}\big(T^{\frac{1}{1+\alpha}\frac{2\nu +(2+\alpha)d(d+1)}{2\nu + d(d+1)}}\big)$, which is sublinear only when $\frac{d(d+1)}{2\nu + d(d+1)} < \frac{\alpha}{1+\alpha}$, and the gap from the lower bound is more significant in this case. It is worth mentioning that a similar gap is present even for the (easier) setting of sub-Gaussian rewards \cite{scarlett2017lower} and there might exist better algorithms which can bridge this gap. When the variance of the rewards is finite (i.e., $\alpha = 1$), the cumulative regret for ATA-GP-UCB under Nystr\"{o}m approximation is
$\tilde{O}(\gamma_T \sqrt{T})$, which recovers the state-of-the-art regret bound under sub-Gaussian rewards \cite[Thm. 2]{calandriello2019gaussian}. For the linear bandit setting, i.e. when the feature map $\tilde{\phi}_t(x)=x$ itself, substituting $\gamma_T=O(d \ln T)$, we find that the regret upper bound in Theorem \ref{thm:regret-bound-nystrom} recovers the (optimal) regret bound of \cite[Thm. 3]{shao2018almost} up to a logarithmic factor.

\textbf{Computational complexity of ATA-GP-UCB:}
(a) Under the (data-dependent) Nystr\"{o}m approximation, constructing the dictionary $D_t$ takes $O(t)$ time at each step $t$. Then, we compute the embeddings $\tilde{\phi}_t(x)$ for all arms in $O(m_t^3+m_t^2\abs{\cX})$ time, where $\abs{\cX}$ is the cardinality of $\cX$. Now, construction of $\tilde{V}_t$ takes $O(m_t^2t)$ time, since we need to rebuild it from the scratch. Then, $\tilde{V}_t^{-1/2}$ is computed in $O(m_t^3)$ time. We can now compute $\tilde{\mu}_t(x)$ and $\tilde{\sigma}_t^2(x)$ for all arms in $O(m_t^2t+m_t \abs{\cX})$ and $O(m_t^2 \abs{\cX})$ time, respectively, using already computed $\tilde{\phi}_t(x)$ and $\tilde{V}_t^{-1/2}$. Thus per-step time complexity is $O\left(m_t^2 (t+ \abs{\cX})\right)$, since $m_t \le t$. As we only need to store $\tilde{V}_t^{-1/2}$ and $\tilde{\phi}_t(x) $ for all $x$, per-step space complexity is $O(m_t(m_t+\abs{\cX}))$. Since $m_t=\tilde{O}(\gamma_t)$, the total time and space requirements are $\tilde{O}(T^2+T\abs{\cX})$ and $\tilde{O}(T\abs{\cX})$, respectively, whenever $\gamma_T \ll T$. (b) Under (data-independent) QFF approximation, the per-step complexities are $O(m^3+m^2(t+\abs{\cX})$ and $O(m(m+\abs{\cX}))$, respectively. But, since $m=O((\ln T)^d)$, the total time and space required are also of the same order as in (a).

\vspace{-3mm}
\section{Experiments}
\label{sec:experiments} 
\vspace{-3mm}
We numerically compare the performance of TGP-UCB (Algorithm \ref{algo:tgp-ucb}), ATA-GP-UCB with QFF (ATA-GP-UCB-QFF) and Nystr\"{o}m (ATA-GP-UCB-Nystr\"{o}m) approximations (Algorithm \ref{algo:itgp-ucb}) on both synthetic and real-world heavy-tailed environments. The confidence width $\beta_t$ and truncation level $b_t$ of our algorithms, and the trade-off parameter $q$ used in Nystr\"{o}m approximation are set order-wise similar to those recommended by theory (Theorems \ref{thm:regret-bound-tgp-ucb}, \ref{thm:regret-bound-qff} and \ref{thm:regret-bound-nystrom}). We use $\lambda=1$ in all algorithms and $\epsilon=0.1$ in ATA-GP-UCB-Nystr\"{o}m. We plot the mean and standard deviation (under independent trials) of the time-average regret $R_T/T$ in Figure \ref{fig:plot}. We use the following datasets.
\begin{figure}[t!]
\vskip -5mm
\centering
\subfigure[$k_{\text{SE}},f \in $ RKHS, Student's-$t$]{\includegraphics[height=1.15in,width=1.7in]{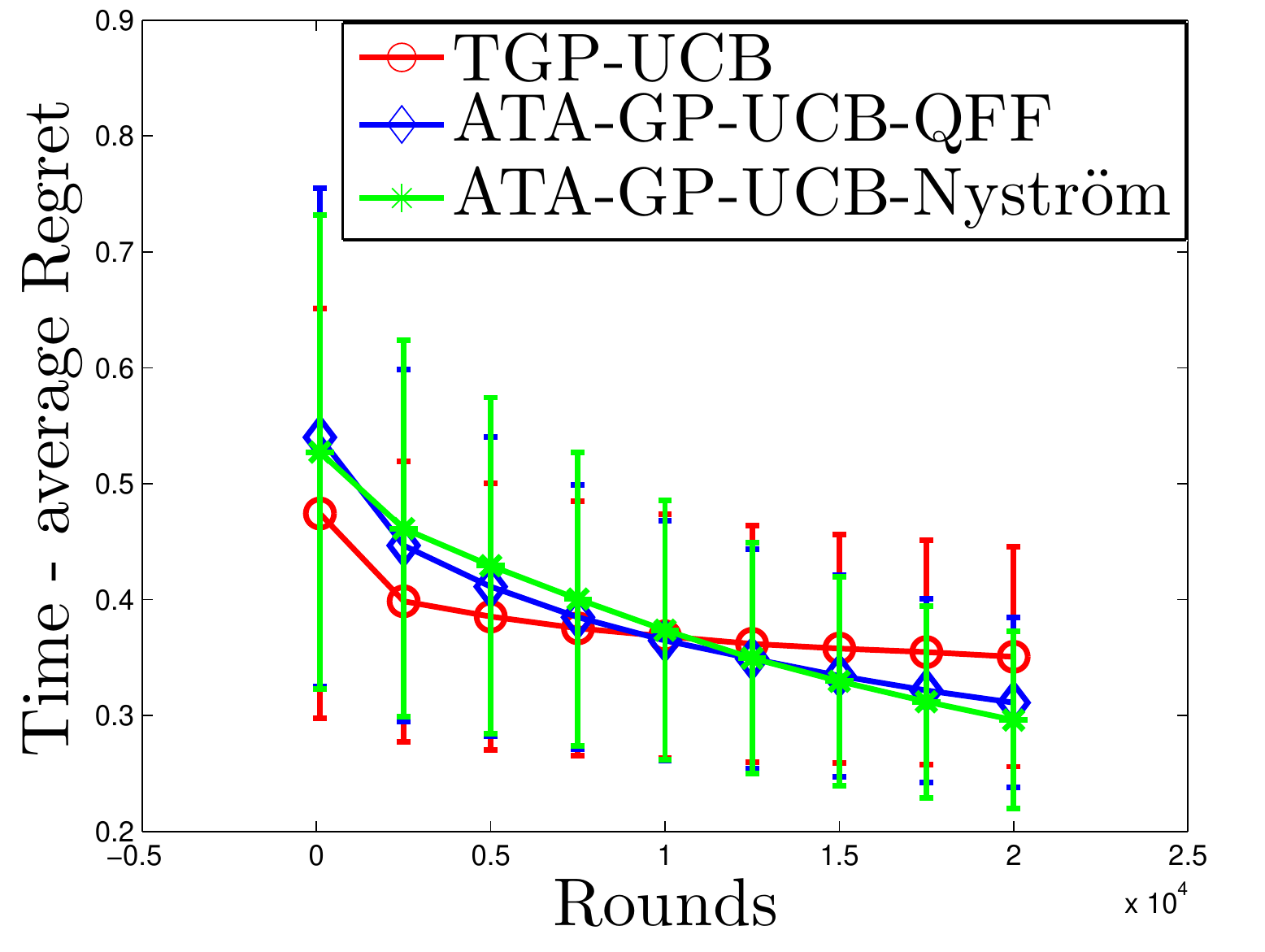}}
\subfigure[$k_{\text{SE}},f \in $ RKHS, Pareto]{\includegraphics[height=1.15in,width=1.7in]{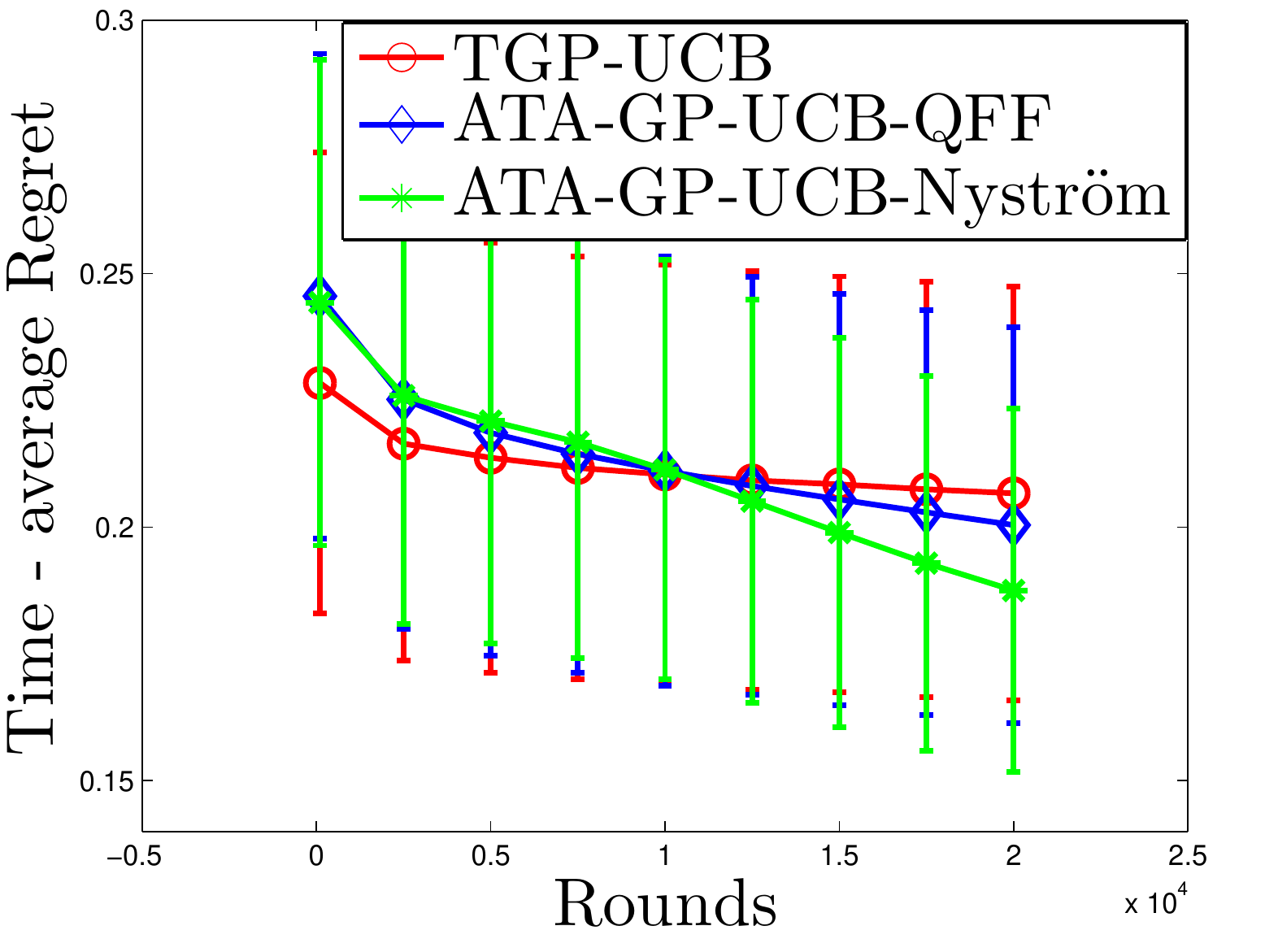}}
\subfigure[$k_{\text{Mat\'{e}rn}},f \in $ RKHS, Student's-$t$]{\includegraphics[height=1.15in,width=1.7in]{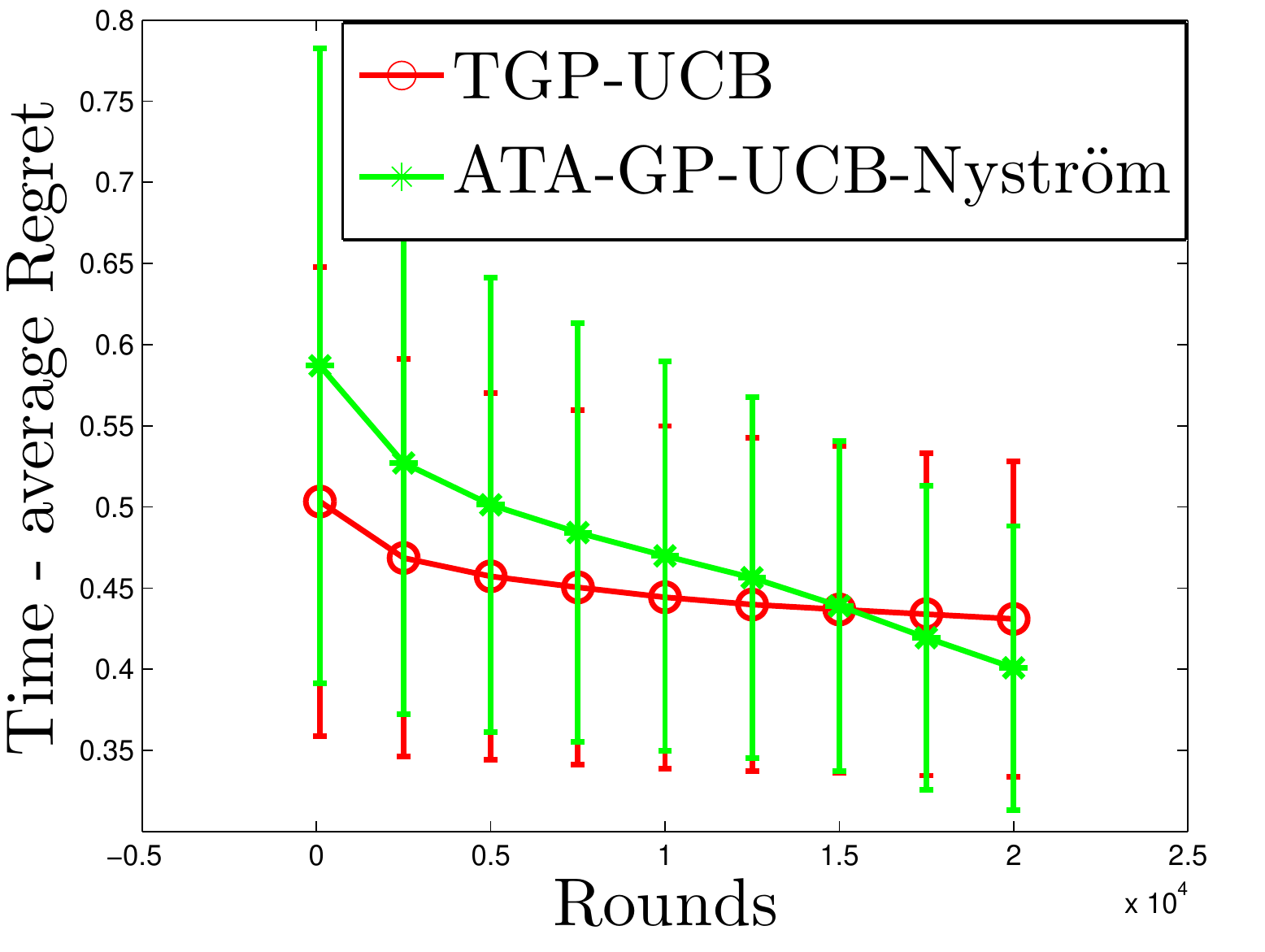}}
\vskip -3mm
\subfigure[Stock market data]{\includegraphics[height=1.15in,width=1.7in]{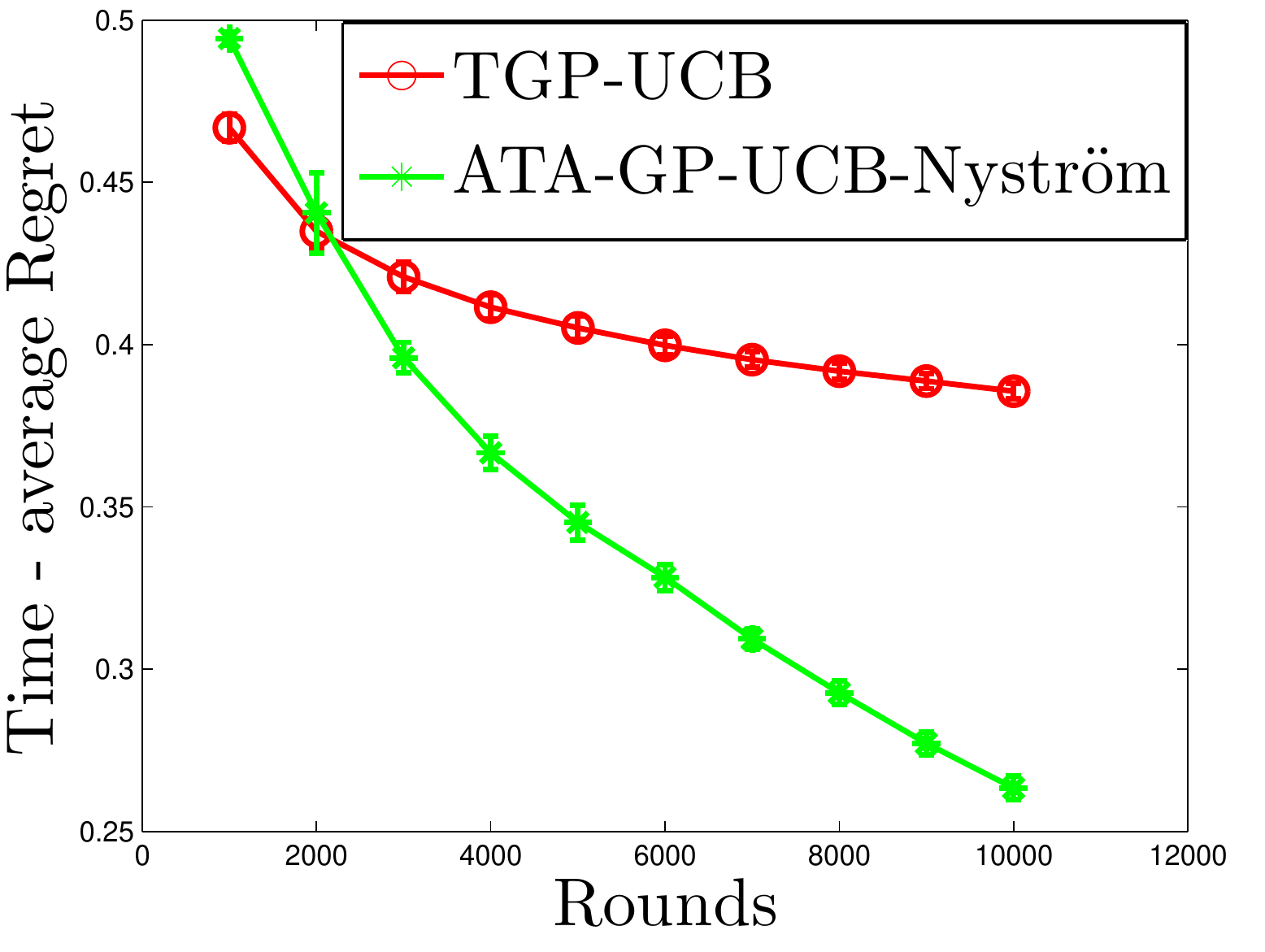}}
\subfigure[Light sensor data]{\includegraphics[height=1.15in,width=1.7in]{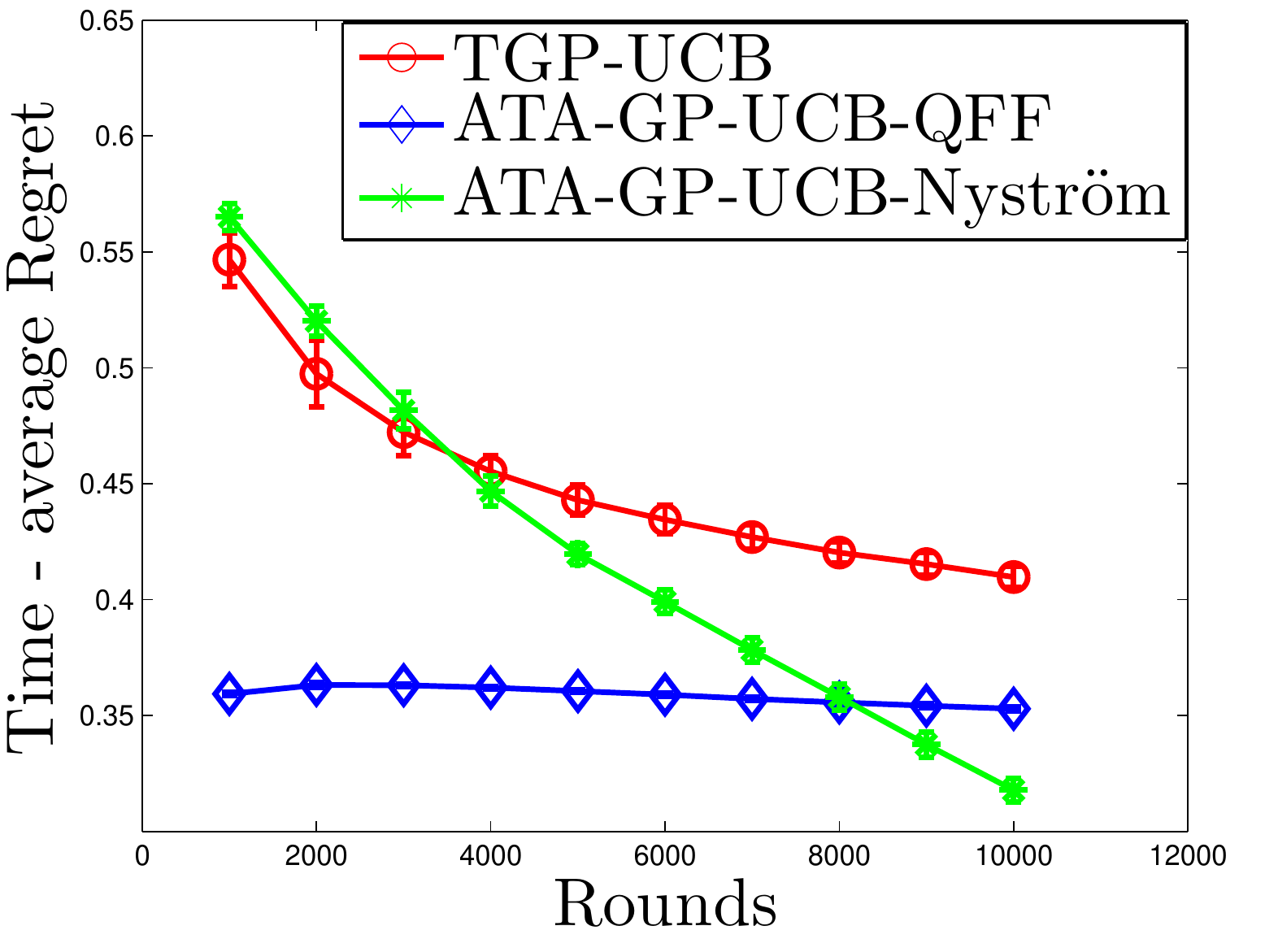}}
\subfigure[Effect of  truncation on GP-UCB]{\includegraphics[height=1.15in,width=1.75in]{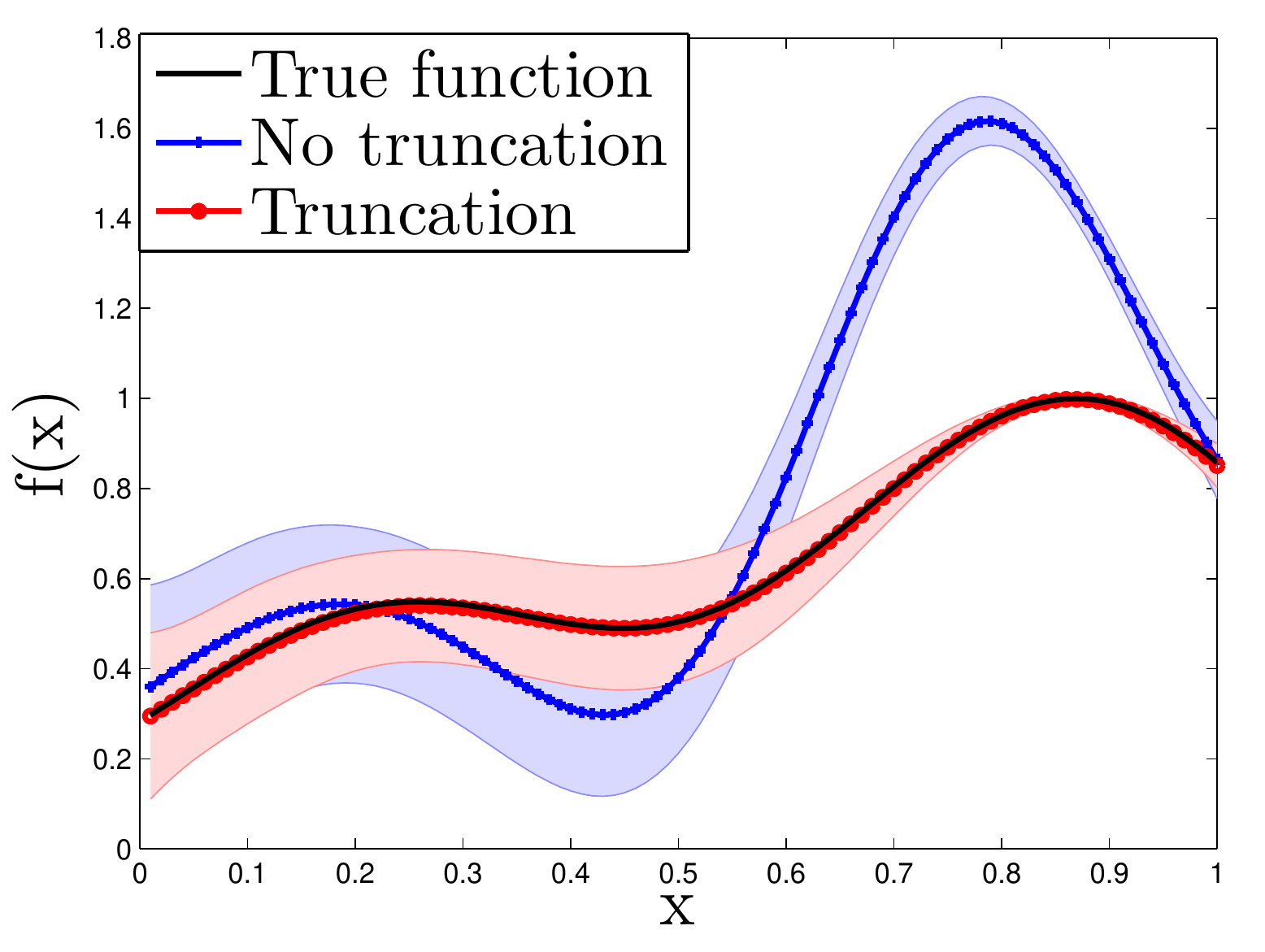}}
\vskip -3mm
\caption{{\footnotesize (a)-(e) Time-average regret ($R_T/T$) for TGP-UCB, ATA-GP-UCB with QFF approximation (ATA-GP-UCB-QFF) and Nystr\"{o}m approximation (ATA-GP-UCB-Nystr\"{o}m) on heavy-tailed data. (f) Confidence sets ($\mu_t\pm \sigma_t$) formed by GP-UCB with and without truncation under heavy fluctuations.
}}
\label{fig:plot} 
\vskip -5mm
\end{figure}

\vspace*{-1.5mm}
\textbf{1. Synthetic data:} We generate the objective
function $f \in \cH_k(\cX)$ with $\cX$ set to be a discretization of $[0,1]$ into $100$ evenly spaced points. Each $f=\sum_{i=1}^{p}a_ik(\cdot,x_i)$
was generated using an SE kernel with $l=0.2$ and by uniformly sampling $a_i \in [-1,1]$ and support points $x_i \in \cX$ with $p=100$. We set $B=\max_{x \in \cX} \abs{f(x)}$. To generate the rewards, first we consider $y(x)=f(x)+\eta$, where the noise $\eta$ are samples from the Student's $t$-distribution with $3$ degrees of freedom (Figure \ref{fig:plot} a). Here, the variance is bounded ($\alpha=1$) and hence $v=B^2+3$. Next, we generate the rewards as samples from the Pareto distribution with shape parameter $2$ and scale parameter $f(x)/2$. $f$ is generated similarly, except that here we sample $a_i$'s uniformly from $[0,1]$. Then, we set $B$ as before leading to the bound of $(1+\alpha)$-th raw moments $v=\frac{B^{1+\alpha}}{2^{\alpha}(1-\alpha)}$. We plot the results for $\alpha=0.9$ (Figure \ref{fig:plot} b). We use $m=32$ features (in consistence with Theorem \ref{thm:regret-bound-qff}) for ATA-GP-UCB-QFF in these experiments. Next, we generate $f$ using the Mat\'{e}rn kernel with $l=0.2$ and $\nu=2.5$, and consider the same Student's-$t$ distribution as earlier to generate rewards. As we do not have the theory of ATA-GP-UCB-QFF for the Mat\'{e}rn kernel yet, we exclude evaluating it here (Figure \ref{fig:plot} c). We perform $20$ trials for $2 \times 10^4$ rounds and for each trial we evaluate on a different $f$ (which explains the high error bars). 

\vspace*{-1.5mm}
\textbf{2. Stock market data:} We
consider a representative application of identifying the most profitable stock in a given pool of stocks. This is motivated by the practical scenario that
an investor would like to invest a fixed budget of money in a stock and get as much return as possible. We took the adjusted closing price of $29$ stocks from January 4th, 2016 to April 10th, 2019 ({\small \url{https://www.quandl.com/data/EOD-End-of-Day-US-Stock-Prices}}).  We conduct Kolmogrov-Smirnov (KS) test to find out that the null hypothesis
of stock prices following a Gaussian distribution is rejected against the favor of a heavy-tailed distribution. We take the empirical mean of stock prices as our objective function $f$ and empirical covariance of the normalized stock prices as our kernel function $k$ (since stock behaviors are mostly correlated with one another). We consider $\alpha=1$ and set $v$ as the empirical average of the squared prices. Since the kernel is data dependent, we cannot run ATA-GP-UCB-QFF here. We average over $10$ independent trials of the algorithms (Figure \ref{fig:plot} d).

\vspace*{-1.5mm}
\textbf{3. Light sensor data:} We take light sensor data collected in the CMU Intelligent Workplace in Nov 2005 containing locations of $41$ sensors, $601$ train samples and $192$ test samples ({\small\url{http://www.cs.cmu.edu/~guestrin/Class/10708-F08/projects}}) in the context of learning the maximum average reading of the sensors. For each sensor, we find that the KS test on its readings rejects the Gaussian against the favor of a heavy-tailed distribution. We take the empirical average of the test samples as our objective $f$ and empirical covariance of the normalized train samples as our kernel $k$. We consider $\alpha=1$, set $v$ as the empirical mean of the squared readings and $B$ as the maximum of the average readings. For ATA-GP-UCB-QFF, we fit a SE kernel with $l^2=0.1$ on the given sensor locations and approximate it with $m=16^2=256$ features (Figure \ref{fig:plot} e).


\vspace*{-1.5mm}
\textbf{Observations:} We find that ATA-GP-UCB outperforms TGP-UCB uniformly over all experiments, which is consistent with our theoretical results. We also see that the performance of ATA-GP-UCB under the Nystr\"{o}m approximation is no worse than that under the QFF approximation. Not only that, the scope of the latter is limited due to its dependence on the analytical form of the kernel, whereas the former is data-adaptive and hence, well suited for practical purposes.

\vspace*{-1.5mm}
\textbf{Effect of truncation:} For heavy-tailed rewards, the sub-Gaussian constant $R=\infty$. Hence, we exclude evaluating GP-UCB in the above experiments. Now, we demonstrate the effect of truncation on GP-UCB in the following experiment. First, we generate a function $f \in \cH_k(\cX)$ and normalize it between $[0,1]$. Then, we simulate rewards as $y(x)=f(x)+\eta$, where $\eta$ takes values in $\lbrace -10,10\rbrace$, uniformly, for any single random point in $\cX$, and is zero everywhere else. We run GP-UCB with $\beta_t=\ln t$ and see that the posterior mean after $T=10^4$ rounds is not a good estimate of $f$. However, by truncating reward samples which exceeds $t^{1/4}$ (truncation threshold in TGP-UCB when $\alpha=1$) at round $t$, we get an (almost) accurate estimator of $f$. Not only that, the confidence interval around this estimator contains $f$ at every point in $\cX$, which in turn ensures good performance. We plot the respective confidence sets averaged over $50$ such randomizations of noise (Figure \ref{fig:plot} f).

\vspace*{-1.5mm}
\textbf{Conclusion:} To the best of our knowledge, this is the first work to formulate and solve BO optimally under heavy-tailed observations. We have demonstrated the failure of existing methods and developed optimal algorithms using kernel approximation techniques, which are easy to implement and perform well in practice, with rigorous theoretical guarantees. One can also consider building and studying a median of means-style estimator \cite{bubeck2013bandits} in the feature space and hope to develop an optimal algorithm.

\bibliographystyle{plainnat}
\bibliography{main}

\begin{thebibliography}{42}
\providecommand{\natexlab}[1]{#1}
\providecommand{\url}[1]{\texttt{#1}}
\expandafter\ifx\csname urlstyle\endcsname\relax
  \providecommand{\doi}[1]{doi: #1}\else
  \providecommand{\doi}{doi: \begingroup \urlstyle{rm}\Url}\fi

\bibitem[Alaoui and Mahoney(2015)]{alaoui2015fast}
Ahmed Alaoui and Michael~W Mahoney.
\newblock Fast randomized kernel ridge regression with statistical guarantees.
\newblock In \emph{Advances in Neural Information Processing Systems}, pages
  775--783, 2015.

\bibitem[Azar et~al.(2014)Azar, Lazaric, and Brunskill]{azar2014online}
Mohammad~Gheshlaghi Azar, Alessandro Lazaric, and Emma Brunskill.
\newblock Online stochastic optimization under correlated bandit feedback.
\newblock In \emph{ICML}, pages 1557--1565, 2014.

\bibitem[Bergstra and Bengio(2012)]{BerBen12:randomHypParamTun}
James Bergstra and Yoshua Bengio.
\newblock Random search for hyper-parameter optimization.
\newblock \emph{J. Mach. Learn. Res.}, 13:\penalty0 281--305, February 2012.

\bibitem[Bochner(1959)]{bochner1959lectures}
Salomon Bochner.
\newblock \emph{Lectures on Fourier integrals}.
\newblock Princeton University Press, 1959.

\bibitem[Bogunovic et~al.(2018)Bogunovic, Scarlett, Jegelka, and
  Cevher]{bogunovic2018adversarially}
Ilija Bogunovic, Jonathan Scarlett, Stefanie Jegelka, and Volkan Cevher.
\newblock Adversarially robust optimization with gaussian processes.
\newblock In \emph{Advances in Neural Information Processing Systems}, pages
  5760--5770, 2018.

\bibitem[Bubeck et~al.(2011)Bubeck, Munos, Stoltz, and
  Szepesv{\'a}ri]{bubeck2011x}
S{\'e}bastien Bubeck, R{\'e}mi Munos, Gilles Stoltz, and Csaba Szepesv{\'a}ri.
\newblock X-armed bandits.
\newblock \emph{Journal of Machine Learning Research}, 12\penalty0
  (May):\penalty0 1655--1695, 2011.

\bibitem[Bubeck et~al.(2013)Bubeck, Cesa-Bianchi, and
  Lugosi]{bubeck2013bandits}
S{\'e}bastien Bubeck, Nicolo Cesa-Bianchi, and G{\'a}bor Lugosi.
\newblock Bandits with heavy tail.
\newblock \emph{IEEE Transactions on Information Theory}, 59\penalty0
  (11):\penalty0 7711--7717, 2013.

\bibitem[Calandriello et~al.(2019)Calandriello, Carratino, Lazaric, Valko, and
  Rosasco]{calandriello2019gaussian}
Daniele Calandriello, Luigi Carratino, Alessandro Lazaric, Michal Valko, and
  Lorenzo Rosasco.
\newblock Gaussian process optimization with adaptive sketching: Scalable and
  no regret.
\newblock \emph{In Conference on Learning Theory}, 2019.

\bibitem[Carpentier and Valko(2014)]{carpentier2014extreme}
Alexandra Carpentier and Michal Valko.
\newblock Extreme bandits.
\newblock In \emph{Advances in Neural Information Processing Systems}, pages
  1089--1097, 2014.

\bibitem[Chowdhury and Gopalan(2017)]{chowdhury2017kernelized}
Sayak~Ray Chowdhury and Aditya Gopalan.
\newblock On kernelized multi-armed bandits.
\newblock In \emph{Proceedings of the 34th International Conference on Machine
  Learning-Volume 70}, pages 844--853. JMLR. org, 2017.

\bibitem[Drineas and Mahoney(2005)]{drineas2005nystrom}
Petros Drineas and Michael~W Mahoney.
\newblock On the nystr{\"o}m method for approximating a gram matrix for
  improved kernel-based learning.
\newblock \emph{journal of machine learning research}, 6\penalty0
  (Dec):\penalty0 2153--2175, 2005.

\bibitem[Durand et~al.(2018)Durand, Maillard, and Pineau]{durand2018streaming}
Audrey Durand, Odalric-Ambrym Maillard, and Joelle Pineau.
\newblock Streaming kernel regression with provably adaptive mean, variance,
  and regularization.
\newblock \emph{The Journal of Machine Learning Research}, 19\penalty0
  (1):\penalty0 650--683, 2018.

\bibitem[Garnett et~al.(2010)Garnett, Osborne, and
  Roberts]{GarOsbRob10:BOsensor}
R.~Garnett, M.~A. Osborne, and S.~J. Roberts.
\newblock Bayesian optimization for sensor set selection.
\newblock In \emph{Proceedings of the 9th ACM/IEEE International Conference on
  Information Processing in Sensor Networks}, IPSN '10, pages 209--219, New
  York, NY, USA, 2010. ACM.

\bibitem[Gonzalez et~al.(2015)Gonzalez, Longworth, James, and
  Lawrence]{gonzalez2015bayesian}
Javier Gonzalez, Joseph Longworth, David~C James, and Neil~D Lawrence.
\newblock Bayesian optimization for synthetic gene design.
\newblock \emph{arXiv preprint arXiv:1505.01627}, 2015.

\bibitem[Hazan et~al.(2007)Hazan, Agarwal, and Kale]{hazan2007logarithmic}
Elad Hazan, Amit Agarwal, and Satyen Kale.
\newblock Logarithmic regret algorithms for online convex optimization.
\newblock \emph{Machine Learning}, 69\penalty0 (2-3):\penalty0 169--192, 2007.

\bibitem[Hern{\'a}ndez-Lobato et~al.(2014)Hern{\'a}ndez-Lobato, Hoffman, and
  Ghahramani]{hernandez2014predictive}
Jos{\'e}~Miguel Hern{\'a}ndez-Lobato, Matthew~W Hoffman, and Zoubin Ghahramani.
\newblock Predictive entropy search for efficient global optimization of
  black-box functions.
\newblock In \emph{Advances in neural information processing systems}, pages
  918--926, 2014.

\bibitem[Hildebrand(1987)]{hildebrand1987introduction}
Francis~Begnaud Hildebrand.
\newblock \emph{Introduction to numerical analysis}.
\newblock Courier Corporation, 1987.

\bibitem[Hsu and Sabato(2014)]{hsu2014heavy}
Daniel Hsu and Sivan Sabato.
\newblock Heavy-tailed regression with a generalized median-of-means.
\newblock In \emph{International Conference on Machine Learning}, pages 37--45,
  2014.

\bibitem[Jagannathan et~al.(2014)Jagannathan, Markakis, Modiano, and
  Tsitsiklis]{JagMarModTsi14:heavytailsched}
Krishna~P. Jagannathan, Mihalis~G. Markakis, Eytan Modiano, and John~N.
  Tsitsiklis.
\newblock Throughput optimal scheduling over time-varying channels in the
  presence of heavy-tailed traffic.
\newblock \emph{{IEEE} Trans. Information Theory}, 60\penalty0 (5):\penalty0
  2896--2909, 2014.
\newblock \doi{10.1109/TIT.2014.2311125}.
\newblock URL \url{https://doi.org/10.1109/TIT.2014.2311125}.

\bibitem[Kandasamy et~al.(2015)Kandasamy, Schneider, and
  P{\'o}czos]{kandasamy2015high}
Kirthevasan Kandasamy, Jeff Schneider, and Barnab{\'a}s P{\'o}czos.
\newblock High dimensional bayesian optimisation and bandits via additive
  models.
\newblock In \emph{International Conference on Machine Learning}, pages
  295--304, 2015.

\bibitem[Kleinberg et~al.(2008)Kleinberg, Slivkins, and
  Upfal]{kleinberg2008multi}
Robert Kleinberg, Aleksandrs Slivkins, and Eli Upfal.
\newblock Multi-armed bandits in metric spaces.
\newblock In \emph{Proceedings of the fortieth annual ACM symposium on Theory
  of computing}, pages 681--690. ACM, 2008.

\bibitem[Lattimore(2017)]{lattimore2017scale}
Tor Lattimore.
\newblock A scale free algorithm for stochastic bandits with bounded kurtosis.
\newblock In \emph{Advances in Neural Information Processing Systems}, pages
  1584--1593, 2017.

\bibitem[Medina and Yang(2016)]{medina2016no}
Andres~Munoz Medina and Scott Yang.
\newblock No-regret algorithms for heavy-tailed linear bandits.
\newblock In \emph{International Conference on Machine Learning}, pages
  1642--1650, 2016.

\bibitem[Mo{\v{c}}kus(1975)]{movckus1975bayesian}
Jonas Mo{\v{c}}kus.
\newblock On bayesian methods for seeking the extremum.
\newblock In \emph{Optimization Techniques IFIP Technical Conference}, pages
  400--404. Springer, 1975.

\bibitem[Mutny and Krause(2018)]{mutny2018efficient}
Mojmir Mutny and Andreas Krause.
\newblock Efficient high dimensional bayesian optimization with additivity and
  quadrature fourier features.
\newblock In \emph{Advances in Neural Information Processing Systems}, pages
  9005--9016, 2018.

\bibitem[Quinonero-Candela et~al.(2007)Quinonero-Candela, Rasmussen, and
  Williams]{quinonero2007approximation}
Joaquin Quinonero-Candela, Carl~Edward Rasmussen, and Christopher~KI Williams.
\newblock Approximation methods for gaussian process regression.
\newblock \emph{Large-scale kernel machines}, pages 203--224, 2007.

\bibitem[Rahimi and Recht(2008)]{rahimi2008random}
Ali Rahimi and Benjamin Recht.
\newblock Random features for large-scale kernel machines.
\newblock In \emph{Advances in neural information processing systems}, pages
  1177--1184, 2008.

\bibitem[Resnick(2007)]{resnick2007heavy}
Sidney~I Resnick.
\newblock \emph{Heavy-tail phenomena: probabilistic and statistical modeling}.
\newblock Springer Science \& Business Media, 2007.

\bibitem[Rolland et~al.(2018)Rolland, Scarlett, Bogunovic, and
  Cevher]{rolland2018high}
Paul Rolland, Jonathan Scarlett, Ilija Bogunovic, and Volkan Cevher.
\newblock High-dimensional bayesian optimization via additive models with
  overlapping groups.
\newblock \emph{arXiv preprint arXiv:1802.07028}, 2018.

\bibitem[Scarlett et~al.(2017)Scarlett, Bogunovic, and
  Cevher]{scarlett2017lower}
Jonathan Scarlett, Ilija Bogunovic, and Volkan Cevher.
\newblock Lower bounds on regret for noisy gaussian process bandit
  optimization.
\newblock In \emph{Conference on Learning Theory}, pages 1723--1742, 2017.

\bibitem[Seldin et~al.(2012)Seldin, Laviolette, Cesa-Bianchi, Shawe-Taylor, and
  Auer]{seldin2012pac}
Yevgeny Seldin, Fran{\c{c}}ois Laviolette, Nicolo Cesa-Bianchi, John
  Shawe-Taylor, and Peter Auer.
\newblock Pac-bayesian inequalities for martingales.
\newblock \emph{IEEE Transactions on Information Theory}, 58\penalty0
  (12):\penalty0 7086--7093, 2012.

\bibitem[Sen et~al.(2019)Sen, Kandasamy, and Shakkottai]{sen2019noisy}
Rajat Sen, Kirthevasan Kandasamy, and Sanjay Shakkottai.
\newblock Noisy blackbox optimization using multi-fidelity queries: A tree
  search approach.
\newblock In \emph{The 22nd International Conference on Artificial Intelligence
  and Statistics}, pages 2096--2105, 2019.

\bibitem[Shao et~al.(2018)Shao, Yu, King, and Lyu]{shao2018almost}
Han Shao, Xiaotian Yu, Irwin King, and Michael~R Lyu.
\newblock Almost optimal algorithms for linear stochastic bandits with
  heavy-tailed payoffs.
\newblock In \emph{Advances in Neural Information Processing Systems}, pages
  8420--8429, 2018.

\bibitem[Srinivas et~al.(2010)Srinivas, Krause, Kakade, and
  Seeger]{srinivas2010gaussian}
Niranjan Srinivas, Andreas Krause, Sham Kakade, and Matthias Seeger.
\newblock Gaussian process optimization in the bandit setting: no regret and
  experimental design.
\newblock In \emph{Proceedings of the 27th International Conference on
  International Conference on Machine Learning}, pages 1015--1022. Omnipress,
  2010.

\bibitem[Sriperumbudur and Szab{\'o}(2015)]{sriperumbudur2015optimal}
Bharath Sriperumbudur and Zolt{\'a}n Szab{\'o}.
\newblock Optimal rates for random fourier features.
\newblock In \emph{Advances in Neural Information Processing Systems}, pages
  1144--1152, 2015.

\bibitem[Strogatz(2001)]{strogatz2001exploring}
Steven~H Strogatz.
\newblock Exploring complex networks.
\newblock \emph{nature}, 410\penalty0 (6825):\penalty0 268, 2001.

\bibitem[Vakili et~al.(2013)Vakili, Liu, and Zhao]{vakili2013deterministic}
Sattar Vakili, Keqin Liu, and Qing Zhao.
\newblock Deterministic sequencing of exploration and exploitation for
  multi-armed bandit problems.
\newblock \emph{IEEE Journal of Selected Topics in Signal Processing},
  7\penalty0 (5):\penalty0 759--767, 2013.

\bibitem[Wang and Jegelka(2017)]{wang2017max}
Zi~Wang and Stefanie Jegelka.
\newblock Max-value entropy search for efficient bayesian optimization.
\newblock In \emph{Proceedings of the 34th International Conference on Machine
  Learning-Volume 70}, pages 3627--3635. JMLR. org, 2017.

\bibitem[Wang et~al.(2016)Wang, Zhou, and Jegelka]{wang2016optimization}
Zi~Wang, Bolei Zhou, and Stefanie Jegelka.
\newblock Optimization as estimation with gaussian processes in bandit
  settings.
\newblock In \emph{Artificial Intelligence and Statistics}, pages 1022--1031,
  2016.

\bibitem[Wang et~al.(2017)Wang, Gehring, Kohli, and Jegelka]{wang2017batched}
Zi~Wang, Clement Gehring, Pushmeet Kohli, and Stefanie Jegelka.
\newblock Batched large-scale bayesian optimization in high-dimensional spaces.
\newblock \emph{arXiv preprint arXiv:1706.01445}, 2017.

\bibitem[Yang et~al.(2012)Yang, Li, Mahdavi, Jin, and Zhou]{yang2012nystrom}
Tianbao Yang, Yu-Feng Li, Mehrdad Mahdavi, Rong Jin, and Zhi-Hua Zhou.
\newblock Nystr{\"o}m method vs random fourier features: A theoretical and
  empirical comparison.
\newblock In \emph{Advances in neural information processing systems}, pages
  476--484, 2012.

\bibitem[Yu et~al.(2018)Yu, Shao, Lyu, and King]{yu2018pure}
Xiaotian Yu, Han Shao, Michael~R Lyu, and Irwin King.
\newblock Pure exploration of multi-armed bandits with heavy-tailed payoffs.
\newblock In \emph{Proceedings of the Thirty-Fourth Conference on Uncertainty
  in Artificial Intelligence}, pages 937--946, 2018.

\end{thebibliography}


\begin{appendix}
\begin{center}
\huge{Appendix}
\end{center}
\section{Preliminaries}
\label{appendix:prelim}
First, we review some useful matrix identities.
\begin{mylemma}\cite[Lemma 12]{hazan2007logarithmic}
Let $A \succeq B \succ 0$ be positive definite matrices. Then $A^{-1}\bullet(A-B)=\ln \frac{\abs{A}}{\abs{B}}$, where $X\bullet Y \bydef \sum_{i=1}^{n}\sum_{j=1}^{n}X_{i,j}Y_{i,j}$ for any two matrices $X,Y \in \Real^{n\times n}$.
\label{lem:useful-1}
\end{mylemma}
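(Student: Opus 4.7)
The plan is via simultaneous diagonalization of the pair $(A,B)$ followed by reduction to a scalar statement. Because $A, B$ are symmetric positive definite, there exists an invertible $P \in \Real^{n \times n}$ such that $P^T A P = I_n$ and $P^T B P = D$, where $D = \mathrm{diag}(d_1,\ldots,d_n)$ collects the generalized eigenvalues of $(B,A)$. The hypothesis $A \succeq B$ translates under the congruence by $P$ into $I_n \succeq D$, so $0 < d_i \le 1$ for every $i$.

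Next I would unwind both sides of the target identity in these coordinates. From $\abs{A}\,\abs{B}^{-1} = \abs{D}^{-1} = \prod_i d_i^{-1}$ we get
\beqn
\ln\frac{\abs{A}}{\abs{B}} \;=\; -\sum_{i=1}^{n} \ln d_i.
\eeqn
For the left-hand side, $A^{-1} = PP^T$ and $A - B = P^{-T}(I_n - D)P^{-1}$, so $A^{-1}(A-B) = P(I_n - D)P^{-1}$. Using that $X \bullet Y = \mathrm{tr}(X^T Y)$ (the Frobenius inner product) together with the symmetry of $A^{-1}$ and the cyclic property of trace,
\beqn
A^{-1} \bullet (A-B) \;=\; \mathrm{tr}\!\big(A^{-1}(A-B)\big) \;=\; \mathrm{tr}(I_n - D) \;=\; \sum_{i=1}^{n}(1 - d_i).
\eeqn
Thus the claim reduces to the per-coordinate assertion $1 - d_i = -\ln d_i$ for each generalized eigenvalue $d_i \in (0,1]$.

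The hard part — and indeed the obstruction to a literal equality — is that $1 - d = -\ln d$ holds only at $d = 1$; the sharp, universally valid form is the one-variable inequality $1 - d \le -\ln d$ for $d > 0$ (immediate from concavity of $\ln$, with equality iff $d=1$). Summing this termwise over $i$ yields
\beqn
A^{-1} \bullet (A-B) \;\le\; \ln\frac{\abs{A}}{\abs{B}},
\eeqn
with equality iff $A = B$. This is the standard form of Lemma~12 of \citet{hazan2007logarithmic}, and is presumably what is actually used in the proofs deferred to the appendix; the ``$=$'' as printed appears to be a conflation with ``$\le$''. My plan therefore delivers the correct (tight) inequality via diagonalization plus the one-line scalar bound, and the same reduction makes transparent that no strict equality can hold unless $d_i = 1$ for every $i$.
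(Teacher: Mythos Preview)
Your reduction via simultaneous diagonalization is clean and correct, and your diagnosis is also correct: the statement as printed is false as an equality. The scalar reduction makes this completely transparent --- $1-d = -\ln d$ forces $d=1$, so equality in $A^{-1}\bullet(A-B)=\ln\frac{|A|}{|B|}$ would require $A=B$. The correct and standard form is the inequality $A^{-1}\bullet(A-B)\le\ln\frac{|A|}{|B|}$, which is exactly what Lemma~12 of \citet{hazan2007logarithmic} actually states.

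There is nothing to compare against: the paper does not prove this lemma at all, it merely cites it. Moreover, your suspicion about how it is used downstream is on the mark. In the proof of Lemma~\ref{lem:pred-var} the paper applies it with $A=V_t$ and $B=V_{t-1}$ (a rank-one update) and writes $\frac{1}{\lambda}\sigma_t^2(x_t)=\ln\frac{|V_t|}{|V_{t-1}|}$; this ``$=$'' inherits the same typo, but the very next displayed line is the telescoped bound $\frac{1}{\lambda}\sum_{s=1}^{t}\sigma_s^2(x_s)\le\ln\frac{|V_t|}{|V_0|}$, which is all that is needed and which your inequality delivers. So the argument you give is both correct and sufficient for every place the lemma is invoked in the paper.
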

\begin{mylemma}
   \label{lem:dim-change}
   For any linear operator $A:\cH_k(\cX) \ra \Real^t$ and its adjoint $A^T:\Real^t \ra \cH_k(\cX)$, and for any $\lambda > 0$,
   \beq
   (A^T A + \lambda I_\cH)^{-1}A^T=A^T(AA^T+\lambda I_t)^{-1},
   \label{eqn:dim-change-1}
   \eeq
   and
   \beq
   I_\cH - A^T(AA^T+\lambda I_t)A = \lambda (A^T A + \lambda I_\cH)^{-1}.
   \label{eqn:dim-change-2}
   \eeq
   \end{mylemma}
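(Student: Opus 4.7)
\textbf{Proof plan for Lemma \ref{lem:dim-change}.} The two identities are purely algebraic consequences of the fact that $A^T$ intertwines the two self-adjoint operators $A^T A+\lambda I_\cH$ on $\cH_k(\cX)$ and $AA^T+\lambda I_t$ on $\Real^t$. Both operators are strictly positive (since $\lambda>0$ and $A^TA, AA^T \succeq 0$), so the inverses $(A^T A+\lambda I_\cH)^{-1}$ and $(AA^T+\lambda I_t)^{-1}$ exist as bounded operators on $\cH_k(\cX)$ and $\Real^t$ respectively. I will exploit this by computing a single product in two ways.

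The plan for \eqref{eqn:dim-change-1} is to start from the elementary identity
\beqn
(A^T A + \lambda I_\cH)\,A^T \;=\; A^T A A^T + \lambda A^T \;=\; A^T\,(AA^T + \lambda I_t),
\eeqn
where both sides are maps $\Real^t\to\cH_k(\cX)$. Pre-multiplying by $(A^T A+\lambda I_\cH)^{-1}$ and post-multiplying by $(AA^T+\lambda I_t)^{-1}$ yields
\beqn
A^T(AA^T+\lambda I_t)^{-1} \;=\; (A^T A+\lambda I_\cH)^{-1}A^T,
\eeqn
which is exactly \eqref{eqn:dim-change-1}. No functional-analytic subtlety arises because both inverses are everywhere defined bounded operators and the cancellations are justified by bounded linearity.

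The plan for \eqref{eqn:dim-change-2} is to show that the operator $I_\cH - A^T(AA^T+\lambda I_t)^{-1}A$ on $\cH_k(\cX)$, when composed on the right with $A^T A+\lambda I_\cH$, produces $\lambda I_\cH$, which gives the claim after multiplying by $(A^T A+\lambda I_\cH)^{-1}$. Concretely, using the identity just proved in the form $A^T(AA^T+\lambda I_t)^{-1}(AA^T+\lambda I_t) = A^T$ and the rearranged version $A^T(AA^T+\lambda I_t)^{-1}A \cdot (A^T A+\lambda I_\cH) = A^T(AA^T+\lambda I_t)^{-1}(AA^T+\lambda I_t)A = A^T A$, I compute
\beqn
\bigl(I_\cH - A^T(AA^T+\lambda I_t)^{-1}A\bigr)(A^T A+\lambda I_\cH)
= (A^T A+\lambda I_\cH) - A^T A = \lambda I_\cH,
\eeqn
and then multiply on the right by $(A^T A+\lambda I_\cH)^{-1}$. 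This recovers \eqref{eqn:dim-change-2}.

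The only potential obstacle is making sure the manipulations are valid when $\cH_k(\cX)$ is infinite-dimensional, in particular that $A^T A$ is a well-defined self-adjoint bounded operator on $\cH_k(\cX)$ and that $A^T A+\lambda I_\cH$ is invertible with bounded inverse. This follows from $A$ being a bounded linear operator from $\cH_k(\cX)$ to $\Real^t$ (an immediate consequence of the reproducing property and boundedness of $k$ on the diagonal), so $A^T A$ is bounded and positive semi-definite; adding $\lambda I_\cH$ with $\lambda>0$ makes it coercive and hence invertible on all of $\cH_k(\cX)$. With these observations in place, the two identities reduce to the elementary ``push-through'' computations described above.
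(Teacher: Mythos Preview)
Your proof is correct and takes essentially the same approach as the paper: the paper's entire proof is the single observation that $(A^T A + \lambda I_\cH)A^T = A^T(AA^T+\lambda I_t)$, which is exactly the ``push-through'' identity you start from. Your derivation of \eqref{eqn:dim-change-2} and your remarks on boundedness and invertibility in the infinite-dimensional setting simply make explicit details the paper leaves implicit.
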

   \begin{proof}
   The proofs follow from the fact that $(A^T A + \lambda I_\cH)A^T=A^T(AA^T+\lambda I_t)$ for any $\lambda > 0$.
   \end{proof}
Next, we review some relevant definitions and results, which will be useful in the analysis of our algorithms. We first begin with the definition of \textit{Maximum Information Gain}, first appeared in \cite{srinivas2010gaussian}, which basically measures the reduction in uncertainty about the unknown function after some noisy observations (rewards).

For a function $f:\cX \ra \Real$ and any subset $A \subset \cX$ of its domain, we use $f_A := [f(x)]_{x\in A}$ to denote its restriction to $A$, i.e., a vector containing $f$'s evaluations at each point in $A$ (under an implicitly understood bijection from coordinates of the vector to points in $A$). In case $f$ is a random function, $f_A$ will be understood to be a random vector. For jointly distributed random variables $X, Y$, $I(X;Y)$ denotes the Shannon mutual information between them.

\begin{mydefinition}[Maximum Information Gain (MIG)] 
\label{def:mig}
Let $f:\cX\ra \Real$ be a (possibly random) real-valued function defined on a domain $\cX$, and $t$ a positive integer. For each subset $\cX \subset D$, let $Y_A$ denote a noisy version of $f_A$ obtained by passing $f_A$ through a channel $\prob{Y_A | f_A}$. The \textit{Maximum Information Gain (MIG)} about $f$ after $t$ noisy observations is defined as
\beqn
\gamma_t \bydef \max_{A \subset \cX : \abs{A}=t} I(f_A;Y_A).
\eeqn
(We omit mentioning explicitly the dependence on the channels for ease of notation.)
\end{mydefinition}



Let $k: \cX \times \cX \ra \Real$ be a symmetric positive semi-definite kernel and for any $A \subset \cX$, let $K_A$ denotes the induced kernel matrix.
\begin{mylemma}[MIG under GP prior and additive Gaussian noise \cite{srinivas2010gaussian}]
\label{lem:info-gain}
Let $f \sim GP_{\cX}(0,k)$ be a sample from a Gaussian process over $\cX$ and $Y_A$ denote a noisy version of $f_A$ obtained by passing $f_A$ through a channel that adds iid $\cN(0,\lambda)$ noise to each element of $f_A$. Then, 
\beqn
\gamma_t  \equiv \gamma_t(k,\cX) =  \max_{A \subset \cX : \abs{A}=t} \frac{1}{2} \ln \abs{I + \lambda^{-1}K_A}.
\eeqn
\end{mylemma}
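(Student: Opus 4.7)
The plan is to directly compute the mutual information $I(f_A; Y_A)$ for each finite subset $A \subset \cX$ of size $t$ and then take the maximum as prescribed by Definition~\ref{def:mig}. The essential observation is that in this setup everything is jointly Gaussian: since $f \sim GP_{\cX}(0,k)$, the restriction $f_A \in \Real^t$ has distribution $\cN(0, K_A)$; the observation vector is $Y_A = f_A + \epsilon_A$ where $\epsilon_A \sim \cN(0, \lambda I_t)$ is independent of $f_A$; and therefore $(f_A, Y_A)$ is jointly Gaussian with a known block covariance structure. This lets us evaluate the mutual information in closed form.

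First I would use the identity $I(f_A; Y_A) = h(Y_A) - h(Y_A \mid f_A)$. The marginal $Y_A \sim \cN(0, K_A + \lambda I_t)$ gives $h(Y_A) = \tfrac{1}{2}\ln\big((2\pi e)^t \abs{K_A + \lambda I_t}\big)$ by the standard formula for the differential entropy of a multivariate Gaussian. For the conditional term, observe that conditioned on $f_A$, the vector $Y_A$ is distributed as $\cN(f_A, \lambda I_t)$, whose entropy is independent of the conditioning value, so $h(Y_A \mid f_A) = \tfrac{1}{2}\ln\big((2\pi e \lambda)^t\big)$. Subtracting, the $(2\pi e)^t$ terms cancel and we obtain
\beqn
I(f_A; Y_A) \;=\; \tfrac{1}{2}\ln \frac{\abs{K_A + \lambda I_t}}{\lambda^t} \;=\; \tfrac{1}{2}\ln\abs{I_t + \lambda^{-1} K_A},
\eeqn
where the last equality factors $\lambda$ out of the determinant.

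Finally, taking the maximum over all $A \subset \cX$ with $\abs{A} = t$ and invoking Definition~\ref{def:mig} yields the claimed formula for $\gamma_t$. There is no real obstacle in this argument; it is essentially a textbook computation, and the only subtlety worth flagging is that the joint Gaussianity of $(f_A, Y_A)$ follows from $f$ being a Gaussian process on $\cX$, which is what makes the conditional entropy depend only on the noise covariance $\lambda I_t$ and not on the kernel matrix $K_A$.
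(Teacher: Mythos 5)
Your proof is correct and is exactly the standard argument: the paper itself does not prove this lemma but simply cites \citet{srinivas2010gaussian}, and the computation you give --- $I(f_A;Y_A)=h(Y_A)-h(Y_A\mid f_A)$ with $Y_A\sim\cN(0,K_A+\lambda I_t)$ and $h(Y_A\mid f_A)=\tfrac{1}{2}\ln\left((2\pi e\lambda)^t\right)$ --- is precisely the derivation in that reference. Nothing is missing.
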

\citet{srinivas2010gaussian} proved upper bounds over $\gamma_t$ for commonly used kernels. 
The bounds are given in Lemma \ref{lem:info-gain-bound}.
\begin{mylemma}[MIG for common kernels \cite{srinivas2010gaussian}]
\label{lem:info-gain-bound}
Let $\cX$ be a compact and convex subset of $\Real^d$ and the kernel $k$ satisfies $k(x,x') \le 1$ for all $x,x' \in \cX$. Then for
\begin{itemize}
\item Linear kernel: $\gamma_t=O(d\ln t)$.
\item Squared Exponential kernel: $\gamma_t=O\left((\ln t)^{d+1}\right)$.
\item Mat$\acute{e}$rn kernel: $\gamma_t=O\left(t^{\frac{d(d+1)}{2\nu+d(d+1)}}\ln t\right)$.
\end{itemize}
\end{mylemma}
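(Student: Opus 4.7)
The plan is to invoke Lemma \ref{lem:info-gain}, which identifies $\gamma_t$ with the maximum of $\tfrac{1}{2}\ln|I_t+\lambda^{-1}K_A|$ over $A\subset\cX$ with $|A|=t$, and then bound this log-determinant kernel by kernel. Writing $\ln|I_t+\lambda^{-1}K_A|=\sum_{i=1}^t\ln(1+\lambda^{-1}\mu_i(K_A))$ for the eigenvalues $\mu_i(K_A)$, it suffices to control these eigenvalues. The linear case is immediate: $k(x,x')=x^\top x'$ on $\cX\subset\Real^d$ makes $K_A$ a rank-at-most-$d$ matrix, so at most $d$ of its eigenvalues are nonzero, and each is bounded by $\operatorname{tr}(K_A)\le t$ since $k(x,x)\le 1$. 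This yields $\gamma_t\le\tfrac{d}{2}\ln(1+t/\lambda)=O(d\ln t)$ at once.

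For the SE and Mat\'ern kernels I would pass to the spectral side: invoke Mercer's theorem on the compact $\cX$ to write $k(x,x')=\sum_{j\ge1}\hat\lambda_j\phi_j(x)\phi_j(x')$ with eigenvalues $\hat\lambda_1\ge\hat\lambda_2\ge\cdots$ and orthonormal $\phi_j$'s in $L^2(\cX,\mu)$ under the uniform measure. Then $K_A=\sum_j\hat\lambda_j\phi_{j,A}\phi_{j,A}^\top$ with $\phi_{j,A}$ the evaluation vector of $\phi_j$ on $A$. The workhorse step is a head--tail split at some integer $B_T$: using the concavity of $\ln(1+\cdot)$ and the trace bound on the tail, one derives (in the spirit of Srinivas et al.)
\beqn
\gamma_t\;\le\;\tfrac{B_T}{2}\,\ln\!\left(1+\tfrac{t\,\hat\lambda_1}{\lambda}\right)\;+\;\tfrac{t}{2\lambda}\sum_{j>B_T}\hat\lambda_j\,\|\phi_j\|_\infty^2\;+\;O(1),
\eeqn
which decouples the spectral decay of $k$ from the dependence on $t$.

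The rest is optimization of $B_T$ against the kernel's spectral decay. For the SE kernel on $[0,1]^d$ the Mercer eigenvalues decay exponentially, roughly $\hat\lambda_j\lesssim\exp(-c\,j^{1/d})$, so choosing $B_T\asymp(\ln t)^d$ makes the tail term $O(1)$ while the head contributes $O(B_T\ln t)=O((\ln t)^{d+1})$. For the Mat\'ern kernel with smoothness $\nu$ on a bounded convex domain, $\hat\lambda_j\lesssim j^{-(2\nu+d)/d}$, so the tail sum decays only polynomially in $B_T$; balancing $B_T\ln t$ against $t\cdot B_T^{-2\nu/d}$ (after absorbing the sup-norm factors of $\phi_j$) yields the stated $O\bigl(t^{d(d+1)/(2\nu+d(d+1))}\ln t\bigr)$ rate.

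The main obstacle is the step that couples $\hat\lambda_j$ with $\|\phi_j\|_\infty^2$ rather than the simpler trace sum $\sum_{j>B_T}\hat\lambda_j$. Obtaining uniform sup-norm control of the Mercer eigenfunctions of stationary kernels on a compact box is what forces the apparently loose $d(d+1)$ factor in the Mat\'ern exponent (sharper $d/(2\nu+d)$-type bounds exist but require finer spectral arguments, e.g.\ Weyl-type asymptotics). Once these sup-norm estimates are in hand, the rest of the argument is a direct $B_T$-optimization, and the three bullet points follow.
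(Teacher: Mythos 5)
The paper does not actually prove this lemma: it is imported verbatim from Srinivas et al.\ (2010), and the paper's ``proof'' is the citation itself. Your sketch is therefore best judged against the argument in that reference, which it reconstructs in outline. The linear case you give is complete and correct ($K_A$ has rank at most $d$, each nonzero eigenvalue is at most $\mathrm{tr}(K_A)\le t$, hence $\gamma_t\le \frac{d}{2}\ln(1+t/\lambda)$), and the head--tail split over the Mercer spectrum with a cutoff $B_T$ tuned to the eigenvalue decay is exactly the mechanism behind the SE and Mat\'ern bounds.

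There is, however, one concrete gap in the Mat\'ern case. The balance you propose, $B_T\ln t$ against $t\cdot B_T^{-2\nu/d}$, solves to $B_T\asymp (t/\ln t)^{d/(2\nu+d)}$ and would yield a bound of order $t^{d/(2\nu+d)}$ up to logarithms --- which is \emph{not} the stated $t^{d(d+1)/(2\nu+d(d+1))}$ and is in fact sharper than what the cited argument delivers. In Srinivas et al.\ the tail term is not $t\sum_{j>B_T}\hat\lambda_j$ but carries an extra factor of order $t^{\tau}$ with $\tau=d$ (their Theorem~5), coming from the discretization of the compact domain into $n_T=O(t^{\tau}\log t)$ points needed to control the eigenfunctions uniformly; it is the resulting $t^{d+1}\sum_{j>B_T}\hat\lambda_j$ that, balanced against $B_T\ln t$ with $\hat\lambda_j\lesssim j^{-(2\nu+d)/d}$, produces the $d(d+1)$ exponent. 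You correctly identify the sup-norm control of the eigenfunctions as the obstacle and even remark that it ``forces'' the $d(d+1)$ factor, but the balancing equation you actually write down does not reflect this and would not give the claimed rate. For the SE kernel the issue is harmless because the exponential spectral decay absorbs any polynomial-in-$t$ prefactor on the tail, so your choice $B_T\asymp(\ln t)^d$ still gives $O((\ln t)^{d+1})$; for Mat\'ern the prefactor is exactly what determines the exponent, so this step needs to be repaired before the third bullet follows.
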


Note that, MIG depends only \textit{sublinearly} on the number of observations $t$ for all these kernels and it will serve as a key instrument to obtain our regret bounds by virtue of Lemma \ref{lem:info-gain} and \ref{lem:pred-var}.

Now, observe that
any kernel function $k : \cX \times \cX \ra \Real, \cX \subset \Real^d$ is associated with a non-linear feature map $\phi: \cX \ra \cH_k(\cX)$ such that $k(x,y)=\inner{\phi(x)}{\phi(y)}_{\cH}$, where $\inner{\cdot}{\cdot}_{\cH}$ denotes the inner product in the RKHS $\cH_k(\cX)$ and $\norm{\cdot}_{\cH}$ denotes the corresponding norm. Observe that for any $h \in \cH_k(\cX)$, $h(x)=\inner{h}{\phi(x)}_\cH$ by the reproducing property. For a set $\lbrace x_1,\ldots,x_t \rbrace \subset \cX$ define the operator $\Phi_t : \cH_k(\cX) \ra \Real^t$ such that for any $h \in \cH_k(\cX)$, $\Phi_t h = [\inner{\phi(x_1)}{h}_\cH,\ldots,\inner{\phi(x_t)}{h}_\cH]^T$, and denote its adjoint by $\Phi_t^T : \Real^t \ra \cH_k(\cX)$. By reproducing property $\phi_t h = [h(x_1),\ldots,h(x_t)]^T$. For any $\lambda > 0$, define  $V_t=\Phi_t^T \Phi_t + \lambda I_{\cH}$, where $I_{\cH} : \cH_k(\cX) \ra \cH_k(\cX)$ denotes the identity operator. For a positive definite operator $V:\cH_k(\cX) \ra \cH_k(\cX)$, define the inner product $\inner{\cdot}{\cdot}_V \bydef \inner{\cdot}{V\cdot}_\cH$
      with corresponding norm $\norm{\cdot}_V$. Observe that, under this definition, the posterior variance $\sigma_t^2(x) = \lambda\norm{\phi(x)}^2_{V_t^{-1}}$.
\begin{mylemma}[Sum of predictive variances and MIG]
If $k(x,x) \le 1$ for all $x \in \cX$, then
\beqn
\label{eqn:info-gain-two}
\sum_{s=1}^{t}\sigma^2_{s-1}(x_s) \le 2\left(1+\lambda\right)\gamma_t.
\eeqn
\label{lem:pred-var}
\end{mylemma}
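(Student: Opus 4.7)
The plan is to combine two standard ingredients: (i) a pointwise inequality relating each posterior variance to the logarithm of a matrix determinant increment, and (ii) the telescoping identification of the sum of such log increments with the maximum information gain $\gamma_t$ (Lemma \ref{lem:info-gain}).

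First I would observe that since $k(x,x) \le 1$ for all $x \in \cX$, the posterior variance satisfies $\sigma^2_{s-1}(x_s) \le k(x_s,x_s) \le 1$, so $\sigma^2_{s-1}(x_s)/\lambda \le 1/\lambda$. Next I would invoke the elementary inequality $u \le (1+M)\ln(1+u)$ for $u \in [0,M]$ (which follows from $\ln(1+u) \ge u/(1+u)$), applied with $u = \sigma^2_{s-1}(x_s)/\lambda$ and $M = 1/\lambda$. This yields the per-round bound
\beqn
\sigma^2_{s-1}(x_s) \le (1+\lambda)\,\ln\!\left(1 + \lambda^{-1}\sigma^2_{s-1}(x_s)\right).
\eeqn

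Second, I would sum this over $s = 1,\ldots,t$ and identify the right-hand side with a log-determinant. Using the standard recursion for posterior variances of a $GP_\cX(0,k)$ under $\cN(0,\lambda)$ observation noise, one has $|I_s + \lambda^{-1} K_s| = |I_{s-1} + \lambda^{-1} K_{s-1}|\cdot (1 + \lambda^{-1}\sigma^2_{s-1}(x_s))$, obtainable by a Schur-complement/matrix-determinant lemma expansion of the block matrix $K_s + \lambda I_s$. Telescoping gives
\beqn
\sum_{s=1}^{t} \ln\!\left(1 + \lambda^{-1}\sigma^2_{s-1}(x_s)\right) = \ln \big| I_t + \lambda^{-1} K_t \big|,
\eeqn
where $K_t$ is the kernel matrix on the played points $\{x_1,\ldots,x_t\}$. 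By Lemma \ref{lem:info-gain} (which says $\gamma_t$ is the maximum of $\tfrac{1}{2}\ln|I + \lambda^{-1}K_A|$ over $|A|=t$), the right-hand side is at most $2\gamma_t$. Combining with the per-round inequality yields $\sum_{s=1}^{t}\sigma^2_{s-1}(x_s) \le 2(1+\lambda)\gamma_t$, as claimed.

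There is no real obstacle: both ingredients are routine. The only mild subtlety is the determinant telescoping step, which I would justify either directly by the block-matrix determinant identity or, equivalently, by noting that the information-theoretic chain rule gives $I(f_{x_{1:t}};Y_{x_{1:t}}) = \tfrac{1}{2}\sum_{s=1}^{t}\ln(1+\lambda^{-1}\sigma^2_{s-1}(x_s))$ under the GP prior with additive $\cN(0,\lambda)$ noise, which is bounded by $\gamma_t$ by definition.
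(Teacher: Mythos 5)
Your proof is correct and reaches the paper's bound with the same constant $2(1+\lambda)$, but the per-round step is handled by a different mechanism. The paper first shifts the time index, showing $\sigma_{s-1}^2(x_s) \le (1+\tfrac{1}{\lambda})\sigma_s^2(x_s)$ via a Sherman--Morrison update of $V_s^{-1}$, and then identifies $\tfrac{1}{\lambda}\sigma_s^2(x_s)$ with the trace quantity $V_s^{-1}\bullet(V_s-V_{s-1})$, which it controls by $\ln\frac{|V_s|}{|V_{s-1}|}$ using the trace--log-determinant lemma (Lemma \ref{lem:useful-1}, from Hazan et al.); the telescoping to $\ln|I_t+\lambda^{-1}K_t|$ and the appeal to Lemma \ref{lem:info-gain} are then the same as in your argument. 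You instead bound $\sigma_{s-1}^2(x_s)$ directly by $(1+\lambda)\ln\left(1+\lambda^{-1}\sigma_{s-1}^2(x_s)\right)$ via the elementary inequality $u \le (1+M)\ln(1+u)$ for $u\in[0,M]$, which is the classical route of Srinivas et al. Since $\ln\frac{|V_s|}{|V_{s-1}|}=\ln\left(1+\lambda^{-1}\sigma_{s-1}^2(x_s)\right)$, both arguments establish the identical per-round inequality $\sigma_{s-1}^2(x_s)\le(1+\lambda)\ln\frac{|V_s|}{|V_{s-1}|}$, so the difference is purely in the bookkeeping: your version is more elementary and self-contained (no operator-valued Sherman--Morrison step and no trace lemma), whereas the paper's version reuses machinery it has already set up for other purposes, in particular the relation $\sigma_{s-1}^2(x)\le(1+\tfrac{1}{\lambda})\sigma_s^2(x)$. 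Your Schur-complement (equivalently, chain-rule-of-mutual-information) justification of the determinant telescoping is also sound, and your final passage from $\ln|I_t+\lambda^{-1}K_t|$ to $2\gamma_t$ matches the paper's.
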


\begin{proof}
Observe that $V_t=V_{t-1}+\phi(x_t)\phi(x_t)^T$. Therefore, by Sherman–Morrison-Woodbury matrix identity, we have $V_t^{-1}=V_{t-1}^{-1}-\frac{V_{t-1}^{-1}\phi(x_t)\phi(x_t)^TV_{t-1}^{-1}}{1+\phi(x_t)^TV_{t-1}^{-1}\phi(x_t)}$. This, in turn, implies that
\beqn
\norm{\phi(x)}^2_{V_t^{-1}}=\norm{\phi(x)}^2_{V_{t-1}^{-1}}-\frac{\inner{\phi(x)}{\phi(x_t)}^2_{V_{t-1}^{-1}}}{1+\norm{\phi(x_t)}^2_{V_{t-1}^{-1}}}\stackrel{(a)}{\ge}\norm{\phi(x)}^2_{V_{t-1}^{-1}}\left(1- \frac{\norm{\phi(x_t)}^2_{V_{t-1}^{-1}}}{1+\norm{\phi(x_t)}^2_{V_{t-1}^{-1}}}\right)=\frac{\norm{\phi(x)}^2_{V_{t-1}^{-1}}}{1+\norm{\phi(x_t)}^2_{V_{t-1}^{-1}}}
\eeqn
where $(a)$ follows from Cauchy-Schwartz inequality. Since $V_{t-1} \succeq \lambda I_{\cH}$, we have $\norm{\phi(x_t)}^2_{V_{t-1}^{-1}} \le \frac{1}{\lambda}\norm{\phi(x_t)}^2_\cH = \frac{1}{\lambda}k(x_t,x_t) \le \frac{1}{\lambda}$. This implies that $\norm{\phi(x)}^2_{V_{t-1}^{-1}} \le (1+\frac{1}{\lambda})\norm{\phi(x)}^2_{V_{t}^{-1}}$ and therefore 
\beq
\sigma_{t-1}^2(x)\le \left(1+\frac{1}{\lambda}\right)\sigma_{t}^2(x) \; \text{for all}\; x \in \cX. 
\label{eqn:comb-sd-1}
\eeq
Observe that $\phi(x_t)^TV_t^{-1}\phi(x_t)=V_t^{-1} \bullet \phi(x_t)\phi(x_t)^T = V_t^{-1} \bullet (V_t-V_{t-1})$ since for any $a \in \Real^n$ and $B \in \Real^{n \times n}$, $a^TBa=B\bullet aa^T$. Then from Lemma \ref{lem:useful-1}, we have $\frac{1}{\lambda}\sigma_t^2(x_t) = \ln \frac{|V_t|}{|V_{t-1}|}$ and thus, in turn,
\beq
\frac{1}{\lambda}\sum_{s=1}^{t}\sigma_s^2(x_s) \le \ln \frac{\abs{V_t}}{\abs{V_0}} = \ln \abs{\lambda^{-1}\Phi_t^T\Phi_t+I_\cH} = \ln \abs{\lambda^{-1}\Phi_t\Phi_t^T+I_t} = \ln \abs{\lambda^{-1}K_t+I_t}.
\label{eqn:comb-sd-2}
\eeq
Combining \ref{eqn:comb-sd-1} and \ref{eqn:comb-sd-2}, we get
\beqn
\sum_{s=1}^{t}\sigma^2_{s-1}(x_s) \le \left(1+\frac{1}{\lambda}\right)\sum_{s=1}^{t}\sigma^2_{s}(x_s) \le (1+\lambda)\ln \abs{\lambda^{-1}K_t+I_t}.
\eeqn
Now the result follows from Lemma \ref{lem:info-gain}.
\end{proof}

 \section{Analysis of TGP-UCB}
   
The following lemma states a self-normalized concentration inequality for RKHS-valued martingales.
   
   \begin{mylemma}[RKHS-valued martingale control \cite{durand2018streaming}]
         \label{lem:sub-Gaussian-noise-bound}
         Let $\lbrace z_t \rbrace_{t \ge 1}$ be an $\Real^d$-valued discrete time stochastic processes such that $z_t$ is predictable with respect to a filtration $\lbrace \cG_t\rbrace_{t \ge 0}$, i.e., $z_t$ is $\cG_{t-1}$-measurable for all $t \ge 1$. Let $\lbrace w_t \rbrace_{t \ge 1}$ be a real-valued stochastic process such that for all $t \ge 1$, $w_t$ is (a) $\cG_t$-measurable, and (b) $R$-sub-Gaussian conditionally on $\cG_{t-1}$ for some $R > 0$.
         Then, for any $\delta \in (0,1]$, with probability at least $1-\delta$, uniformly over all $t \ge 1$,
         \beqn
         \norm{\sum_{\tau=1}^{t}w_\tau \phi(z_\tau) }_{Z_t^{-1}}\le R\sqrt{2\left(\frac{1}{2}\ln \frac{\abs{Z_t}}{\abs{Z}} +\ln(1/\delta)\right)}.
         \eeqn
         where $Z_t=Z+\sum_{\tau=1}^{t}\phi(z_\tau)\phi(z_\tau)^T$ and $Z:\cH_k(\Real^d)\ra \cH_k(\Real^d)$ is a positive definite operator.
         \label{lem:hilbert-mart-control}
         \end{mylemma}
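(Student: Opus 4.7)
The plan is to adapt the ``method of mixtures'' / pseudo-maximization argument of Pe\~{n}a--Lai--Shao and Abbasi-Yadkori et al.\ (2011) to the RKHS setting. Write $S_t := \sum_{\tau=1}^{t} w_\tau \phi(z_\tau) \in \cH_k(\Real^d)$ for the RKHS-valued martingale of interest and $H_t := \sum_{\tau=1}^{t} \phi(z_\tau)\phi(z_\tau)^T$, so that $Z_t = Z + H_t$. The goal is to produce a single nonnegative supermartingale whose size controls $\norm{S_t}_{Z_t^{-1}}^2$ uniformly in $t$, and then peel it with Ville's maximal inequality.

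First I would fix a direction $\eta \in \cH_k(\Real^d)$ and introduce the exponential process
\beqn
M_t^{\eta} := \exp\Big(\inner{\eta}{S_t}_{\cH} - \tfrac{R^2}{2}\norm{\eta}_{H_t}^2\Big).
\eeqn
Predictability of $\phi(z_\tau)$ combined with the conditional $R$-sub-Gaussianity of $w_\tau$ yields $\expect{\exp(w_\tau \inner{\eta}{\phi(z_\tau)}_{\cH}) \given \cG_{\tau-1}} \le \exp\big(\tfrac{R^2}{2}\inner{\eta}{\phi(z_\tau)}_{\cH}^2\big)$, and a short telescoping check shows $(M_t^{\eta})$ is a nonnegative $\cG_t$-supermartingale with $M_0^{\eta}=1$ for every fixed $\eta$. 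To eliminate $\eta$ (a union bound is hopeless in infinite dimensions), I would mix $M_t^{\eta}$ against the centered Gaussian measure $\pi$ on $\cH_k(\Real^d)$ with covariance operator $R^{-2} Z^{-1}$. Completing the square in the exponent and absorbing the Gaussian normalization constant gives the closed form
\beqn
\bar{M}_t := \int M_t^{\eta} \, d\pi(\eta) \;=\; \sqrt{\tfrac{\abs{Z}}{\abs{Z_t}}}\, \exp\Big(\tfrac{1}{2R^2}\norm{S_t}_{Z_t^{-1}}^2\Big),
\eeqn
and Fubini applied conditionally confirms that $(\bar{M}_t)$ is itself a nonnegative supermartingale with $\bar{M}_0 = 1$. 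Ville's inequality then yields $\prob{\sup_{t\ge 0} \bar{M}_t \ge 1/\delta} \le \delta$, and solving the inequality $\bar{M}_t < 1/\delta$ for $\norm{S_t}_{Z_t^{-1}}$ produces exactly $R\sqrt{2\big(\tfrac{1}{2}\ln(\abs{Z_t}/\abs{Z}) + \ln(1/\delta)\big)}$, uniformly in $t$.

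The main obstacle is making the Gaussian-mixture step rigorous in an infinite-dimensional RKHS: $\cH_k(\Real^d)$ carries no Lebesgue reference measure, and the operator ``determinant'' $\abs{Z_t}$ requires interpretation. I would handle this by observing that $S_t$ always lies in the finite-dimensional (random) subspace $\mathrm{span}\{\phi(z_1),\ldots,\phi(z_t)\}$, so the quantities of interest reduce to ordinary finite-dimensional objects: using the identity $(\Phi_t^T \Phi_t + \lambda I_{\cH})^{-1} \Phi_t^T = \Phi_t^T (\Phi_t \Phi_t^T + \lambda I_t)^{-1}$ from Lemma~\ref{lem:dim-change} (together with a change of basis to handle general positive-definite $Z$), one rewrites $\norm{S_t}_{Z_t^{-1}}^2$ as a finite-dimensional quadratic form in $(w_1,\ldots,w_t)$ built from the Gram matrix $K_t = \Phi_t \Phi_t^T$, and $\abs{Z_t}/\abs{Z}$ as the ordinary $t \times t$ determinant of $I_t$ plus the corresponding normalized Gram matrix. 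The mixture argument then becomes the standard finite-dimensional Gaussian integration of Abbasi-Yadkori et al.; alternatively one can define $\pi$ as a cylindrical Gaussian measure from the outset and verify the same identities directly. Re-expressing the finite-dimensional conclusion in operator notation recovers the stated bound.
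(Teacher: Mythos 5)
The paper does not prove this lemma; it is imported verbatim from the cited reference (Durand et al.), so there is no in-paper proof to compare against. Your method-of-mixtures argument is the canonical proof of this self-normalized bound (Pe\~{n}a--Lai--Shao / Abbasi-Yadkori et al., lifted to the RKHS as in the cited source), and the steps you give are correct: the per-direction exponential supermartingale, the Gaussian mixture with covariance $R^{-2}Z^{-1}$ yielding $\bar{M}_t = \sqrt{\abs{Z}/\abs{Z_t}}\exp\bigl(\tfrac{1}{2R^2}\norm{S_t}_{Z_t^{-1}}^2\bigr)$, and Ville's inequality. You also correctly identify the one genuine technical obstacle --- that no countably additive Gaussian measure with covariance proportional to $Z^{-1}$ exists on an infinite-dimensional $\cH_k(\Real^d)$, and that $\abs{Z_t}/\abs{Z}$ must be read as a finite-rank perturbation determinant --- and your proposed fixes (reduction via the kernel-trick identity of Lemma~\ref{lem:dim-change} to a finite-dimensional quadratic form in $(w_1,\ldots,w_t)$, or equivalently verifying the supermartingale property of the closed-form $\bar{M}_t$ directly by a conditional one-dimensional Gaussian integration at each rank-one update) are exactly how the literature closes this gap. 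The only caveat is that the finite-dimensional reduction must be done consistently across $t$ (the span of $\{\phi(z_\tau)\}$ is random and growing), which is why the direct verification of the closed-form $\bar{M}_t$ as a supermartingale is the cleaner of your two routes; but this is a matter of write-up, not a missing idea.
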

         Observe that $\sum_{\tau=1}^{t}w_\tau \phi(z_\tau)$ is $\cG_t$-measurable and $\expect{\sum_{\tau=1}^{t}w_\tau \phi(z_\tau)|\cG_{t-1}}=\sum_{\tau=1}^{t-1}w_\tau \phi(z_\tau)$. The process $\left(\sum_{\tau=1}^{t}w_\tau \phi(z_\tau)\right)_{t \ge 1}$ is thus a martingale with respect to the filtration $(\cG_t)_{t \ge 0}$ with values in the RKHS $\cH_k(\cX)$, whose deviation is measured by the norm weighted by $Z_t^{-1}$, which is derived from the process itself. Hence, the name self-normalized concentration inequality.
Now, we will show that $f$ lies in the confidence sets constructed by TGP-UCB with high probability.
\begin{mylemma}[Confidence sets of TGP-UCB contains $f$]
Let $f \in \cH_k(\cX)$, $\norm{f}_\cH \le B$ and $k(x,x) \le 1$ for all $x \in \cX$. Let $\expect{\abs{y_t}^{1+\alpha}|\cF_{t-1}} \le v < \infty$ for some $\alpha \in (0,1]$ and for all $t \ge 1$. Then, for any $\delta \in (0,1]$, TGP-UCB, with $b_t=v^{\frac{1}{1+\alpha}}t^{\frac{1}{2(1+\alpha)}}$ and $\beta_{t+1} = B + \frac{3}{\sqrt{\lambda}} \;v^{\frac{1}{1+\alpha}}t^{\frac{1}{2(1+\alpha)}}\sqrt{\ln \abs{I_t+\lambda^{-1}K_t}+2\ln(1/\delta)}$, ensures, with probability at least $1-\delta$, uniformly over all $x \in \cX$ and $t \ge 1$, that
\beqn
\abs{f(x)-\hat{\mu}_{t-1}(x)} \le \beta_t \sigma_{t-1}(x).
\eeqn
\label{lem:func-conc-tgp-ucb}
\end{mylemma}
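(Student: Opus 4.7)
The plan is to work in the RKHS feature space induced by $k$ and split the estimation error $f(x)-\hat{\mu}_{t-1}(x)$ into three parts: a regularization bias from ridge, a centered martingale noise component, and a deterministic bias due to truncation. Using the adjoint identity of Lemma \ref{lem:dim-change}, the predictor can be rewritten as $\hat{\mu}_{t-1}(x) = \inner{\phi(x)}{V_{t-1}^{-1}\Phi_{t-1}^T\hat{Y}_{t-1}}_{\cH}$ with $V_{t-1} = \Phi_{t-1}^T\Phi_{t-1}+\lambda I_{\cH}$; since $V_{t-1} f = \lambda f + \Phi_{t-1}^T f_{t-1}$ for $f_{t-1} = [f(x_1),\dots,f(x_{t-1})]^T$, we get $f = \lambda V_{t-1}^{-1} f + V_{t-1}^{-1}\Phi_{t-1}^T f_{t-1}$ and therefore
\[
f(x)-\hat{\mu}_{t-1}(x) = \inner{\phi(x)}{\lambda V_{t-1}^{-1} f}_{\cH} + \inner{\phi(x)}{V_{t-1}^{-1}\Phi_{t-1}^T (f_{t-1}-\hat{Y}_{t-1})}_{\cH}.
\]
A weighted Cauchy--Schwarz inequality with the $V_{t-1}$-inner product factors out $\norm{\phi(x)}_{V_{t-1}^{-1}} = \sigma_{t-1}(x)/\sqrt{\lambda}$ from each summand, reducing the task to bounding the corresponding $V_{t-1}$-norms.

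The regularization-bias norm satisfies $\norm{\lambda V_{t-1}^{-1} f}_{V_{t-1}}^2 = \lambda^2 \inner{f}{V_{t-1}^{-1}f}_{\cH} \le \lambda \norm{f}_{\cH}^2 \le \lambda B^2$ since $V_{t-1}^{-1}\preceq\lambda^{-1}I_{\cH}$, yielding the $B\sigma_{t-1}(x)$ summand of $\beta_t\sigma_{t-1}(x)$. For the remaining term, use the identity $\norm{V_{t-1}^{-1}\Phi_{t-1}^T \xi}_{V_{t-1}} = \norm{\Phi_{t-1}^T \xi}_{V_{t-1}^{-1}}$ and decompose $\xi_\tau := \hat{y}_\tau - f(x_\tau) = w_\tau + e_\tau$, where $w_\tau := \hat{y}_\tau - \expect{\hat{y}_\tau\mid\cF_{\tau-1}}$ is a martingale difference with $\abs{w_\tau}\le 2b_\tau\le 2b_t$ (hence conditionally $2b_t$-sub-Gaussian by Hoeffding), and $e_\tau := \expect{\hat{y}_\tau\mid\cF_{\tau-1}} - f(x_\tau) = -\expect{y_\tau \mathds{1}_{\abs{y_\tau}>b_\tau}\mid\cF_{\tau-1}}$ is the truncation bias. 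For the martingale piece, Lemma \ref{lem:hilbert-mart-control} with $Z=\lambda I_{\cH}$ together with Sylvester's identity $\abs{V_{t-1}}/\abs{\lambda I_{\cH}} = \abs{I_{t-1}+\lambda^{-1}K_{t-1}}$ gives, with probability at least $1-\delta$,
\[
\norm{\textstyle\sum_{\tau<t} w_\tau\,\phi(x_\tau)}_{V_{t-1}^{-1}} \le 2b_t\sqrt{\ln\abs{I_{t-1}+\lambda^{-1}K_{t-1}}+2\ln(1/\delta)}.
\]
For the deterministic-bias piece, the moment condition yields $\abs{e_\tau}\le \expect{\abs{y_\tau}\mathds{1}_{\abs{y_\tau}>b_\tau}\mid\cF_{\tau-1}} \le v/b_\tau^{\alpha}$; combined with the operator identity $\Phi_{t-1}V_{t-1}^{-1}\Phi_{t-1}^T = K_{t-1}(K_{t-1}+\lambda I_{t-1})^{-1} \preceq I_{t-1}$ (again Lemma \ref{lem:dim-change}), the corresponding $V_{t-1}^{-1}$-norm is at most $\bigl(\sum_{\tau<t}(v/b_\tau^{\alpha})^2\bigr)^{1/2}$. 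Plugging in $b_\tau = v^{1/(1+\alpha)}\tau^{1/(2(1+\alpha))}$ and using $\sum_{\tau<t}\tau^{-\alpha/(1+\alpha)}\le (1+\alpha)t^{1/(1+\alpha)}$ collapses this to $\sqrt{1+\alpha}\,b_t\le \sqrt{2}\,b_t$. Adding the martingale and bias contributions and absorbing $2+\sqrt{2}$ into the constant $3$ (valid once the log factor exceeds a small absolute constant) yields the advertised $(3/\sqrt{\lambda})b_t\sqrt{\ln\abs{I_{t-1}+\lambda^{-1}K_{t-1}}+2\ln(1/\delta)}$ piece of $\beta_t$.

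The main obstacle is reconciling the time-varying sub-Gaussian constant $R = 2b_t$ with the fixed-$R$ form of Lemma \ref{lem:hilbert-mart-control}, which is uniform in $t$ only for a predetermined $R$. I would handle this by either (a) applying the lemma to the process obtained after dividing each increment by its predictable envelope $2b_\tau$, or (b) a union bound over $t$ with $\delta_t = 6\delta/(\pi^2 t^2)$, which adds only a lower-order $\ln t$ term inside the square root and can be absorbed into the constants. Either route preserves the scaling of $\beta_{t+1}$ and delivers the confidence inequality simultaneously for all $t\ge 1$ and $x\in\cX$.
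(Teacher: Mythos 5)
Your proposal is correct and follows essentially the same route as the paper's proof: the same split into ridge regularization bias, a centered martingale part controlled by the self-normalized inequality of Lemma \ref{lem:hilbert-mart-control}, and a truncation bias bounded via the $(1+\alpha)$-moment condition, with the identical choice of $b_t$ collapsing the bias sum to $\sqrt{2}\,b_t$. If anything you are more careful than the paper, which applies Lemma \ref{lem:hilbert-mart-control} with the time-varying constant $R=2b_t$ without comment, whereas you explicitly flag this and offer two valid repairs.
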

\begin{proof}
First, we define $\alpha_t(x)=k_t(x)^T(K_t+\lambda I_t)^{-1}f_t$, where $f_t=[f(x_1),\ldots,f(x_t)]^T$ is a vector containing $f$'s evaluations up to round $t$. By reproducing property, $\alpha_t(x)=\inner{\phi(x)}{ \Phi_t^T(\Phi_t\Phi_t^T+\lambda I_t)^{-1}\Phi_tf}_\cH$. Then, we have
      \beqn
      f(x)-\alpha_t(x)=\inner{\phi(x)}{\left(I_\cH - \Phi_t^T(\Phi_t\Phi_t^T+\lambda I_t)^{-1}\Phi_t\right) f}_\cH \stackrel{(a)}{=}\lambda \inner{\phi(x)}{f}_{V_t^{-1}} = \lambda \inner{V_t^{-1/2} \phi(x)}{V_t^{-1/2}f}_\cH,
      \eeqn
      where $(a)$ follows from \ref{eqn:dim-change-2}. By Cauchy-Schwartz inequality, we have for any $x \in \cX$
      \beqa
      \abs{f(x)-\alpha_t(x)} & \le & \lambda \norm{V_t^{-1/2}\phi(x)}_\cH \norm{V_t^{-1/2}f}_\cH \nonumber\\
      & \stackrel{(a)}{\le} & \lambda^{1/2}\norm{\phi(x)}_{V_t^{-1}} \norm{f}_\cH 
      \stackrel{(b)}{\le}  B\; \sigma_t(x). \label{eqn:func-error} 
      \eeqa
      Here in $(a)$ we have used the fact that $V_t^{-1} \preceq \lambda^{-1}I_\cH$, and hence, $\norm{V_t^{-1/2}f}_\cH \le \lambda^{-1/2}\norm{f}_{\cH}$. $(b)$ follows from $\norm{f}_\cH \le B$. Now,
      let $\hat{\eta}_t = \hat{y}_t-f(x_t),t=1,2,\ldots$ denotes the truncated noise and $\hat{N}_t=[\hat{\eta}_1,\ldots,\hat{\eta}_t]^T$ denotes the vector formed by the first $t$ of those. This implies $\hat{\mu}_t(x)=\alpha_t(x)+k_t(x)^T(K_t+\lambda I_t)^{-1}\hat{N}_t$. Thus
   \beqn
   k_t(x)^T(K_t+\lambda I_t)^{-1}\hat{N}_t = \inner{\phi(x)}{\Phi_t^T(\Phi_t\Phi_t^T+\lambda I_t)^{-1}\hat{N}_t}_\cH \stackrel{(a)}{=}\inner{\phi(x)}{\Phi_t^T\hat{N}_t}_{V_t^{-1}},
   \eeqn
   where $(a)$ uses equation \ref{eqn:dim-change-1}. By Cauchy-Schwartz inequality, we have for any $x \in \cX$
   \beq
   \abs{k_t(x)^T(K_t+\lambda I_t)^{-1}\hat{N}_t} \le \norm{\phi(x)}_{V_t^{-1}}\norm{\Phi_t^T\hat{N}_t}_{V_t^{-1}} = \lambda^{-1/2}\norm{\Phi_t^T\hat{N}_t}_{V_t^{-1}}\sigma_t(x).
   \label{eqn:trunc-noise-bound}
   \eeq
   Now, by triangle inequality, we have
   \beqn
   \abs{f(x)-\hat{\mu}_t(x)} \le  \abs{f(x)-\alpha_t(x)} + \abs{k_t(x)^T(K_t+\lambda I_t)^{-1}\hat{N}_t}.
   \eeqn
   Hence from equation \ref{eqn:func-error} and \ref{eqn:trunc-noise-bound}, we get
   \beq
   \abs{f(x)-\hat{\mu}_t(x)} \le \left(B+\lambda^{-1/2}\norm{\Phi_t^T\hat{N}_t}_{V_t^{-1}}\right)\sigma_t(x).
   \label{eqn:combine-1}
   \eeq
    Now, we define $\xi_t=\hat{\eta}_t-\expect{\hat{\eta}_t | \cF_{t-1}}$. Then, we have
   \beq
   \Phi_t^T\hat{N}_t=\sum_{\tau = 1}^{t}\hat{\eta}_\tau\phi(x_\tau) = \sum_{\tau = 1}^{t}\xi_\tau\phi(x_\tau) + \sum_{\tau = 1}^{t}\expect{\hat{\eta}_\tau | \cF_{\tau-1}}\phi(x_\tau).
   \label{eqn:combine-2} 
   \eeq
     Observe that $\xi_t = \hat{y}_t-\expect{\hat{y}_t|\cF_{t-1}}$, and hence $\abs{\xi_t} \le 2b_t$. This implies that $\xi_t$ is zero-mean $2b_t$-sub-Gaussian random variable conditioned on $\cF_{t-1}$. Further, observe that $\xi_t$ is $\cF_t$- measurable and $x_t$ is $\cF_{t-1}$- measurable. Hence,
    Lemma \ref{lem:hilbert-mart-control} implies that for any $\delta \in (0,1]$, with probability at least $1-\delta$, for all $t \in \mathbb{N}$:
         \beqa
         \norm{\sum_{\tau = 1}^{t}\xi_\tau\phi(x_\tau)}_{V_t^{-1}} &\le & 2b_t\sqrt{2\left(\frac{1}{2}\ln \abs{I_\cH+\lambda^{-1}\Phi_t^T\Phi_t}+\ln(1/\delta)\right)}\nonumber \\ &=& 2b_t \sqrt{2\left(\frac{1}{2}\ln \abs{I_t+\lambda^{-1}K_t}+\ln(1/\delta)\right)}
         \label{eqn:combine-4}
         \eeqa
 Now for any $a \in \Real^t$,
   \beqn
   \norm{\sum_{\tau = 1}^{t}a_\tau\phi(x_\tau)}^2_{V_t^{-1}} = \norm{\Phi_t^Ta}^2_{V_t^{-1}} = a^T\Phi_t(\Phi_t^T\Phi_t+\lambda I_\cH)^{-1}\Phi_t^Ta \stackrel{(a)}{=} a^T\Phi_t\Phi_t^T(\Phi_t\Phi_t^T+\lambda I_t)^{-1}a \stackrel{(b)}{\le} \norm{a}_2^2,
   \eeqn
   where $(a)$ follows from \ref{eqn:dim-change-1} and $(b)$ follows from the fact that $\Phi_t\Phi_t^T(\Phi_t\Phi_t^T+\lambda I_t)^{-1} \preceq I_t$. Therefore $\norm{\sum_{\tau = 1}^{t}\expect{\hat{\eta}_\tau | \cF_{\tau-1}}\phi(x_\tau)}_{V_t^{-1}}^2 \le \sum_{\tau = 1}^{t}\expect{\hat{\eta}_\tau | \cF_{\tau-1}}^2$. Further, observe that $\expect{\hat{\eta}_t|\cF_{t-1}}=\expect{y_t\mathds{1}_{\abs{y_t}\le b_t}|\cF_{t-1}}-f(x_t)=-\expect{y_t\mathds{1}_{\abs{y_t}> b_t}|\cF_{t-1}}$. This implies
   \beqn
    \norm{\sum_{\tau = 1}^{t}\expect{\hat{\eta}_\tau | \cF_{\tau-1}}\phi(x_\tau)}_{V_t^{-1}}^2 \le \sum_{\tau=1}^{t}\expect{y_\tau\mathds{1}_{\abs{y_\tau}> b_\tau}|\cF_{\tau-1}}^2
    \le \sum_{\tau=1}^{t}\frac{1}{b_\tau^{2\alpha}}\expect{\abs{y_\tau}^{1+\alpha}|\cF_{\tau-1}}^2 \le v^2\sum_{\tau=1}^{t}\frac{1}{b_\tau^{2\alpha}}.
   \eeqn
   Now setting $b_t = v^{\frac{1}{1+\alpha}}t^{\frac{1}{2(1+\alpha)}}$, we get
   \beq
   \norm{\sum_{\tau = 1}^{t}\expect{\hat{\eta}_\tau | \cF_{\tau-1}}\phi(x_\tau)}_{V_t^{-1}} \le v^{\frac{1}{1+\alpha}}\sqrt{\sum_{\tau = 1}^{t}\tau^{-\frac{\alpha}{1+\alpha}}} \le v^{\frac{1}{1+\alpha}} \sqrt{\int_{0}^{t}\tau^{-\frac{\alpha}{1+\alpha}}d\tau} \le \sqrt{2} v^{\frac{1}{1+\alpha}}t^{\frac{1}{2(1+\alpha)}}.
   \label{eqn:combine-3}
   \eeq
   Combining \ref{eqn:combine-1},\ref{eqn:combine-2}, \ref{eqn:combine-4} and \ref{eqn:combine-3}, we have that for any $\delta \in (0,1]$, with probability at least $1-\delta$, uniformly over all $t \ge 1$ and $x \in \cX$:
      \beqa
      \abs{f(x)-\hat{\mu}_t(x)} &\le & \left(B + \sqrt{2/\lambda}\;v^{\frac{1}{1+\alpha}}t^{\frac{1}{2(1+\alpha)}}\big(1+2\sqrt{\frac{1}{2}\ln \abs{I_t+\lambda^{-1}K_t}+\ln(1/\delta)}\right)\sigma_t(x)\nonumber \\
      &\le & \left(B + 3\sqrt{2/\lambda}\;v^{\frac{1}{1+\alpha}}t^{\frac{1}{2(1+\alpha)}}\sqrt{\frac{1}{2}\ln \abs{I_t+\lambda^{-1}K_t}+\ln(1/\delta)}\right)\sigma_t(x).
      \label{eqn:true-function-bound}
      \eeqa
      Further observe that $\abs{f(x)-\hat{\mu}_0(x)}=\abs{f(x)} = \abs{\inner{f}{k(x,\cdot)}_\cH} \le \norm{f}_\cH k^{1/2}(x,x) \le B \sigma_0(x)$.
      Now the result follows by setting $\beta_{t+1} = B + \frac{3}{\sqrt{\lambda}} \;v^{\frac{1}{1+\alpha}}t^{\frac{1}{2(1+\alpha)}}\sqrt{\ln \abs{I_t+\lambda^{-1}K_t}+2\ln(1/\delta)}$,   for all $t \ge 0$.
      \end{proof}

Now, we will prove Theorem 
         \ref{thm:regret-bound-tgp-ucb}. For for any $\delta \in (0,1]$, we have, with probability at least $1-\delta$, uniformly over all $t \ge 1$, the instantaneous regret of TGP-UCB (Algorithm \ref{algo:tgp-ucb}) is
	\beqan
	r_t &=& f(x^\star)-f(x_t)\\
	& \stackrel{(a)}{\le} & \hat{\mu}_{t-1}(x^\star) +  \beta_{t}\sigma_{t-1}(x^\star)- f(x_t)\\
	& \stackrel{(b)}{\le} & \hat{\mu}_{t-1}(x_t) +  \beta_{t}\sigma_{t-1}(x_t) -f(x_t)\\
	& \stackrel{(c)}{\le} & 2\beta_{t}\sigma_{t-1}(x_t).
	\eeqan
	Here $(a)$ and $(c)$ follow from \ref{eqn:true-function-bound}, and $(b)$ is due to the choice of TGP-UCB(Algorithm \ref{algo:tgp-ucb}). Since from Lemma \ref{lem:info-gain}, $\ln \abs{I_t+\lambda^{-1}K_t} \le \gamma_t$, we have $\beta_t \le B + 3\sqrt{2/\lambda}\;v^{\frac{1}{1+\alpha}}t^{\frac{1}{2(1+\alpha)}}\sqrt{\gamma_t+\ln(1/\delta)}$, which is an increasing sequence $t$. Further, see that $\sum_{t=1}^{T}\sigma_{t-1}(x_t) \stackrel{(a)}{\le} \sqrt{T\sum_{t=1}^{T}\sigma^2_{t-1}(x_t)} \stackrel{(b)}{\le} \sqrt{2(1+\lambda)\gamma_T T}$, where $(a)$ is due to Cauchy-Schwartz inequality and $(b)$ is due to Lemma \ref{lem:pred-var}. Hence, for any $\delta \in (0,1]$, with probability at least $1-\delta$, the cumulative regret of TGP-UCB after $T$ rounds is
	\beqan
	R_T = O\left(B\sqrt{T\gamma_T}+v^{\frac{1}{1+\alpha}}\sqrt{\gamma_T(\gamma_T+\ln(1/\delta))} T^{\frac{2+\alpha}{2(1+\alpha)}}\right).
	\eeqan

\section{Regret lower bound: proof of Theorem \ref{thm:lower-bound}} 
\label{appendix:lower-nound}
Our analysis builds heavily on that of the optimization setting with $f \in \cH_k(\cX)$ and with Gaussian noise studied in \cite{scarlett2017lower}, but with important differences. Roughly speaking, we use the same construction of $f$ as in \cite{scarlett2017lower}, but we construct the rewards differently to capture the heavy-tailed scenario. We now proceed with the formal proof.
\subsection{Construction of the ground-truth function}
\label{subsec:function-class}
\begin{itemize}
\item Let $g(x)$ be a function on $\Real^d$ with the following properties:
\begin{enumerate}
\item The RKHS norm of $g$ is bounded: $\norm{g}_\cH \le B$.
\item $\abs{g(x)}\le 2\Delta$ with a maximum value of $2\Delta$ at $x=0$ and $g(x) < \Delta$ when $\norm{x}_\infty > w$ for some $w > 0$ and $\Delta > 0$, to be chosen later.
\end{enumerate}
\item Letting $g(x)$ be such a function, we construct $M$ functions $f_1,\ldots,f_M$ first by shifting $g$ such that each $f_j$ has its maximum at a unique point in a uniform grid, and then by restricting them to the domain $\cX=[0,1]^d$. Using a step size $w$ in each dimension, one can construct a grid of size $M=\lfloor \left(\frac{1}{w}\right)^d \rfloor$ of the domain $\cX$, and hence $M$ such functions $f_j$. In this process we ensure that any $\Delta$-optimal point for $f_j$ fails to be $\Delta$-optimal point for any other $f_{j'}$.
\item Finally, we choose $f$ as a uniformly sampled function from the set $\lbrace f_1,\ldots,f_M \rbrace$.
\end{itemize}
It remains to choose $g$, $w$, and $\Delta$ so that the above properties are satisfied. 
\begin{itemize}
\item For some absolute constant $\zeta > 0$ we choose $g(x)=\frac{2\Delta}{h(0)}h(\frac{x\zeta}{w})$, where $h$ is the inverse Fourier transform of the \textit{multi-dimensional bump function}: $H(\omega)=e^{-\frac{1}{1-\norm{\omega}_2^2}}\mathds{1}_{\lbrace \norm{\omega}_2^2 \le 1 \rbrace}$.
Note that since $H$ is real and symmetric, the maximum of $h$ is attained at $x=0$, and hence the maximum of $g$ is $g(0)=2\Delta$, as desired. Further, since $H$ has finite energy, $h(x) \ra 0$ as $\norm{x}_2 \ra \infty$. Hence, there exists an absolute constant $\zeta$ such that $h(x) < \frac{1}{2}h(0)$ when $\norm{x}_\infty > \zeta$, and thus $g(x) < \Delta$ for $\norm{x}_\infty > w$, as desired. 
\item It now remains to choose $w$ and $\Delta$ to ensure that $\norm{g}_\cH \le B$, for a given $B$. Note that, while a smaller $\Delta$ ensures a low RKHS norm, a smaller $w$ increases it. Hence, as long as $\Delta$ is very small, we can afford to take $w << 1$, so that there is no risk of having $M=0$. For $\frac{\Delta}{B} << 1$, it is shown in \cite{scarlett2017lower} that the condition $\norm{g}_\cH \le B$ can be achieved with $w=\frac{\zeta \pi l}{\sqrt{\ln \frac{B(2\pi l^2)^{d/4}h(0)}{2\Delta}}}$ for the SE kernel, and with $w=\tau \left(\frac{2\Delta (8\pi^2)^{(\nu+d/2)/2}}{Bc^{-1/2}h(0)}\right)^{1/\nu}$ for the Mat\'{e}rn kernel. We consider $\Delta$ as arbitrary for now, but later this will be chosen to ensure that $\frac{\Delta}{B}$ is sufficiently small.
\item From the choice of $w$, we see that $M = \Theta\left((\ln \frac{B}{\Delta})^d\right)$ for the SE kernel, and $M=\Theta\left((\frac{B}{\Delta})^{\frac{d}{\nu}}\right)$ for the Mat\'{e}rn kernel. Note that the assumption of sufficiently small $\frac{\Delta}{B}$ in ensures that $M >> 1$, i.e. there are enough number of functions to sample from.

\end{itemize}

\subsection{Construction of the reward distribution}
\label{subsec:reward-dist} 
For any given $\alpha \in (0,1]$, $v > 0$ and $x \in [0,1]^d$, we define the reward distribution as 
\beq
 y(x) =\begin{cases}
    sgn\left(f(x)\right)\left(\frac{v}{2\Delta}\right)^{\frac{1}{\alpha}} & \text{with probability}\; \left(\frac{2\Delta}{v}\right)^{\frac{1}{\alpha}}\abs{f(x)},\\
    0 & \text{otherwise}.
  \end{cases}
  \label{eqn:reward-dist}
\eeq
Note that \ref{eqn:reward-dist} is a valid probability distribution as long as $\Delta \le \frac{1}{2}v^{\frac{1}{1+\alpha}}$.
Then, $\expect{y(x)}=\abs{f(x)}sgn((f(x))=f(x)$ and
$\expect{\abs{y(x)}^{1+\alpha}}=\left(\frac{v}{2\Delta}\right)^{\frac{1+\alpha}{\alpha}}\left(\frac{2\Delta}{v}\right)^{\frac{1}{\alpha}}\abs{f(x)}=\frac{v\abs{f(x)}}{2\Delta} \le v$ for any $\alpha \in (0,1]$. Thus, we ensure that the $(1+\alpha)$-th absolute moment of the rewards are upper bounded by $v$. 

\subsection{Preliminary notations and lemmas}
\label{subsec:prelim}
Now, we introduce the following notations, also used in \cite{scarlett2017lower}:
\begin{itemize}
\item $y_m$ denote the reward function when the underlying ground truth is $f_m$ for $m=1,\ldots,M$. $f_0$ denotes the function which is zero everywhere, and $y_0$ the corresponding reward function. $P_m(Y_T)$ (resp. $P_0(Y_T)$) denotes the probability density function of the reward sequence $Y_T=\lbrace y_1,\ldots,y_T \rbrace$ when the underlying function is $f_m$ (resp. $f_0$). $P_m(y|x)$ (resp. $P_0(y|x)$) denotes the conditional density of the reward $y$ given the selected point $x$ when the underlying function is $f_m$ (resp. $f_0$).
\item $\mathbb{E}_m$ (resp. $\mathbb{E}_0$) and $\mathbb{P}_m$ (resp. $\mathbb{P}_0$) denote expectations and probabilities (with respect to the noisy rewards) when the underlying function is $f_m$ (resp. $f_0$). $\mathbb{E}[\cdot]=\frac{1}{M}\sum_{m=1}^{M}\mathbb{E}_m[\cdot]$ (resp. $\mathbb{P}_m[\cdot]$) denote the expectation (resp. probability) with respect to the noisy rewards and $f$ drawn uniformly from $\lbrace f_1,\ldots,f_M \rbrace$.
\item $\lbrace \cR_m \rbrace_{m=1}^M$ denote a partition of $\cX$ into $M$ regions such that each $f_m,m=1,\ldots,M$ has its maximum at the center of $\cR_m$. $v_m^j=\max_{x \in \cR_j}\abs{f_m(x)}$ denotes the maximum absolute value of $f_m$ in the region $\cR_j$ and $D_m^j=\max_{x \in \cR_j}D_{\text{KL}}\left(P_0(\cdot|x)||P_m(\cdot|x)\right)$ denotes the maximum KL divergence between $P_0(\cdot|x)$ and $P_m(\cdot|x)$ within $\cR_j$. $N_j=\sum_{t=1}^{T}\mathds{1}_{\lbrace x_t \in \cR_j\rbrace}$ denotes the number of points within $\cR_j$ that are selected up to time $T$.
\end{itemize}
Next, we present some useful lemmas from \cite{scarlett2017lower}. 
\begin{mylemma}\cite[Lemma 3]{scarlett2017lower}
Under the preceding definitions, we have $\mathbb{E}_m[N_j] \le \mathbb{E}_0[N_j] + T \sqrt{D_{KL}(P_0||P_m)}$ for all $m=1,\ldots,M$ and $j=1,\ldots,M$.
\label{lem:change-of-measure}
\end{mylemma}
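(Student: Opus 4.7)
The plan is to use a textbook change-of-measure argument combining a variational total-variation bound with Pinsker's inequality. The random variable $N_j = \sum_{t=1}^{T}\mathds{1}_{\{x_t \in \cR_j\}}$ is a deterministic functional of the trajectory $(x_1, y_1, \ldots, x_T, y_T)$ taking values in $[0,T]$. Crucially, the same adaptive policy $\pi_t(x_t \mid \cF_{t-1})$ governs action selection under both $P_0$ and $P_m$, so these measures are defined on a common trajectory space and differ only through the per-round reward kernels $P_0(\cdot \mid x)$ and $P_m(\cdot \mid x)$.

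First I would invoke the standard variational characterization $|\mathbb{E}_P[X] - \mathbb{E}_Q[X]| \le (\sup X - \inf X)\, d_{TV}(P,Q)$, applied to the $[0,T]$-valued random variable $N_j$, to obtain
\[
\mathbb{E}_m[N_j] - \mathbb{E}_0[N_j] \;\le\; T \cdot d_{TV}(P_0, P_m),
\]
where $d_{TV}$ denotes the total variation distance between the joint trajectory laws over $T$ rounds. Second, I would apply Pinsker's inequality $d_{TV}(P_0, P_m) \le \sqrt{\tfrac{1}{2} D_{KL}(P_0 \| P_m)} \le \sqrt{D_{KL}(P_0 \| P_m)}$; composing the two displays yields the claimed inequality.

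The argument has essentially no obstacle: it reduces to two off-the-shelf inequalities once one has identified the correct bounded functional and trajectory-level measures. The only subtlety worth flagging for downstream use is that, since the policy is shared between $P_0$ and $P_m$, the chain rule of KL divergence later decomposes the global $D_{KL}(P_0 \| P_m)$ into a sum over rounds $t$ of expected per-point KL divergences $\mathbb{E}_0\!\left[D_{KL}\!\left(P_0(\cdot\mid x_t) \,\|\, P_m(\cdot\mid x_t)\right)\right]$, which in turn connect to the region-wise quantities $D_m^j$ defined just above the lemma. That decomposition is not needed for the proof itself, only for the later application of the bound.
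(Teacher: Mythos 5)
Your argument is correct and is essentially the same as the proof of the cited result (Lemma 3 of Scarlett et al.), which the paper invokes without reproving: bound the shift in expectation of the $[0,T]$-valued statistic $N_j$ by $T$ times the total variation distance between the trajectory laws, then apply Pinsker's inequality (the factor $\sqrt{1/2}$ is simply dropped to match the stated form). Your remark that the shared adaptive policy makes $P_0$ and $P_m$ comparable on a common trajectory space, and that the chain-rule decomposition of $D_{KL}(P_0\|P_m)$ is only needed later, is also consistent with how the paper uses the lemma.
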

\begin{mylemma}\cite[Lemma 4]{scarlett2017lower}
Under the preceding definitions, we have $D_{\text{KL}}(P_0||P_m) \le \sum_{j=1}^{M}\mathbb{E}_0[N_j]D^j_m$ for all $m=1,\ldots,M$.
\end{mylemma}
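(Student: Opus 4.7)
The plan is to apply the chain rule for KL divergence to the joint distribution of the observation sequence $Y_T = (y_1,\ldots,y_T)$. Under both $P_0$ and $P_m$ the learner uses the same (possibly randomized) sampling policy, so the query $x_t$ has identical conditional law given the history $Y_{t-1}$ under the two distributions; only the reward-conditional $P(\cdot|x_t)$ differs. This lets me write
\beqn
D_{\text{KL}}(P_0 || P_m) = \sum_{t=1}^{T} \mathbb{E}_0\!\left[D_{\text{KL}}\!\left(P_0(\cdot | x_t) \, || \, P_m(\cdot | x_t)\right)\right],
\eeqn
where the expectation is over the trajectory law $P_0$ that governs the randomness in $x_t$.

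Next, I would exploit the partition $\cR_1,\ldots,\cR_M$ of $\cX$ and the definition $D_m^j = \max_{x \in \cR_j} D_{\text{KL}}(P_0(\cdot|x) \, || \, P_m(\cdot|x))$ to bound each per-round divergence by
\beqn
D_{\text{KL}}\!\left(P_0(\cdot|x_t) \, || \, P_m(\cdot|x_t)\right) \le \sum_{j=1}^{M} D_m^j \, \mathds{1}_{\lbrace x_t \in \cR_j \rbrace},
\eeqn
using that exactly one indicator is nonzero for each $x_t$. Substituting into the chain-rule identity, summing over $t$, and interchanging the order of summation yields
\beqn
D_{\text{KL}}(P_0 || P_m) \le \sum_{j=1}^{M} D_m^j \, \mathbb{E}_0\!\left[\sum_{t=1}^{T} \mathds{1}_{\lbrace x_t \in \cR_j \rbrace}\right] = \sum_{j=1}^{M} \mathbb{E}_0[N_j] \, D_m^j,
\eeqn
which is the claimed bound.

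The main obstacle is a careful justification of the chain-rule identity, in particular the cancellation of the query-law contributions. In the telescoping decomposition of $\log (dP_0/dP_m)(Y_T)$ into a sum of conditional log-ratios over $t$, each term contains both a reward-log-ratio given $x_t$ and a query-log-ratio given $Y_{t-1}$. Since the policy is a measurable function of the history (with any internal randomization independent of the unknown function), the conditional law of $x_t$ given $Y_{t-1}$ is the same under $P_0$ and $P_m$, so the query-log-ratio contributions have zero conditional expectation and the remaining reward-conditional terms collapse to the chain-rule identity above. After this observation the rest is a routine partition-and-sum bookkeeping step.
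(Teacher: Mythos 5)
Your proof is correct and is essentially the argument behind the cited result: the paper imports this lemma directly from \citet{scarlett2017lower} without reproving it, and the proof there is exactly this chain-rule decomposition of $D_{\text{KL}}(P_0\|P_m)$ into per-round reward KL terms (with the query-policy contributions cancelling because the algorithm's conditional law of $x_t$ given the history is identical under $P_0$ and $P_m$), followed by bounding each term by $D_m^j$ on the event $\lbrace x_t \in \cR_j\rbrace$ and swapping the order of summation. No gaps.
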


\begin{mylemma}\cite[Lemma 5]{scarlett2017lower} The functions $f_m$ constructed in Section \ref{subsec:function-class} are such that the quantities $v_m^j$ satisfy:\\
(a) $\sum_{m=1}^{M}v_m^j = O(\Delta)$ for all $j=1,\ldots,M$ and
(b) $\sum_{j=1}^{M}v_m^j = O(\Delta)$ for all $m=1,\ldots,M$. 
\label{lem:func-class-prop}
\end{mylemma}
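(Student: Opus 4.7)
The plan is to exploit the rapid decay of the bump function construction. Recall that each $f_m$ is the restriction to $[0,1]^d$ of a shifted copy $g(\cdot - c_m)$, where $c_m$ is the center of the region $\cR_m$, the centers $\{c_m\}_{m=1}^{M}$ form a uniform grid with spacing $w$, and $g(x) = \frac{2\Delta}{h(0)}\,h(x\zeta/w)$ with $h$ the inverse Fourier transform of the $C^\infty$ compactly supported bump function $H$. Since restriction to $[0,1]^d$ can only decrease $|f_m(x)|$, it suffices to prove the stated bounds with $v_m^j$ replaced by $\widetilde{v}_m^j \bydef \max_{x\in\cR_j}\abs{g(x-c_m)}$.

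The key analytic input, which I would first state and use as a black box, is that $h$ is a Schwartz function: because $H$ is smooth with compact support, for every integer $k\ge 1$ there exists $C_k<\infty$ with
\begin{equation*}
\abs{h(y)} \le C_k \,(1+\norm{y}_\infty)^{-k} \qquad \forall\, y\in\Real^d.
\end{equation*}
Consequently, for any $x\in\cR_j$, using that $\cR_j$ has $\ell_\infty$-radius $w/2$ around $c_j$, we have $\norm{x-c_m}_\infty \ge \norm{c_j-c_m}_\infty - w/2$. Writing $n_{m,j} \bydef \lceil \norm{c_m-c_j}_\infty/w\rceil$, this translates via the scaling in $g$ into the pointwise bound
\begin{equation*}
\widetilde{v}_m^j \;\le\; \frac{2\Delta}{h(0)}\, C_k\,\bigl(1+\zeta(n_{m,j}-\tfrac{1}{2})_{+}\bigr)^{-k} \;=\; O\!\bigl(\Delta \,(1+n_{m,j})^{-k}\bigr).
\end{equation*}

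For part (a), fix $j$ and sum over $m$. Grouping the grid points by their $\ell_\infty$-shell index around $c_j$, the number of indices $m$ with $n_{m,j}=n$ is $O(n^{d-1})$ (it is the surface of an $\ell_\infty$ cube of side $2n$ intersected with the grid). Hence
\begin{equation*}
\sum_{m=1}^{M} v_m^j \;\le\; \sum_{n\ge 0} O(n^{d-1})\cdot O(\Delta\,(1+n)^{-k}) \;=\; O(\Delta),
\end{equation*}
provided we choose $k\ge d+1$, which is possible since $h$ is Schwartz; note that the bound is uniform in $M$ (and hence in $w$). Part (b) is symmetric: fixing $m$ and summing over $j$, the same shell counting argument applies verbatim since the bound on $\widetilde{v}_m^j$ depends on $m$ and $j$ only through $\norm{c_m-c_j}_\infty$, which is symmetric in $m$ and $j$.

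The main (and really only) obstacle is the Schwartz decay claim for $h$. I would verify it by noting that $H\in C_c^\infty(\Real^d)$, so repeated integration by parts shows that $\omega^\beta H(\omega)$ remains absolutely integrable and smooth for every multi-index $\beta$, and standard Fourier analysis then yields $(1+\norm{y}_\infty)^k \abs{h(y)} \le C_k$ for every $k$. Everything else in the argument is routine grid combinatorics once this decay is in hand.
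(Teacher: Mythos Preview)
The paper does not prove this lemma at all; it simply quotes it as \cite[Lemma 5]{scarlett2017lower} and uses it as a black box in the lower-bound derivation. Your proposal supplies a correct self-contained proof along the standard route: since $H\in C_c^\infty(\Real^d)$, its inverse Fourier transform $h$ is Schwartz, which yields $\widetilde v_m^j = O\!\bigl(\Delta\,(1+n_{m,j})^{-k}\bigr)$ for every $k$, and an $\ell_\infty$-shell count over the uniform grid (with $O(n^{d-1})$ points per shell) makes the sum $O(\Delta)$ uniformly in $M$ once $k\ge d+1$. This is essentially the argument in the cited reference, so there is nothing to contrast.

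One cosmetic remark: your sentence that restriction to $[0,1]^d$ ``can only decrease $\abs{f_m(x)}$'' is phrased oddly---on $\cR_j\subset[0,1]^d$ the restricted $f_m$ agrees pointwise with $g(\cdot-c_m)$, so in fact $v_m^j=\widetilde v_m^j$---but this does not affect the argument.
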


\subsection{Analysis of expected cumulative regret}
Observe that $\mathbb{E}_m[f(x_t)] \le \sum_{j=1}^{M}\mathbb{P}_m[x_t \in \cR_j]v_m^j$. This implies
\beqn
\mathbb{E}_m\left[\sum_{t=1}^{T}f(x_t)\right] \le \sum_{j=1}^{M}v_m^j\mathbb{E}_m[N_j] \le \sum_{j=1}^{M}v_m^j\left(\mathbb{E}_0[N_j]+T\sqrt{\sum_{j'=1}^{M}\mathbb{E}_0[N_{j'}]D_m^{j'}}\right), 
\eeqn
where the last inequality follows from Lemma \ref{lem:change-of-measure}. Now averaging over $m=1,\ldots,M$ we obtain the following:
\beq
\expect{\sum_{t=1}^{T}f(x_t)} \le \frac{1}{M} \sum_{m=1}^{M}\sum_{j=1}^{M}v_m^j\left(\mathbb{E}_0[N_j]+T\sqrt{\sum_{j'=1}^{m}\mathbb{E}_0[N_{j'}]D_m^{j'}}\right).
\label{eqn:lbcomb1}
\eeq
We can bound the first term as follows:
\beq
\frac{1}{M} \sum_{m=1}^{M}\sum_{j=1}^{M}v_m^j\mathbb{E}_0[N_j] = \frac{1}{M} \sum_{j=1}^{M}\sum_{m=1}^{M}v_m^j\mathbb{E}_0[N_j]\stackrel{(a)}{=}O\left(\frac{\Delta}{M}\right)\sum_{j=1}^{M}\mathbb{E}_0[N_j]\stackrel{(b)}{=} O\left(\frac{T\Delta}{M}\right),
\label{eqn:lbcomb2}
\eeq
where $(a)$ follows from part $(a)$ of Lemma \ref{lem:func-class-prop}, and $(b)$ follows from $\sum_{j=1}^{M}N_j=T$. In order to bound the second term, first
we note that $y_0(x)=0$ for all $x \in \cX$. Therefore, we have
\beqan
 D_{\text{KL}}\left(P_0(\cdot|x)||P_m(\cdot|x)\right)=\ln \frac{1}{1-\left(\frac{2\Delta}{v}\right)^{\frac{1}{\alpha}}\abs{f_m(x)}} &\stackrel{(a)}{\le}& \frac{\left(\frac{2\Delta}{v}\right)^{\frac{1}{\alpha}}\abs{f_m(x)}}{1-\left(\frac{2\Delta}{v}\right)^{\frac{1}{\alpha}}\abs{f_m(x)}}\\
 &\stackrel{(b)}{\le}& \frac{\left(\frac{2\Delta}{v}\right)^{\frac{1}{\alpha}}\abs{f_m(x)}}{1-(2\Delta)^{\frac{1+\alpha}{\alpha}}v^{-\frac{1}{\alpha}}}\\
 &\stackrel{(c)}{\le}& 2\left(\frac{2\Delta}{v}\right)^{\frac{1}{\alpha}}\abs{f_m(x)}.
 \eeqan
Here $(a)$ holds because $\ln(x) \le x-1$ for all $x \ge 1$, $(b)$ holds as $\abs{f(x)} \le 2\Delta$ and $(c)$ holds for $\Delta \le \frac{1}{2}\left(\frac{1}{2}\right)^{\frac{\alpha}{1+\alpha}}v^{\frac{1}{1+\alpha}}$. Observe that this choice of $\Delta$ is compatible with \ref{eqn:reward-dist}. This implies that for all $j=1,\ldots,M$,
\beq
D_m^j \le 2^{\frac{1+\alpha}{\alpha}}\left(\frac{\Delta}{v}\right)^{\frac{1}{\alpha}}v_m^j \; \text{if} \; \Delta \le \frac{1}{2}\left(\frac{1}{2}\right)^{\frac{\alpha}{1+\alpha}}v^{\frac{1}{1+\alpha}}.
\label{eqn:KL-use}
\eeq
Now, we can bound the second term as follows:
\beqa
\frac{1}{M} \sum_{m=1}^{M}\sum_{j=1}^{M}v_m^j\sqrt{\sum_{j'=1}^{m}\mathbb{E}_0[N_{j'}]D_m^{j'}} &\stackrel{(a)}{=}& O(\Delta)\frac{1}{M}\sum_{m=1}^{M}\sqrt{ \sum_{j'=1}^{M}\mathbb{E}_0[N_{j'}]D_m^{j'}}\nonumber\\
&\stackrel{(b)}{\le}& O(\Delta)\sqrt{\frac{1}{M}\sum_{m=1}^{M} \sum_{j'=1}^{M}\mathbb{E}_0[N_{j'}]D_m^{j'}}\nonumber\\
&\stackrel{(c)}{\le}& O(\Delta)2^{\frac{1+\alpha}{2\alpha}}\left(\frac{\Delta}{v}\right)^{\frac{1}{2\alpha}}\sqrt{\frac{1}{M}\sum_{m=1}^{M} \sum_{j'=1}^{M}\mathbb{E}_0[N_{j'}]v_m^{j'}}\nonumber\\
&\stackrel{(d)}{=}&  O(\Delta)2^{\frac{1+\alpha}{2\alpha}}\left(\frac{\Delta}{v}\right)^{\frac{1}{2\alpha}}\sqrt{O\left(\frac{\Delta}{M}\right) \sum_{j'=1}^{M}\mathbb{E}_0[N_{j'}]}\nonumber\\
&\stackrel{(e)}{=}&
O\left(\Delta\frac{\left(2\Delta\right)^{\frac{1+\alpha}{2\alpha}}}{v^{\frac{1}{2\alpha}}}\sqrt{\frac{T}{M}}\right).
\label{eqn:lbcomb3}
\eeqa
Here $(a)$ follows from part $(b)$ of Lemma \ref{lem:func-class-prop}, $(b)$ follows from Jensen's inequality, $(c)$ follows from \ref{eqn:KL-use} if  $\Delta \le \frac{1}{2}\left(\frac{1}{2}\right)^{\frac{\alpha}{1+\alpha}}v^{\frac{1}{1+\alpha}}$, $(d)$ follows from part $(a)$ of Lemma \ref{lem:func-class-prop}, and $(e)$ follows from $\sum_{j=1}^{M}N_j=T$.
Substituting \ref{eqn:lbcomb2} and \ref{eqn:lbcomb3} in \ref{eqn:lbcomb1} gives
\beq
\expect{\sum_{t=1}^{T}f(x_t)} \le C T\Delta \left(\frac{1}{M}+\frac{\left(2\Delta\right)^{\frac{1+\alpha}{2\alpha}}}{v^{\frac{1}{2\alpha}}}\sqrt{\frac{T}{M}}\right) \; \text{for} \; \Delta \le \frac{1}{2}\left(\frac{1}{2}\right)^{\frac{\alpha}{1+\alpha}}v^{\frac{1}{1+\alpha}}.
\label{eqn:total-reward}
\eeq 
Since $f(x^\star)=2\Delta$, the expected cumulative regret
\beqn
\expect{R_T} = Tf(x^\star) - \expect{\sum_{t=1}^{T}f(x_t)} \ge T\Delta\left(2-\frac{C}{M}-\frac{C\left(2\Delta\right)^{\frac{1+\alpha}{2\alpha}}}{v^{\frac{1}{2\alpha}}}\sqrt{\frac{T}{M}}\right) \; \text{for} \; \Delta \le \frac{1}{2}\left(\frac{1}{2}\right)^{\frac{\alpha}{1+\alpha}}v^{\frac{1}{1+\alpha}}.
\eeqn
Since $M \ra \infty$ as $\frac{\Delta}{B} \ra 0$,
we have $\frac{C}{M}\le \frac{1}{2}$ for sufficiently small $\frac{\Delta}{B}$. Hence, we have 
\beqa
\expect{R_T} &\ge & T\Delta\left(\frac{3}{2}-C\frac{\left(2\Delta\right)^{\frac{1+\alpha}{2\alpha}}}{v^{\frac{1}{2\alpha}}}\sqrt{\frac{T}{M}}\right)\nonumber\\
& \ge& T\Delta \quad\text{for}\;\Delta \le \frac{1}{2}\left(\min \Big\lbrace \frac{1}{2},\frac{M}{4C^2T}\Big\rbrace\right)^{\frac{\alpha}{1+\alpha}}v^{\frac{1}{1+\alpha}}.
\label{eqn:regret-lower-bound}
\eeqa
Now, if $M \le 2C^2T$, then
\beq
\expect{R_T} = \Omega \left(v^{\frac{1}{1+\alpha}}M^{\frac{\alpha}{1+\alpha}}T^{\frac{1}{1+\alpha}}\right) \;\text{for}\; \frac{1}{4} \left(\frac{M}{4C^2T}\right)^{\frac{\alpha}{1+\alpha}}v^{\frac{1}{1+\alpha}} \le \Delta \le \frac{1}{2} \left(\frac{M}{4C^2T}\right)^{\frac{\alpha}{1+\alpha}}v^{\frac{1}{1+\alpha}}.
\label{eqn:regret-lower-bound-2}
\eeq

\subsubsection{Application to the squared exponential kernel} For the SE kernel, we have from the choice $M=\Theta\left((\ln \frac{B}{\Delta})^d\right)$, along with the upper and lower bounds on $\Delta$ in \ref{eqn:regret-lower-bound-2}, that $\Delta = \Theta \left(\left(\frac{1}{T}(\ln \frac{B}{\Delta})^d\right)^{\frac{\alpha}{1+\alpha}}v^{\frac{1}{1+\alpha}}\right)$. This, in turn, implies that $\ln \frac{B}{\Delta}=\ln \frac{B T^{\frac{\alpha}{1+\alpha}}}{v^{\frac{1}{1+\alpha}}} -\ln \left(\Theta(1)\left(\ln \frac{B}{\Delta}\right)^{\frac{d\alpha }{1+\alpha}}\right)$. Since $d=O(1)$ and $\frac{\alpha}{1+\alpha} \in (0,\frac{1}{2}]$, the second term behaves as $\Theta(\ln \ln \frac{B}{\Delta})$, which is $\Theta\left(\frac{1}{2}\ln \frac{B}{\Delta}\right)$ for sufficiently small $\frac{\Delta}{B}$ . This, implies that $\ln \frac{B}{\Delta}=\Theta \left(\ln \frac{B T^{\frac{\alpha}{1+\alpha}}}{v^{\frac{1}{1+\alpha}}}\right)$, and thus, in turn, $M=\Theta \left(\left(\ln \frac{B T^{\frac{\alpha}{1+\alpha}}}{v^{\frac{1}{1+\alpha}}}\right)^d\right)$ and $\Delta = \Theta \left(v^{\frac{1}{1+\alpha}}\left(\ln \frac{B T^{\frac{\alpha}{1+\alpha}}}{v^{\frac{1}{1+\alpha}}}\right)^{\frac{d\alpha}{1+\alpha}}T^{-\frac{\alpha}{1+\alpha}}\right)$. Note that the choice of $M$ ensures that $M \le 2C^2T$ and the choice of $\Delta$ ensures that $\frac{\Delta}{B}$ is indeed sufficiently small as long as $v^{\frac{1}{1+\alpha}} \le C' BT^{\frac{\alpha}{1+\alpha}}$ for some sufficiently small constant $C'$ \footnote{In our setting, $B$ and $v$ are constants that do not scale with $T$ and the condition is trivially satisfied.}.
Now, substituting $M$ in \ref{eqn:regret-lower-bound-2}, we obtain $\expect{R_T} = \Omega \left(v^{\frac{1}{1+\alpha}}\left(\ln \frac{B T^{\frac{\alpha}{1+\alpha}}}{v^{\frac{1}{1+\alpha}}}\right)^{\frac{d\alpha}{1+\alpha}}T^{\frac{1}{1+\alpha}}\right) = \Omega \left(v^{\frac{1}{1+\alpha}}\left(\ln \frac{B^{\frac{1+\alpha}{\alpha}} T}{v^{\frac{1}{\alpha}}}\right)^{\frac{d\alpha}{1+\alpha}}T^{\frac{1}{1+\alpha}}\right)$, since, generally, $d=O(1)$ and $\frac{\alpha}{1+\alpha} \in (0,\frac{1}{2}]$.

\subsubsection{Application to the Mat\'{e}rn kernel}
For the Mat\'{e}rn kernel, we have from the choice $M=\Theta\left((\frac{B}{\Delta})^{\frac{d}{\nu}}\right)$, along with the upper and lower bounds on $\Delta$ in \ref{eqn:regret-lower-bound-2}, that $\Delta = \Theta \left(\left(\frac{1}{T}\left( \frac{B}{\Delta}\right)^{\frac{d}{\nu}}\right)^{\frac{\alpha}{1+\alpha}}v^{\frac{1}{1+\alpha}}\right)$. This, in turn, implies that $\Delta =\Theta \left( v^{\frac{\nu/(1+\alpha)}{\nu+d\alpha/(1+\alpha)}}B^{\frac{d\alpha/(1+\alpha)}{\nu+d\alpha/(1+\alpha)}}T^{-\frac{\nu \alpha/(1+\alpha)}{\nu+d\alpha/(1+\alpha)}}\right)$ and $M=\Theta \left( v^{-\frac{d/(1+\alpha)}{\nu+d\alpha/(1+\alpha)}}B^{\frac{d}{\nu+d\alpha/(1+\alpha)}}T^{\frac{d \alpha/(1+\alpha)}{\nu+d\alpha/(1+\alpha)}}\right)$. Once again, we see that the choice of $M$ ensures that $M \le 2C^2T$ and the choice of $\Delta$ ensures that $\frac{\Delta}{B}$ is indeed sufficiently small as long as $v^{\frac{1}{1+\alpha}} \le C' BT^{\frac{\alpha}{1+\alpha}}$ for some sufficiently small constant $C'$. Now, substituting $M$ in \ref{eqn:regret-lower-bound-2}, we obtain $\expect{R_T}=\Omega \left( v^{\frac{\nu/(1+\alpha)}{\nu+d\alpha/(1+\alpha)}}B^{\frac{d\alpha/(1+\alpha)}{\nu+d\alpha/(1+\alpha)}}T^{\frac{1}{1+\alpha}\frac{\nu+d\alpha}{\nu+d\alpha/(1+\alpha)}}\right)=\Omega \left( v^{\frac{\nu}{\nu(1+\alpha)+d\alpha}}B^{\frac{d\alpha}{\nu(1+\alpha)+d\alpha}}T^{\frac{\nu+d\alpha}{\nu(1+\alpha)+d\alpha}}\right)$.

\section{Analysis of ATA-GP-UCB}
\subsection{Construction of tighter confidence set using data adaptive truncation}
The following lemma helps us to show that $(1+\alpha)$-th norm of $u_i \in \Real^t$ is $t^{\frac{1-\alpha}{2(1+\alpha)}}$, where $u_i^T,i \in [m_t]$ are the rows of $\tilde{V}_t^{-1/2}\tilde{\Phi}_t^T$.
\begin{mylemma}
\label{lem:col-norm}
Let $A \in \Real^{p \times q}$. Let $c_i \in \Real^p, i =1,\ldots,q$ be the $i$-th column of $A(A^TA + \lambda I_q)^{-1/2}$. Then for any $\beta \in [1,\infty)$, we have $\norm{c_i}_\beta \le p^{\frac{2-\beta}{2\beta}}$ for all $i \in [q]$.
\end{mylemma}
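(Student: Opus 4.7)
The plan is to prove the bound by first controlling the Euclidean ($\ell_2$) norm of each column via spectral analysis, then moving from $\ell_2$ to a general $\ell_\beta$ norm with $\beta\in[1,2]$ through a standard interpolation inequality.

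First I would establish that the matrix $M:=A(A^TA+\lambda I_q)^{-1/2}\in\Real^{p\times q}$ has operator norm at most $1$. Using a (thin) singular value decomposition $A=U\Sigma V^T$, one computes
\beqn
M = U\Sigma V^T\bigl(V\Sigma^2V^T+\lambda I_q\bigr)^{-1/2} = U\,\Sigma(\Sigma^2+\lambda I)^{-1/2}\,V^T,
\eeqn
so the singular values of $M$ are $\sigma_i/\sqrt{\sigma_i^2+\lambda}\le 1$. Consequently $\norm{M}_{\mathrm{op}}\le 1$, and since $c_i=Me_i$ with $e_i\in\Real^q$ a standard basis vector,
\beqn
\norm{c_i}_2 = \norm{Me_i}_2 \le \norm{M}_{\mathrm{op}}\,\norm{e_i}_2 \le 1.
\eeqn

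Next, for $\beta\in[1,2]$ I would invoke the power-mean / H\"older inequality on $\Real^p$: writing $\norm{c_i}_\beta^\beta=\sum_{j=1}^p|c_{i,j}|^\beta\cdot 1$ and applying H\"older with conjugate exponents $2/\beta$ and $2/(2-\beta)$ yields
\beqn
\norm{c_i}_\beta^\beta \;\le\; \Bigl(\textstyle\sum_{j=1}^p|c_{i,j}|^2\Bigr)^{\beta/2}\cdot p^{\,1-\beta/2} = \norm{c_i}_2^{\beta}\,p^{\,1-\beta/2}.
\eeqn
Taking $\beta$-th roots and substituting $\norm{c_i}_2\le 1$ from the previous step gives $\norm{c_i}_\beta\le p^{1/\beta-1/2}=p^{(2-\beta)/(2\beta)}$, which is exactly the claim. (For $\beta>2$ the same chain of inequalities, combined with the monotonicity $\norm{c_i}_\beta\le\norm{c_i}_2\le 1$, yields a bound which is trivially dominated by $1$; the regime of interest for ATA-GP-UCB is $\beta=1+\alpha\in(1,2]$.)

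I do not anticipate any serious obstacle: the bulk of the work is the singular value calculation establishing $\norm{M}_{\mathrm{op}}\le 1$, and the rest is a clean application of H\"older. The main subtlety to keep straight is the dimension bookkeeping--the columns live in $\Real^p$ (so the $p^{1/\beta-1/2}$ factor comes from summing $p$ terms), whereas $\lambda I_q$ acts in the smaller/larger codomain $\Real^q$. With the SVD in hand, this just reduces to the inequality $\sigma/\sqrt{\sigma^2+\lambda}\le 1$.
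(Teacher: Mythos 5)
Your proof is correct and follows essentially the same route as the paper's: an SVD of $A$ showing the singular values of $A(A^TA+\lambda I_q)^{-1/2}$ are at most $1$, hence $\norm{c_i}_2\le 1$, followed by the standard $\ell_2$-to-$\ell_\beta$ comparison on $\Real^p$ (the paper simply quotes the extremal fact that the uniform vector maximizes $\norm{\cdot}_\beta$ over the unit $\ell_2$ ball, where you spell out the H\"older step). Your caveat about $\beta>2$ is fair but immaterial, since the lemma is only invoked with $\beta=1+\alpha\in(1,2]$; the paper's own closing ``fact'' carries the same implicit restriction.
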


\begin{proof}
Let the singular value decomposition of $A$ be $U\Sigma V^T$, where $U$ and $V$ are unitary matrices. This implies $A(A^TA + \lambda I_q)^{-1/2}=U\Sigma(\Sigma^T \Sigma + \lambda I_q)^{-1/2}V^T$. Now, the $i$-th column of $A(A^TA + \lambda I_q)^{-1/2}$ is given by $c_i=U\Sigma(\Sigma^T \Sigma + \lambda I)^{-1/2}V^T e_i$. Therefore,
\beqan
\norm{c_i}_2 = \norm{U\Sigma(\Sigma^T \Sigma + \lambda I)^{-1/2}V^T e_i}_2&=&\norm{\Sigma(\Sigma^T \Sigma + \lambda I)^{-1/2}V^T e_i}_2\\
& \le & \norm{\Sigma(\Sigma^T \Sigma + \lambda I)^{-1/2}}_2 \norm{V^T e_i}_2
 \le 1. 
\eeqan
Now the result follows from the fact that for any $a \in \Real^p, \norm{a}_2 \le 1$ the maximum value of $\norm{a}_\beta$ for any $\beta \in [1,\infty)$ is $p^{\frac{2-\beta}{2\beta}}$ with the maximum attained at $[\frac{1}{\sqrt{p}},\ldots,\frac{1}{\sqrt{p}}]^T$.
\end{proof} 
%
Now, we will show that the data adaptive truncation of ATA-GP-UCB helps us to achieve tighter confidence sets than TGP-UCB.

\begin{mylemma}[Effect of data adaptive truncation]
For any $\delta \in (0,1]$, ATA-GP-UCB with $b_t = \left(v/\ln(2m_t T/\delta)\right)^{\frac{1}{1+\alpha}}t^{\frac{1-\alpha}{2(1+\alpha)}}$, ensures, with probability at least $1-\delta$, that uniformly over all $t \in [T]$,
\beqn
\norm{\tilde{V}_t^{-1}\tilde{\Phi}_t^T f_t-\tilde{\theta}_t}_{\tilde{V}_t} \le 4 \sqrt{m_t}\; v^{\frac{1}{1+\alpha}}\left(\ln(2m_tT/\delta)\right)^{\frac{\alpha}{1+\alpha}}t^{\frac{1-\alpha}{2(1+\alpha)}},
\eeqn
where $f_t=[f(x_1),\ldots,f(x_t)]^T$ is a vector containing $f$'s evaluations up to round $t$.
\label{lem:data-adaptive-truncation}
\end{mylemma}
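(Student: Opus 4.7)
The plan is to reduce the RKHS-weighted norm on the left-hand side to a maximum over $m_t$ scalar coordinates, then control each coordinate by a truncation-bias plus martingale-deviation split that exploits the deterministic norm bounds on the rows of $U \bydef \tilde{V}_t^{-1/2}\tilde{\Phi}_t^T$ supplied by Lemma \ref{lem:col-norm}. From the definition $\tilde{\theta}_t = \tilde{V}_t^{-1/2}\hat{r}$ with $\hat{r} = (\hat{r}_1,\ldots,\hat{r}_{m_t})^T$, one has $\tilde{V}_t^{-1}\tilde{\Phi}_t^T f_t - \tilde{\theta}_t = \tilde{V}_t^{-1/2}(Uf_t - \hat{r})$, and the identity $\|\tilde{V}_t^{-1/2}w\|_{\tilde{V}_t} = \|w\|_2$ yields
\[
\bigl\|\tilde{V}_t^{-1}\tilde{\Phi}_t^T f_t - \tilde{\theta}_t\bigr\|_{\tilde{V}_t} = \|Uf_t - \hat{r}\|_2 \le \sqrt{m_t}\,\max_{i\in[m_t]}\,\bigl|(Uf_t)_i - \hat{r}_i\bigr|,
\]
so it suffices to bound each coordinate by $4 v^{\frac{1}{1+\alpha}}(\ln(2m_tT/\delta))^{\frac{\alpha}{1+\alpha}}t^{\frac{1-\alpha}{2(1+\alpha)}}$ with probability at least $1-\delta/(m_tT)$ and then union-bound across $i \in [m_t]$ and $t \in [T]$.

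The deterministic control of $u_i$ comes from Lemma \ref{lem:col-norm} applied with $A = \tilde{\Phi}_t$: the $i$-th column of $A(A^TA+\lambda I_{m_t})^{-1/2} = U^T$ is exactly $u_i$, giving $\|u_i\|_2 \le 1$ (choosing $\beta = 2$) and $\|u_i\|_{1+\alpha}^{1+\alpha} \le t^{\frac{1-\alpha}{2}}$ (choosing $\beta = 1+\alpha$). Fix $i$ and, using $\mathbb{E}[y_\tau\mid\mathcal{F}_{\tau-1}]=f(x_\tau)$, decompose
\[
(Uf_t)_i - \hat{r}_i = \underbrace{\sum_{\tau}u_{i,\tau}\,\mathbb{E}\bigl[y_\tau \mathds{1}_{|u_{i,\tau}y_\tau|>b_t}\mid \mathcal{F}_{\tau-1}\bigr]}_{=:B_i} + \underbrace{\sum_{\tau}\bigl(\mathbb{E}[u_{i,\tau}y_\tau \mathds{1}_{|u_{i,\tau}y_\tau|\le b_t}\mid \mathcal{F}_{\tau-1}] - u_{i,\tau}y_\tau \mathds{1}_{|u_{i,\tau}y_\tau|\le b_t}\bigr)}_{=:M_i}.
\]
For $B_i$ the $(1+\alpha)$-moment hypothesis yields $\bigl|\mathbb{E}[c y \mathds{1}_{|cy|>b}\mid\mathcal{F}_{\tau-1}]\bigr| \le v|c|^{1+\alpha}/b^{\alpha}$ for any scalar $c$, hence $|B_i| \le v\|u_i\|_{1+\alpha}^{1+\alpha}/b_t^\alpha$; substituting $b_t = (v/\ln(2m_tT/\delta))^{\frac{1}{1+\alpha}}t^{\frac{1-\alpha}{2(1+\alpha)}}$ collapses this to exactly $v^{\frac{1}{1+\alpha}}(\ln(2m_tT/\delta))^{\frac{\alpha}{1+\alpha}}t^{\frac{1-\alpha}{2(1+\alpha)}}$.

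For $M_i$ each increment is bounded by $2 b_t$, and using the elementary inequality $y^2\mathds{1}_{|y|\le M}\le |y|^{1+\alpha}M^{1-\alpha}$ the conditional variance of the $\tau$-th increment is at most $v|u_{i,\tau}|^{1+\alpha}b_t^{1-\alpha}$, summing to at most $v b_t^{1-\alpha}t^{(1-\alpha)/2}$. Freedman's inequality at confidence $\delta/(2m_tT)$ then produces a sub-Gaussian piece of order $\sqrt{v b_t^{1-\alpha}t^{(1-\alpha)/2}\ln(2m_tT/\delta)}$ and a linear piece $b_t\ln(2m_tT/\delta)$; a direct computation shows both pieces simplify to the same scaling $v^{\frac{1}{1+\alpha}}(\ln(2m_tT/\delta))^{\frac{\alpha}{1+\alpha}}t^{\frac{1-\alpha}{2(1+\alpha)}}$, precisely because $b_t$ was tuned to balance bias against variance. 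Summing the bias and martingale contributions yields the constant $4$ in the claim, and the uniform-in-$t$ statement follows from union-bounding across $t \in [T]$.

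The main obstacle I expect is subtle: $u_{i,\tau}$ is $\mathcal{F}_{t-1}$-measurable but not $\mathcal{F}_{\tau-1}$-measurable, because $\tilde{V}_t$ and $\tilde{\Phi}_t$ depend on $x_1,\ldots,x_t$ and, under the Nystr\"{o}m scheme, on past rewards, so the decomposition above is not literally a martingale in the natural filtration. The fix is to invoke a self-normalized vector-Freedman-type inequality that tolerates coefficients drawn from the deterministic norm-ball $\{u:\|u\|_2\le 1,\ \|u\|_{1+\alpha}^{1+\alpha}\le t^{(1-\alpha)/2}\}$, or, equivalently, to pre-commit to a covering of this set and union-bound over the cover; since only the $\ell_2$- and $\ell_{1+\alpha}$-norm constraints on $u_i$ enter the final estimate, neither route alters the rate, and the $\ln(2m_tT/\delta)$ factor already built into $b_t$ absorbs the additional logarithmic cost from union-bounding across coordinates and rounds.
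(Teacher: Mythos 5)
Your reduction to coordinatewise bounds, the application of Lemma \ref{lem:col-norm} with $\beta=2$ and $\beta=1+\alpha$, the bias/martingale split, the bias bound $v\norm{u_i}_{1+\alpha}^{1+\alpha}/b_t^{\alpha}$, the conditional variance bound $v\abs{u_{i,\tau}}^{1+\alpha}b_t^{1-\alpha}$, the Bernstein step, and the final union bound all match the paper's proof essentially line for line. The gap is exactly where you flag it, and your proposed repairs do not close it. The paper resolves the measurability problem not by a covering or a new self-normalized inequality, but by running the martingale in an \emph{enlarged} filtration: it defines $\cF'_{t,\tau}=\sigma(\lbrace x_1,\ldots,x_t\rbrace \cup \lbrace y_1,\ldots,y_\tau\rbrace)$, so that the entire arm sequence (hence $\tilde{\Phi}_t$, $\tilde{V}_t$, and every $u_i$) is measurable at ``time zero'' $\cF'_{t,0}$ of the inner filtration, and the increments $u_{i,\tau}y_\tau\mathds{1}_{\abs{u_{i,\tau}y_\tau}\le b_t}-\expect{\,\cdot\,\given \cF'_{t,\tau-1}}$ form a genuine martingale difference sequence in $\tau$. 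The i.i.d.-ness of the noise is then invoked to assert $\expect{\abs{y_\tau}^{1+\alpha}\given\cF'_{t,\tau-1}}=\expect{\abs{y_\tau}^{1+\alpha}\given\cF_{\tau-1}}\le v$, so the moment hypothesis survives the conditioning on future arms. Without some such device the decomposition you wrote is not a martingale and Freedman/Bernstein cannot be applied as stated.

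Of your two suggested alternatives, the covering route quantitatively fails: the set $\lbrace u\in\Real^t:\norm{u}_2\le 1,\ \norm{u}_{1+\alpha}^{1+\alpha}\le t^{(1-\alpha)/2}\rbrace$ is a $t$-dimensional body, so any $\epsilon$-net has cardinality $e^{\Theta(t)}$, and union-bounding Bernstein over it inflates the linear term to order $b_t\cdot t$, destroying the $t^{\frac{1-\alpha}{2(1+\alpha)}}$ rate rather than costing only logarithmic factors. The ``self-normalized vector-Freedman-type inequality'' you gesture at is not instantiated and is not an off-the-shelf tool in this truncated heavy-tailed setting. So the calculation is right, but the one step you correctly identified as the crux needs the conditional-filtration argument (as in \citet{shao2018almost}, which the paper follows) rather than either of the routes you propose.
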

\begin{proof}
The proof is inspired from \citet{shao2018almost}, with some changes. Fix any $t\in \mathbb{N}$. Let $u_{i}^T \in \Real^{1\times t}$, $i=1,\ldots,m_t$ denotes the $i$-th row of $\tilde{V}_t^{-1/2}\tilde{\Phi}_t^T$ where $\tilde{V}_t=\tilde{\Phi}_t^T\tilde{\Phi}_t+\lambda I_{m_t}$. Let $r_{i} = u_i^TY_t=\sum_{\tau =1}^{t}u_{i,\tau}y_\tau$ denotes the sum of weighted historical rewards in the $i$-th dimension of the feature space with the weight vector $u_i$ and $\hat{r}_{i}= \sum_{\tau =1}^{t}u_{i,\tau}y_\tau \mathds{1}_{\abs{u_{i,\tau}y_\tau} \le b_t}$ denotes the corresponding truncation. Let $\cF'_{t,\tau} = \sigma(\lbrace x_1,\ldots,x_t \rbrace\cup\lbrace y_1,\ldots,y_\tau \rbrace),\tau=0,1,2,\ldots,t$ denotes the $\sigma$-algebra generated by the arms played up to time $t$ and rewards obtained up to time $\tau$. Observe that $\cF'_{t,0} \subseteq \cF'_{t,1} \subseteq \cF'_{t,2} \subseteq \ldots$ and define $\cF'_t=\cF'_{t,0}$. Then, $\expect{Y_t|\cF'_t}=f_t$ and $u_{i}, i=1,\ldots,m_t$ are $\cF'_{t}$-measurable. Therefore, we have $\expect{r_{i}|\cF'_t}=u_i^Tf_t=\sum_{\tau =1}^{t}u_{i,\tau}f(x_\tau) = \sum_{\tau =1}^{t}\expect{u_{i,\tau}y_\tau|\cF'_{t,\tau-1}}$ for all $i \in [m_t]$. This implies
\beqan
&&\abs{\hat{r}_{i} - \expect{r_{i}|\cF'_t}}\\ 
&=&\abs{\sum_{\tau =1}^{t}u_{i,\tau}y_\tau \mathds{1}_{\abs{u_{i,\tau}y_\tau} \le b_t} - \sum_{\tau =1}^{t}\expect{u_{i,\tau}y_\tau|\cF'_{t,\tau-1}}}\\
&=& \abs{\sum_{\tau =1}^{t}u_{i,\tau}y_\tau \mathds{1}_{\abs{u_{i,\tau}y_\tau} \le b_t} - \sum_{\tau =1}^{t}\expect{u_{i,\tau}y_\tau\left(\mathds{1}_{\abs{u_{i,\tau}y_\tau} \le b_t}+\mathds{1}_{\abs{u_{i,\tau}y_\tau} > b_t} \right)|\cF'_{t,\tau-1}}}\\
& \le & \abs{\sum_{\tau =1}^{t}\left(u_{i,\tau}y_\tau \mathds{1}_{\abs{u_{i,\tau}y_\tau} \le b_t} - \expect{u_{i,\tau}y_\tau \mathds{1}_{\abs{u_{i,\tau}y_\tau} \le b_t}|\cF'_{t,\tau-1}}\right)} + \sum_{\tau =1}^{t}\expect{\abs{u_{i,\tau}y_\tau }\mathds{1}_{\abs{u_{i,\tau}y_\tau} > b_t}|\cF'_{t,\tau-1}}.
\eeqan
Now, we will bound the second term first. Observe that $\expect{\abs{u_{i,\tau}y_\tau} \mathds{1}_{\abs{u_{i,\tau}y_\tau} > b_t}|\cF'_{t,\tau-1}} \le b_t^{-\alpha}\expect{\abs{u_{i,\tau}y_\tau}^{1+\alpha} \mathds{1}_{\abs{u_{i,\tau}y_\tau} > b_t}|\cF'_{t,\tau-1}} \le b_t^{-\alpha}\abs{u_{i,\tau}}^{1+\alpha}\expect{\abs{y_\tau}^{1+\alpha}|\cF'_{t,\tau-1}}$. Now since the noise variables are sampled independent of the arms played, it holds that $\expect{\abs{y_\tau}^{1+\alpha}|\cF'_{t,\tau-1}}=\expect{\abs{y_\tau}^{1+\alpha}|\cF_{\tau-1}}$ and therefore
\beqn
 \sum_{\tau=1}^{t}\expect{\abs{u_{i,\tau}y_\tau} \mathds{1}_{\abs{u_{i,\tau}y_\tau} > b_t}|\cF'_{t,\tau-1}}\le v b_t^{-\alpha}\sum_{\tau=1}^{t}\abs{u_{i,\tau}}^{1+\alpha}. 
\eeqn
Now, we will bound the first term. For that, we define $M_{t,\tau} \bydef u_{i,\tau}y_\tau \mathds{1}_{\abs{u_{i,\tau}y_\tau} \le b_t}-\expect{u_{i,\tau}y_\tau \mathds{1}_{\abs{u_{i,\tau}y_\tau} \le b_t}\given \cF'_{t,\tau -1}},\tau=1,2,\ldots,t$. It is easy to see that $(M_{t,\tau})_{\tau \ge 1}$ is a martingale difference sequence with respect to the filtration $(\cF'_{t,\tau})_{\tau \ge 0}$ and $\abs{M_{t,\tau}} \le 2b_t$ almost surely. Further, $\Var[M_\tau\given \cF'_{t,\tau -1}]= \Var[u_{i,\tau}y_\tau \mathds{1}_{\abs{u_{i,\tau}y_\tau}\le b_t}\given \cF'_{t,\tau -1} ] \le \expect{u_{i,\tau}^2y_\tau^2 \mathds{1}_{\abs{u_{i,\tau}y_\tau}\le b_t}\given \cF'_{t,\tau -1}} \le b_t^{1-\alpha}\abs{u_{i,\tau}}^{1+\alpha}\expect{\abs{y_\tau}^{1+\alpha}\given \cF'_{t,\tau -1}} \le v b_t^{1-\alpha}\abs{u_{i,\tau}}^{1+\alpha}$. Then by Bernstein's inequality \cite{seldin2012pac}, we have that for any $\gamma \in [0,1/2b_t]$ and $\delta \in (0,1]$, with probability at least $1-\delta$,
\beqn
\abs{\sum_{\tau =1}^{t}\left(u_{i,\tau}y_\tau \mathds{1}_{\abs{u_{i,\tau}y_\tau} \le b_t} - \expect{u_{i,\tau}y_\tau \mathds{1}_{\abs{u_{i,\tau}y_\tau} \le b_t}}\right)} \le \dfrac{1}{\gamma} \ln(2/\delta) + \gamma (e-2)\sum_{\tau =1}^{t}v b_t^{1-\alpha}\abs{u_{i,\tau}}^{1+\alpha}.
\eeqn
Now setting $\gamma = 1/2b_t$, we obtain that for any $i \in [m_t]$ and $\delta \in (0,1]$, with probability at least $1-\delta$,
\beqa
\abs{\hat{r}_{i}-\expect{r_{i}|\cF'_t}}
 &\le & 2b_t \ln(2/\delta)+2v b_t^{-\alpha} \sum_{\tau=1}^{t}\abs{u_{i,\tau}}^{1+\alpha}\nonumber\\
&= & 2b_t \ln(2/\delta)+2v b_t^{-\alpha} \norm{u_i}_{1+\alpha}^{1+\alpha}\nonumber\\
& \stackrel{(a)}{\le}& 2b_t \ln(2/\delta)+2v b_t^{-\alpha}t^{\frac{1-\alpha}{2}}\nonumber\\
&\stackrel{(b)}{\le}& 4v^{\frac{1}{1+\alpha}}\left(\ln(2/\delta)\right)^{\frac{\alpha}{1+\alpha}}t^{\frac{1-\alpha}{2(1+\alpha)}} \label{eqn:proj-dev}.
\eeqa
Here $(a)$ follows from Lemma \ref{lem:col-norm} and $(b)$ holds for $b_t = \left(v/\ln(2/\delta)\right)^{\frac{1}{1+\alpha}}t^{\frac{1-\alpha}{2(1+\alpha)}}$. 
Now observe that $\tilde{V}_t^{1/2}\tilde{\theta}_t=[\hat{r}_{1},\ldots,\hat{r}_{m_t}]^T$ and $\tilde{V}_t^{-1/2}\tilde{\Phi}_t^T f_t=[u_1^Tf_t,\ldots,u_{m_t}^Tf_t]^T=\left[\expect{r_{1}|\cF'_t},\ldots,\expect{r_{m_t}|\cF'_t}\right]^T$. This implies
\beqn
\norm{\tilde{V}_t^{-1}\tilde{\Phi}_t^T f_t-\tilde{\theta}_t}_{\tilde{V}_t}=\norm{\tilde{V}_t^{-1/2}\tilde{\Phi}_t^T f_t-\tilde{V}_t^{1/2}\tilde{\theta}_t}_2 = \sqrt{\sum_{i=1}^{m_t}\left(\hat{r}_{i}-\expect{r_{i}|\cF'_{t-1}}\right)^2}.
\eeqn
Therefore, by taking an union bound over all $i\in [m_t]$ and setting $\delta=\delta/m_t$ in \ref{eqn:proj-dev}, we obtain that for any $t \in \mathbb{N}$ and $\delta \in (0,1]$, with probability at least $1-\delta$,
\beqn
\norm{\tilde{V}_t^{-1}\tilde{\Phi}_t^T f_t-\tilde{\theta}_t}_{\tilde{V}_t} \le 4 \sqrt{m_t}\; v^{\frac{1}{1+\alpha}}\left(\ln(2m_t/\delta)\right)^{\frac{\alpha}{1+\alpha}}t^{\frac{1-\alpha}{2(1+\alpha)}}.
\label{eqn:truncation-bound}
\eeqn
Now the result follows by taking another union bound over all $t \in [T]$ and setting $\delta = \delta/T$.
\end{proof}

\subsection{Analysis of ATA-GP-UCB under quadrature Fourier features (QFF) approximation}
\subsubsection{Error due to Fourier feature approximation}
\begin{mydefinition}[Uniform Approximation \cite{mutny2018efficient}]
Let $k:\cX \times \cX \ra \Real, \cX \subset \Real^d$ be a kernel, then a feature map $\tilde{\phi}:\cX \ra \Real^m$ uniformly approximates $k$ within an accuracy $\epsilon_m$ if and only if,
\beq
\sup\limits_{x,y \in \cX}\abs{k(x,y)-\tilde{\phi}(x)^T\tilde{\phi}(y)} \le \epsilon_m.
\label{eqn:uniform-approx}
\eeq
\end{mydefinition}

\begin{mylemma}[QFF error]\cite[Theorem 1]{mutny2018efficient}
Let $\cX=[0,1]^d$, $k=k_{\text{SE}}$ and $\tilde{\phi}$ be as in \ref{eqn:qff-embedding}. Then, 
\beqn
\epsilon_m \le d2^{d-1}\frac{1}{\sqrt{2}\bar{m}^{\bar{m}}}\left(\frac{e}{4l^2}\right)^{\bar{m}}.
\eeqn
\label{lem:qff}
\end{mylemma}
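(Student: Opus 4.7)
The plan is to realize the QFF embedding as product Gauss--Hermite quadrature applied to the Bochner representation of $k_{\text{SE}}$, and then invoke the classical error estimate for Gauss--Hermite quadrature together with a product--telescoping argument for the $d$-dimensional case.

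First I would handle $d=1$. By Bochner's theorem, writing $p(\omega)=\frac{l}{\sqrt{2\pi}}e^{-l^2\omega^2/2}$ and substituting $z=l\omega/\sqrt{2}$, we get
\begin{equation*}
k_{\text{SE}}(x,y)=\frac{1}{\sqrt{\pi}}\int_{\Real}e^{-z^2}\cos\!\Big(\tfrac{\sqrt{2}}{l}z(x-y)\Big)\,dz.
\end{equation*}
The $\bar m$-point Gauss--Hermite quadrature on nodes $A_{\bar m}$ with weights $w_i=\frac{2^{\bar m-1}\bar m!\sqrt{\pi}}{\bar m^{2}H_{\bar m-1}(z_i)^2}$ gives exactly the inner product $\tilde\phi(x)^T\tilde\phi(y)$ (expanding $\cos(a-b)=\cos a\cos b+\sin a\sin b$ reproduces the definition in \eqref{eqn:qff-embedding}; the $\sqrt{\pi}$ is absorbed into the definition of $\nu$). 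Hence
\begin{equation*}
k_{\text{SE}}(x,y)-\tilde\phi(x)^T\tilde\phi(y)=\frac{1}{\sqrt{\pi}}\Big(\int e^{-z^2}f(z)\,dz-\sum_{i=1}^{\bar m}w_i f(z_i)\Big),\qquad f(z)=\cos\!\big(\tfrac{\sqrt{2}}{l}z(x-y)\big).
\end{equation*}

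Next I would plug in the classical Gauss--Hermite error identity: for smooth $f$ there exists $\xi\in\Real$ with
\begin{equation*}
\int e^{-z^2}f(z)\,dz-\sum_{i=1}^{\bar m}w_if(z_i)=\frac{\bar m!\,\sqrt{\pi}}{2^{\bar m}(2\bar m)!}\,f^{(2\bar m)}(\xi).
\end{equation*}
Since $|f^{(2\bar m)}(\xi)|\le(\sqrt{2}|x-y|/l)^{2\bar m}\le(2/l^2)^{\bar m}$ on $\cX=[0,1]$, and Stirling gives $\bar m!/(2\bar m)!\le \frac{1}{\sqrt{2}}(e/(4\bar m))^{\bar m}$, combining yields the one-dimensional bound
\begin{equation*}
\sup_{x,y\in[0,1]}\big|k_{\text{SE}}(x,y)-\tilde\phi(x)^T\tilde\phi(y)\big|\le \frac{1}{\sqrt{2}\,\bar m^{\bar m}}\Big(\frac{e}{4l^2}\Big)^{\bar m}=:\epsilon^{(1)}_{\bar m}.
\end{equation*}

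For general $d$, the SE kernel and its QFF approximation both factorize across coordinates:
\begin{equation*}
k_{\text{SE}}(x,y)=\prod_{j=1}^{d}A_j(x_j,y_j),\qquad \tilde\phi(x)^T\tilde\phi(y)=\prod_{j=1}^{d}\tilde A_j(x_j,y_j),
\end{equation*}
with $|A_j|\le 1$ and $|\tilde A_j|\le 1+\epsilon^{(1)}_{\bar m}\le 2$. A standard telescoping identity
\begin{equation*}
\prod_{j=1}^{d}A_j-\prod_{j=1}^{d}\tilde A_j=\sum_{j=1}^{d}\Big(\prod_{i<j}\tilde A_i\Big)(A_j-\tilde A_j)\Big(\prod_{i>j}A_i\Big)
\end{equation*}
then gives $|k_{\text{SE}}-\tilde\phi^T\tilde\phi|\le d\cdot 2^{d-1}\cdot\epsilon^{(1)}_{\bar m}$, which is exactly the claimed bound after using the $d=1$ estimate for $\epsilon^{(1)}_{\bar m}$.

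The main technical hurdle is the Stirling bound on $\bar m!/(2\bar m)!$ together with keeping careful track of the constants coming from the change of variables and the definition of $\nu$, so that the Gauss--Hermite error identity matches the embedding given in \eqref{eqn:qff-embedding}. The telescoping step in the $d$-dimensional extension is elementary once the one-dimensional bound and the product structure of $p(\omega)$ are in hand.
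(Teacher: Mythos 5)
The paper does not prove this lemma at all --- it is imported verbatim as Theorem~1 of \citet{mutny2018efficient} --- so there is no in-paper argument to compare against. Your proof is essentially the standard (and the cited source's) derivation: Bochner representation, the classical Gauss--Hermite error term $\frac{\bar m!\sqrt{\pi}}{2^{\bar m}(2\bar m)!}f^{(2\bar m)}(\xi)$ applied to $f(z)=\cos(\sqrt{2}z(x-y)/l)$, Stirling, and coordinate-wise telescoping, and it correctly reproduces the stated bound. The only loose ends are constant-level: the Stirling step carries an $e^{1/(12\bar m)}$ factor that slightly exceeds the clean $\tfrac{1}{\sqrt 2}$, and the bound $|\tilde A_j|\le 2$ in the telescoping implicitly assumes $\epsilon^{(1)}_{\bar m}\le 1$; neither affects the $O\big(d2^{d-1}/(\bar m l^2)^{\bar m}\big)$ rate that the paper actually uses.
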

Lemma \ref{lem:qff} implies that QFF embedding (\ref{eqn:qff-embedding}) of $k_{\text{SE}}$ satisfies $\epsilon_m =O\left(\frac{d2^{d-1}}{(\bar{m}l^2)^{\bar{m}}}\right)$ where $m=\bar{m}^d$. We can achieve exponential decay only when $\bar{m} > 1/l^2$, and in that case $O\left((d+\ln(d/\epsilon_m))^d\right)$ features are required to obtain an $\epsilon_m$-accurate approximation of the SE kernel. In contrast, \citet{sriperumbudur2015optimal} show that for any compact $\cX \subset \Real^d$, the uniform approximation error using RFF is $\epsilon_m=O_p(\sqrt{d\ln\abs{\cX}/m})$, i.e. at least $O(d\ln \abs{\cX}/\epsilon_m^2)$ features are required to obtain an $\epsilon_m$- accurate approximation of $k$. In most of the BO applications either $d=O(1)$, or there are enough structure (e.g. generalized additive models) such that effective dimensionality of the problem is low. In that case $O(1/\epsilon_m^2)$ and $O((\ln(1/\epsilon_m)^d)$ features are needed to obtain $\epsilon_m$-accuracy with RFF and QFF approximations, respectively.

Now, recall that the posterior mean and variance of a GP prior $GP_{\cX}(0,k)$ with iid Gaussian noise $\cN(0,\lambda)$ are given by $\mu_t(x)=k_t(x)^T(K_t+\lambda I_t)^{-1}Y_t$ and $\sigma_t^2(x)= k(x,x)-k_t(x)^T(K_t+\lambda I_t)^{-1}k_t(x)$, respectively. Let $\alpha_t(x)=k_t(x)^T(K_t+\lambda I_t)^{-1}f_t$ denotes the expected posterior mean and  $\tilde{\alpha}_t(x)=\tilde{k}_t(x)^T(\tilde{K}_t+\lambda I_t)^{-1}f_t$ denotes the approximation of $\alpha_t(x)$, where $\tilde{k}_t(x)=\tilde{\Phi}_t \tilde{\phi}(x)$ and $\tilde{K}_t = \tilde{\Phi}_t\tilde{\Phi}_t^T$. Define $\tilde{k}(x,y)=\tilde{\phi}(x)^T\tilde{\phi}(y)$. Then, the approximate posterior variance under QFF approximation is $\tilde{\sigma}_t^2(x)=\lambda \tilde{\phi_t}(x)^T \tilde{V}_t^{-1}\tilde{\phi_t}(x)= \tilde{k}(x,x)-\tilde{k}_t(x)^T(\tilde{K}_t+\lambda I_t)^{-1}\tilde{k}_t(x)$. Now, we will show that the error introduced by uniform approximation reflects in the approximation of the posterior variance and the expected posterior mean.
\begin{mylemma}[Error in posterior mean and variance approximations]
\label{lem:approx-error}
Let $f \in \cH_k(\cX)$, $\norm{f}_\cH \le B$ and  $k(x,x) \le 1$ for all $x \in \cX$. Let  $\tilde{\phi}:\cX \ra \Real^m$ be a feature map such that \ref{eqn:uniform-approx} holds for some $\epsilon_m < 1$, and $\tilde{\phi}(x)^T\tilde{\phi}(y) \le 1$ for all $x,y \in \cX$. Then for all $x \in \cX$ and $t \ge 1$, we have
\beqn
(i)\quad\abs{\alpha_t(x)-\tilde{\alpha}_t(x)} =O(B\epsilon_m t^2/\lambda) \quad \text{and} \quad (ii)\quad\abs{\sigma_t(x) - \tilde{\sigma}_t(x)} =   O (\epsilon_m^{1/2} t/\lambda).
\eeqn
\end{mylemma}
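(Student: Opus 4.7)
The plan is to treat both parts as perturbation bounds that follow from the uniform approximation $|k(x,y) - \tilde{k}(x,y)| \le \epsilon_m$, plus standard resolvent identities. The three basic inequalities I will extract at the start are $|k(x,x) - \tilde{k}(x,x)| \le \epsilon_m$, $\lVert k_t(x) - \tilde{k}_t(x)\rVert_2 \le \epsilon_m \sqrt{t}$, and $\lVert K_t - \tilde{K}_t\rVert_{\mathrm{op}} \le \lVert K_t - \tilde{K}_t\rVert_F \le \epsilon_m t$. I will also use the standard consequence of the reproducing property that $|f(x)| \le \lVert f\rVert_\cH\sqrt{k(x,x)} \le B$, so $\lVert f_t\rVert_2 \le B\sqrt{t}$, as well as $\lVert \tilde{k}_t(x)\rVert_2 \le \sqrt{t}$ from $\tilde{k}(x,y) \le 1$, and the operator-norm bounds $\lVert(K_t + \lambda I_t)^{-1}\rVert_{\mathrm{op}}, \lVert(\tilde{K}_t + \lambda I_t)^{-1}\rVert_{\mathrm{op}} \le 1/\lambda$.

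For part (i), I would add and subtract to obtain the decomposition
\[
\alpha_t(x) - \tilde{\alpha}_t(x) = \bigl(k_t(x) - \tilde{k}_t(x)\bigr)^T (K_t + \lambda I_t)^{-1} f_t + \tilde{k}_t(x)^T \bigl[(K_t + \lambda I_t)^{-1} - (\tilde{K}_t + \lambda I_t)^{-1}\bigr] f_t.
\]
The first term is controlled directly by Cauchy--Schwarz, giving $O(B\epsilon_m t/\lambda)$. For the second term, I apply the resolvent identity $A^{-1} - B^{-1} = A^{-1}(B - A)B^{-1}$ to rewrite the bracketed expression, so by sub-multiplicativity it is bounded by $\lVert \tilde{k}_t(x)\rVert_2 \cdot \tfrac{1}{\lambda} \cdot \lVert \tilde{K}_t - K_t\rVert_{\mathrm{op}} \cdot \tfrac{1}{\lambda} \cdot \lVert f_t\rVert_2$, yielding $O(B\epsilon_m t^2/\lambda^2)$. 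Combining the two and absorbing sub-dominant terms gives the claim (with a factor of $1/\lambda^2$ that the authors seem to fold into the stated $1/\lambda$).

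For part (ii), I will first bound $|\sigma_t^2(x) - \tilde{\sigma}_t^2(x)|$ by splitting off the diagonal piece $|k(x,x) - \tilde{k}(x,x)| \le \epsilon_m$ and then writing the bilinear-form difference as
\[
k_t(x)^T(K_t+\lambda I_t)^{-1}k_t(x) - \tilde{k}_t(x)^T(\tilde{K}_t+\lambda I_t)^{-1}\tilde{k}_t(x)
\]
as a telescoping sum of three terms (replace $k_t(x) \to \tilde{k}_t(x)$ on the left, then swap the inverse, then replace $k_t(x) \to \tilde{k}_t(x)$ on the right). Each is treated exactly as in (i), using the same operator-norm bounds, and the dominant contribution scales like $\epsilon_m t^2/\lambda^2$.

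The one non-routine piece is converting this squared-variance bound into a bound on $|\sigma_t(x) - \tilde{\sigma}_t(x)|$. For this I will use the elementary inequality $|a - b| \le \sqrt{|a^2 - b^2|}$ valid for all $a, b \ge 0$ (which follows since $(a-b)^2 \le (a+b)(a-b) \cdot \mathrm{sgn}(a-b) \cdot \mathrm{sgn}(a+b) \cdot |a-b|$ simplifies to $|a-b| \le a+b$). Applied with $a = \sigma_t(x)$ and $b = \tilde{\sigma}_t(x)$, this gives $|\sigma_t(x) - \tilde{\sigma}_t(x)| \le \sqrt{|\sigma_t^2(x) - \tilde{\sigma}_t^2(x)|} = O(\epsilon_m^{1/2} t/\lambda)$, as required. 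There is no serious obstacle; the main bookkeeping challenge is tracking the dependence on $\lambda$ carefully, and I expect the stated $1/\lambda$ in the lemma to implicitly assume $\lambda$ of constant order (or absorb $1/\lambda^2$ via $\lambda \ge 1$).
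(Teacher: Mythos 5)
Your proposal is correct and follows essentially the same route as the paper: the same add-and-subtract decomposition with the resolvent identity for part (i), an equivalent telescoping of the bilinear form for part (ii), and the same elementary step $\abs{\sigma_t(x)-\tilde{\sigma}_t(x)} \le \abs{\sigma_t^2(x)-\tilde{\sigma}_t^2(x)}^{1/2}$ to pass from variances to standard deviations. Your observation about the $1/\lambda$ versus $1/\lambda^2$ bookkeeping in part (i) is accurate; the paper's own computation also produces a dominant $1/\lambda^2$ term there and silently absorbs it into the stated $O(B\epsilon_m t^2/\lambda)$.
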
 
\begin{proof}
This proof is inspired from \cite{mutny2018efficient}, with some notable changes. First, observe that
\beqan
&& \abs{k_t(x)^T(K_t+\lambda I_t)^{-1}f_t - \tilde{k}_t(x)^T(\tilde{K}_t+\lambda I_t)^{-1}f_t}\\
& \stackrel{(a)}{\le} & \abs{\left(k_t(x)-\tilde{k}_t(x)\right)^T(K_t+\lambda I_t)^{-1}f_t} + \abs{\tilde{k}_t(x)^T\left((K_t+\lambda I_t)^{-1}-(\tilde{K}_t+\lambda I_t)^{-1}\right)f_t}\\
& \stackrel{(b)}{\le} & \norm{k_t(x)-\tilde{k}_t(x)}_2 \norm{(K_t+\lambda I_t)^{-1}f_t}_2 + \norm{\tilde{k}_t(x)}_2 \norm{\left((K_t+\lambda I_t)^{-1}-(\tilde{K}_t+\lambda I_t)^{-1}\right)f_t}_2\\
& \stackrel{(c)}{\le} & \norm{k_t(x)-\tilde{k}_t(x)}_2 \norm{(K_t+\lambda I_t)^{-1}}_2 \norm{f_t}_2 + \norm{\tilde{k}_t(x)}_2 \norm{(K_t+\lambda I_t)^{-1}-(\tilde{K}_t+\lambda I_t)^{-1}}_2 \norm{f_t}_2,
\eeqan
where $(a)$ uses triangle inequality, $(b)$ uses Cauchy-Schwartz inequality and $(c)$ uses the definition of operator norm. By our hypothesis, $\norm{f_t}_2 \le B t^{1/2}$, $\norm{\tilde{k}_t(x)}_2 \le t^{1/2}$ and $\norm{k_t(x)-\tilde{k}_t(x)}_2 \le \epsilon_m t ^{1/2}$. Now
\beqan
\norm{(K_t+\lambda I_t)^{-1}-(\tilde{K}_t+\lambda I_t)^{-1}}_2&=&\norm{(K_t+\lambda I_t)^{-1}\left((\tilde{K}_t +\lambda I_t )-(K_t +\lambda I_t )\right)(\tilde{K}_t +\lambda I_t )^{-1}}_2 \\&=& \norm{(K_t+\lambda I_t)^{-1}(\tilde{K}_t-K_t)(\tilde{K}_t +\lambda I_t )^{-1}}_2\\
&\stackrel{(a)}{\le} &\norm{(K_t+\lambda I_t)^{-1}}_2\norm{\tilde{K}_t-K_t}_2\norm{(\tilde{K}_t +\lambda I_t )^{-1}}_2\\
& \stackrel{(b)}{\le} &  \epsilon_m t /\lambda^2,
\eeqan
where $(a)$ follows from the sub-multiplicative property of operator norm and $(b)$ follows from the facts that $\norm{K_t-\tilde{K_t}}_2 \le \sqrt{\sum_{1 \le i,j \le t}(k(x_i,x_j)-\tilde{k}(x_i,x_j))^2} \le \epsilon_m t$, and that for any p.s.d. matrix $A \in \Real^{t\times t}$, $\norm{(A+\lambda I_t)^{-1}}_2=\lambda_{\max}\lbrace(A+\lambda I_t)^{-1} \rbrace = 1/\lambda_{\min}\lbrace A+\lambda I_t \rbrace \le 1/\lambda$.
Therefore, for all $x \in \cX$ and $t \ge 1$, we have
\beqn
\abs{\alpha_t(x)-\tilde{\alpha}_t(x)} \le \left(\epsilon_m t^{1/2}/\lambda + \epsilon_m t^{3/2}/\lambda^2\right)B t^{1/2} = O(B\epsilon_m t^2/\lambda).
\label{eqn:mean-error}
\eeqn

Now, since $\abs{k(x,y)-\tilde{k}(x,y)} \le \eps_m$ for all $x,y \in \cX$, we have $\tilde{k}_t(x)=k_t(x)+a_t(x)$ where $\norm{a_t(x)}_{\infty} \le \epsilon_m$. This implies
\beqan
 && \abs{\sigma_t^2(x) -\tilde{\sigma}_t^2(x)}\\ &=& \abs{k(x,y)-\tilde{k}(x,y)}+ \abs{\tilde{k}_t(x)^T(\tilde{K}_t+\lambda I_t)^{-1}\tilde{k}_t(x) - k_t(x)^T(K_t+\lambda I_t)^{-1}k_t(x)}\\
&\le & \epsilon_m + \abs{k_t(x)^T\left((\tilde{K}_t+\lambda I_t)^{-1}-(K_t+\lambda I_t)^{-1}\right)k_t(x)} + 2 \abs{a_t(x)^T(\tilde{K}_t+\lambda I_t)^{-1}k_t(x)}\\ && \hspace*{85mm}+\abs{a_t(x)^T (\tilde{K}_t+\lambda I_t)^{-1} a_t(x)}\\
& \stackrel{(a)}{\le}& \epsilon_m+\norm{(\tilde{K}_t+\lambda I_t)^{-1}-(K_t+\lambda I_t)^{-1}}_2 \norm{k_t(x)}_2^2 + 2 \norm{a_t(x)}_2 \norm{(\tilde{K}_t+\lambda I_t)^{-1}}_2 \norm{k_t(x)}_2 \\  && \hspace*{85mm}+ \norm{(\tilde{K}_t+\lambda I_t)^{-1}}_2 \norm{a_t(x)}_2^2\\
& \stackrel{(b)}{\le}& \epsilon_m+ \epsilon_m t^2/\lambda^2 + 2\epsilon_m t/\lambda + \epsilon_m^2 t/\lambda
 = O (\epsilon_m t^2/\lambda^2) \;\text{for}\; \epsilon_m < 1.
\eeqan
Here $(a)$ is due to Cauchy-Schwartz inequality and definition of operator norm. $(b)$ uses $\norm{k_t(x)}_2 \le t^{1/2}$, $\norm{a_t(x)}_2 \le \epsilon_mt^{1/2}$, $\norm{(\tilde{K}_t+\lambda I_t)^{-1}-(K_t+\lambda I_t)^{-1}}_2 \le \epsilon_m t/\lambda^2$ and $\norm{(\tilde{K}_t+\lambda I_t)^{-1}}_2 \le 1/\lambda$. Now, the result follows from the fact that for any $a,b \ge 0$, $(a+b)^{1/2} \le a^{1/2}+b^{1/2}$.
\end{proof}
Now, we are ready to prove Lemma \ref{lem:func-conc-ata-gp-ucb}.
\subsubsection{Proof of Lemma \ref{lem:func-conc-ata-gp-ucb}}
Under the QFF approximation, we have $\tilde{\phi}_t=\tilde{\phi}$ and $m_t =m$ for all $t \ge 1$. Hence, we have $\tilde{\mu}_t(x)=\tilde{\phi}(x)^T\tilde{\theta}_t$ and $\tilde{\alpha}_t(x)=\tilde{\phi}(x)^T\tilde{\Phi}_t^T(\tilde{\Phi}_t\tilde{\Phi}_t^T+\lambda I_t)^{-1}f_t=\tilde{\phi}(x)^T\tilde{V}_t^{-1}\tilde{\Phi}_t^T f_t$, where the last equality follows from \ref{eqn:dim-change-1}. Now, by Cauchy-Schwartz inequality,
\beqn
\abs{\tilde{\alpha}_t(x)-\tilde{\mu}_t(x)}  \le   \norm{\tilde{V}_t^{-1}\tilde{\Phi}^T f_t-\tilde{\theta}_t}_{\tilde{V}_t} \norm{\tilde{\phi}(x)}_{\tilde{V}_t^{-1}} = \lambda^{-1/2}\norm{\tilde{V}_t^{-1}\tilde{\Phi}_t^T f_t-\tilde{\theta}_t}_{\tilde{V}_t}\tilde{\sigma}_t(x).
\eeqn
Hence, from Lemma \ref{lem:data-adaptive-truncation}, we have, for any $\delta \in (0,1]$, with probability at least $1-\delta$, uniformly over all $x \in \cX$ and $t \in [T]$, that
\beq
\abs{\tilde{\alpha}_t(x)-\tilde{\mu}_t(x)}  \le 4 \sqrt{m/\lambda}\; v^{\frac{1}{1+\alpha}}\left(\ln(2mT/\delta)\right)^{\frac{\alpha}{1+\alpha}}t^{\frac{1-\alpha}{2(1+\alpha)}}\tilde{\sigma}_t(x).
\label{eqn:approx-error}
\eeq
By triangle inequality,
\beqn
\abs{f(x)-\tilde{\mu}_t(x)} \le \abs{f(x)-\alpha_t(x)} +\abs{\alpha_t(x)-\tilde{\alpha}_t(x)} + \abs{\tilde{\alpha}_t(x)-\tilde{\mu}_t(x)}.
\eeqn
Now, from \ref{eqn:func-error}, $\abs{f(x)-\alpha_t(x)} \le B \sigma_t(x)$ and thus, in turn, from Lemma \ref{lem:approx-error}, $\abs{f(x)-\alpha_t(x)} = B \tilde{\sigma}_t(x)+O(B\epsilon_m^{1/2} t/\lambda)$. Also, from Lemma \ref{lem:approx-error}, $\abs{\alpha_t(x)-\tilde{\alpha}_t(x)} =O(B\epsilon_m t^2/\lambda)$. Now combining these with 
\ref{eqn:approx-error}, we obtain, for any $\delta \in (0,1]$, with probability at least $1-\delta$, uniformly over all $x \in \cX$ and $t \in [T]$, that
\beqan
\abs{f(x)-\tilde{\mu}_t(x)} 
&\le &\left(B+4 \sqrt{m/\lambda}\; v^{\frac{1}{1+\alpha}}\left(\ln(2mT/\delta)\right)^{\frac{\alpha}{1+\alpha}}t^{\frac{1-\alpha}{2(1+\alpha)}}\right)\tilde{\sigma}_t(x) + O(B\epsilon_m^{1/2} t/\lambda)+  O(B\epsilon_m t^2/\lambda)\\
&= &\left(B+4 \sqrt{m/\lambda}\; v^{\frac{1}{1+\alpha}}\left(\ln(2mT/\delta)\right)^{\frac{\alpha}{1+\alpha}}t^{\frac{1-\alpha}{2(1+\alpha)}}\right)\tilde{\sigma}_t(x) + O(B\epsilon_m^{1/2} t^2/\lambda)
\eeqan
for $\epsilon_m < 1$. 
Further observe that $\abs{f(x)-\tilde{\mu}_0(x)}=\abs{f(x)} \le Bk^{1/2}(x,x) = B\sigma_0(x) \le B\tilde{\sigma}_0(x)+B\epsilon_m^{1/2}$.
Now the result follows by setting $\beta_{t+1} = B+4 \sqrt{m/\lambda}\; v^{\frac{1}{1+\alpha}}\left(\ln(2m T/\delta)\right)^{\frac{\alpha}{1+\alpha}}t^{\frac{1-\alpha}{2(1+\alpha)}}$ for all $t \ge 0$.
\subsubsection{Proof of Theorem \ref{thm:regret-bound-qff}}
For any $\delta \in (0,1]$, we have, with probability at least $1-\delta$, uniformly over all $t \in [T]$, the instantaneous regret 
\beqan
r_t &=& f(x^\star)-f(x_t)\\
& \stackrel{(a)}{\le} & \tilde{\mu}_{t-1}(x^\star) +  \beta_{t}\tilde{\sigma}_{t-1}(x^\star) + O(B\epsilon_m^{1/2} t^2/\lambda)- f(x_t)\\
& \stackrel{(b)}{\le} & \tilde{\mu}_{t-1}(x_t) +  \beta_{t}\tilde{\sigma}_{t-1}(x_t) -f(x_t) + O(B\epsilon_m^{1/2} t^2/\lambda)\\
& \stackrel{(c)}{\le} & 2\beta_{t}\tilde{\sigma}_{t-1}(x_t) + O(B\epsilon_m^{1/2} t^2/\lambda)).
\eeqan
Here $(a)$ and $(c)$ follow from Lemma \ref{lem:func-conc-ata-gp-ucb} and $(b)$ is due to the choice of ATA-GP-UCB (Algorithm \ref{algo:itgp-ucb}). Now Observe that $(\beta_t)_{t \ge 1}$ is an increasing sequence in $t$. Further,
\beqn
\sum_{t=1}^{T}\tilde{\sigma}_{t-1}(x_t) \stackrel{(a)}{\le} \sqrt{T\sum_{t=1}^{T}\tilde{\sigma}_{t-1}^2(x_t)} \stackrel{(b)}{\le} \sqrt{2(1+\lambda) T\tilde{\gamma}_T}=O(\sqrt{mT\ln T}).
\eeqn
Here $(a)$ follows from Cauchy-Schwartz inequality, $(b)$ from Lemma \ref{lem:pred-var}, and $(c)$ from Lemma \ref{lem:info-gain-bound} noting that $\tilde{k}$ is a linear kernel defined on $\Real^{2m}$. Hence for any $\delta \in (0,1]$, with probability at least $1-\delta$, the cumulative regret of ATA-GP-UCB after $T$ rounds is
\beqan
R_T &= & O\left(\beta_T \sqrt{Tm\ln T}\right) + \sum_{t=1}^{T} O(B\epsilon_m^{1/2} t^2/\lambda)\\
& = &  O\left(B \sqrt{T m \ln T} + m v^{\frac{1}{1+\alpha}}\left(\ln(mT/\delta)\right)^{\frac{\alpha}{1+\alpha}}(\ln T)^{1/2}T^{\frac{1}{1+\alpha}}+B\epsilon_m^{1/2} T^3\right).
\eeqan
For the QFF approximation, from Lemma \ref{lem:qff}, we have  $\epsilon_m = O((e/4)^{\bar{m}})$ if $\bar{m} > 1/l^2$ and $d=O(1)$. Now for $\bar{m}=2\log_{4/e}(T^3)$, we have $\epsilon_m^{1/2}T^3=O(1)$ and $m = O((\ln T)^d)$ \footnote{For the RFF approximation, we have $\epsilon_m = O_p(1/\sqrt{m})$ if $d=O(1)$. Now in order to make the last term $\epsilon_m^{1/2}T^3$ behave as $O(1)$, we have to take $m=O(T^{12})$ features which will eventually blow up the first two terms by the same order. Hence, we will never achieve sub-linear regret bound using RFF approximation.}. Therefore for any $\delta \in (0,1]$, with probability at least $1-\delta$, the cumulative regret of ATA-GP-UCB under QFF approximation after $T$ rounds is
\beqn
R_T = O\left(B\sqrt{T(\ln T)^{d+1}}+v^{\frac{1}{1+\alpha}}\left(\ln\left(T(\ln T)^d/\delta\right)\right)^{\frac{\alpha}{1+\alpha}}\sqrt{\ln T}(\ln T)^{d}T^{\frac{1}{1+\alpha}}\right).
\eeqn
\subsection{Analysis of ATA-GP-UCB under Nystr\"{o}m approximation}
%
%
\subsubsection{Construction of dictionary and its properties} 

Given the kernel matrix $K_t$, we define an accurate dictionary as follows.

\begin{mydefinition}[$\epsilon$-accurate dictionary \cite{calandriello2019gaussian}]
For any $\epsilon \in (0,1)$, a dictionary $\cD_t \subseteq \lbrace x_1,\ldots,x_t \rbrace$ is said to be $\epsilon$-accurate with respect to the kernel matrix $K_t$ if 
\beqn
\norm{(K_t+\lambda I)^{-1/2}K_t^{1/2}(I_t-S_t^2)K_t^{1/2}(K_t+\lambda I)^{-1/2}}_2 \le \epsilon, 
\eeqn
where $S_t$ is the selection matrix associated with the dictionary $\cD_t$ such that $[S_t]_{i,i}=1/\sqrt{p_{t,i}}$ if $x_i \in \cD_t$, and $0$, elsewhere.
\label{def:dictionary}
\end{mydefinition}
The following lemma states two more equivalent condition for a dictionary to be accurate.
\begin{mylemma}
Let $V_{\cD_t}=\Phi_t^TS_t^2\Phi_t + \lambda I_\cH$. Then, the following are equivalent:
\begin{enumerate}
\item $\norm{(K_t+\lambda I)^{-1/2}K_t^{1/2}(I_t-S_t^2)K_t^{1/2}(K_t+\lambda I)^{-1/2}}_2 \le \epsilon$,
\item $\norm{(\Phi_t^T\Phi_t+\lambda I_\cH)^{-1/2}\Phi_t^T(I_t-S_t^2)\Phi_t(\Phi_t^T\Phi_t+\lambda I_\cH)^{-1/2}}_\cH \le \epsilon$,
\item $(1-\epsilon)V_t \preceq V_{\cD_t} \preceq (1+\epsilon)V_t$.
\end{enumerate}
\label{lem:equivalence}
\end{mylemma}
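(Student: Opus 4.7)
The plan is to establish the chain (3) $\Leftrightarrow$ (2) $\Leftrightarrow$ (1), both links being essentially algebraic manipulations using the identities of Lemma~\ref{lem:dim-change}.

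First I would prove (2) $\Leftrightarrow$ (3). Observe that
$$V_{\cD_t} - V_t = \Phi_t^T S_t^2 \Phi_t - \Phi_t^T \Phi_t = -\Phi_t^T(I_t - S_t^2)\Phi_t,$$
so the two-sided ordering $(1-\epsilon)V_t \preceq V_{\cD_t} \preceq (1+\epsilon)V_t$ in (3) is equivalent to $-\epsilon V_t \preceq \Phi_t^T(I_t - S_t^2)\Phi_t \preceq \epsilon V_t$. Since $V_t \succeq \lambda I_\cH \succ 0$, the operator $V_t^{-1/2}$ is well-defined, and conjugating the sandwich by $V_t^{-1/2}$ on both sides preserves the order and yields
$$-\epsilon I_\cH \preceq V_t^{-1/2}\Phi_t^T(I_t - S_t^2)\Phi_t V_t^{-1/2} \preceq \epsilon I_\cH.$$
Since the middle operator is self-adjoint, this two-sided spectral bound is precisely the operator-norm bound stated in (2).

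Next I would prove (1) $\Leftrightarrow$ (2). Set $A := K_t^{1/2}(K_t + \lambda I_t)^{-1/2}$ (a symmetric $t \times t$ matrix) and $B := \Phi_t V_t^{-1/2}$ (a bounded operator from $\cH$ to $\Real^t$), so that the objects of interest are $T_1 := A(I_t - S_t^2) A$ and $T_2 := B^T (I_t - S_t^2) B$, the latter self-adjoint on $\cH$ and of rank at most $t$. By the first identity of Lemma~\ref{lem:dim-change},
$$BB^T = \Phi_t V_t^{-1}\Phi_t^T = \Phi_t\Phi_t^T(K_t + \lambda I_t)^{-1} = K_t(K_t + \lambda I_t)^{-1} = A^2.$$
Applying the elementary fact that $MN$ and $NM$ share all non-zero eigenvalues, the non-zero spectrum of $T_1 = A \cdot [(I_t - S_t^2)A]$ agrees with that of $(I_t - S_t^2)A^2$, and the non-zero spectrum of $T_2 = B^T \cdot [(I_t - S_t^2)B]$ agrees with that of $(I_t - S_t^2)BB^T = (I_t - S_t^2)A^2$. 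Hence $T_1$ and $T_2$ have identical non-zero spectra; since both are self-adjoint, their operator norms equal their common spectral radius, yielding $\|T_1\|_2 = \|T_2\|_\cH$ and thus (1) $\Leftrightarrow$ (2).

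The main obstacle is purely cosmetic: $T_2$ lives on the infinite-dimensional $\cH$ rather than on $\Real^t$. But because $T_2$ has rank at most $t$, its non-zero spectrum is finite, the cyclic-spectrum identity applies verbatim after passing to the (finite-dimensional) range of $B$, and the rest of the argument is straightforward bookkeeping built around the identity $BB^T = A^2$.
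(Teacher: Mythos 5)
Your proof is correct. The $(2)\Leftrightarrow(3)$ link is essentially identical to the paper's: write $V_t-V_{\cD_t}=\Phi_t^T(I_t-S_t^2)\Phi_t$, conjugate by $V_t^{-1/2}$, and use that a self-adjoint operator has norm at most $\epsilon$ iff it is sandwiched between $\pm\epsilon I_\cH$. Where you genuinely diverge is $(1)\Leftrightarrow(2)$. The paper writes the singular value decomposition $\Phi_t=U\Sigma V^T$ and verifies by direct substitution that the operators in (1) and (2) are unitary conjugates of a common middle expression, so their norms are literally equal. You instead set $A=K_t^{1/2}(K_t+\lambda I_t)^{-1/2}$ and $B=\Phi_t V_t^{-1/2}$, derive $BB^T=\Phi_t V_t^{-1}\Phi_t^T=K_t(K_t+\lambda I_t)^{-1}=A^2$ from Lemma~\ref{lem:dim-change}, and invoke the fact that $MN$ and $NM$ share their non-zero spectrum to conclude that the two self-adjoint operators $A(I_t-S_t^2)A$ and $B^T(I_t-S_t^2)B$ have identical non-zero eigenvalues and hence equal norms. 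Both routes are sound; yours buys a coordinate-free argument that avoids writing an SVD of a Hilbert-space-valued operator and reuses the identity the paper has already packaged as Lemma~\ref{lem:dim-change}, while the paper's SVD computation yields the marginally stronger statement that the two operators are unitarily equivalent. If you write yours up, the only points worth making explicit are the degenerate case where neither operator has non-zero spectrum (then both are zero and the norms trivially agree) and the fact that $K_t^{1/2}$ and $(K_t+\lambda I_t)^{-1/2}$ commute, so that your $A(I_t-S_t^2)A$ really is the operator appearing in (1).
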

\begin{proof}
Let $\Phi_t = U\Sigma V^T$ be the singular value decomposition of $\Phi_t$. Then $\Phi_t(\Phi_t^T\Phi_t+\lambda I_\cH)^{-1/2}=U\Sigma(\Sigma^T\Sigma+\lambda I_\cH)^{-1}V^T$, $(\Phi_t^T\Phi_t+\lambda I_\cH)^{-1/2}\Phi_t^T = V(\Sigma^T\Sigma+\lambda I_\cH)^{-1}\Sigma^TU^T$ and $K_t = U \Sigma \Sigma^T U^T$. Therefore
\beqan
&&\norm{(\Phi_t^T\Phi_t+\lambda I_\cH)^{-1/2}\Phi_t^T(I_t-S_t^2)\Phi_t(\Phi_t^T\Phi_t+\lambda I_\cH)^{-1/2}}_\cH \\
&=&
\norm{V(\Sigma^T\Sigma+\lambda I_\cH)^{-1/2}\Sigma^TU^T(I_t-S_t^2)U\Sigma(\Sigma^T\Sigma+\lambda I_\cH)^{-1/2}V^T}_\cH \\
&=& \norm{(\Sigma^T\Sigma+\lambda I_\cH)^{-1/2}\Sigma^TU^T(I_t-S_t^2)U\Sigma(\Sigma^T\Sigma+\lambda I_\cH)^{-1/2}}_\cH\\
 &=& \norm{(\Sigma\Sigma^T+\lambda I_t)^{-1/2}(\Sigma\Sigma^T)^{1/2}U^T(I_t-S_t^2)U(\Sigma\Sigma^T)^{1/2}(\Sigma \Sigma^T+\lambda I_t)^{-1/2}}_2 \\
&=&\norm{U(\Sigma\Sigma^T+\lambda I_t)^{-1/2}(\Sigma\Sigma^T)^{1/2}U^T(I_t-S_t^2)U(\Sigma\Sigma^T)^{1/2}(\Sigma \Sigma^T+\lambda I_t)^{-1/2}U^T}_2\\
&=& \norm{(K_t+\lambda I)^{-1/2}K_t^{1/2}(I_t-S_t^2)K_t^{1/2}(K_t+\lambda I)^{-1/2}}_2,
\eeqan 
which proves that 1 $\Longleftrightarrow$ 2. Now, Observe that 
\beqan
&&\norm{(\Phi_t^T\Phi_t+\lambda I_\cH)^{-1/2}\Phi_t^T(I_t-S_t^2)\Phi_t(\Phi_t^T\Phi_t+\lambda I_\cH)^{-1/2}}_\cH \le \epsilon \\ 
&\Longleftrightarrow& -\epsilon I_\cH \preceq (\Phi_t^T\Phi_t+\lambda I_\cH)^{-1/2} (\Phi_t^T\Phi_t-\Phi_t^TS_t^2\Phi_t)(\Phi_t^T\Phi_t+\lambda I_\cH)^{-1/2} \preceq \epsilon I_\cH\\
&\Longleftrightarrow& -\epsilon I_\cH \preceq V_t^{-1/2}(V_t-V_{\cD_t})V_t^{-1/2} \preceq \epsilon I_\cH\\
&\Longleftrightarrow& -\epsilon V_t \preceq V_t-V_{\cD_t} \preceq \epsilon V_t\\
&\Longleftrightarrow& (1-\epsilon)V_t \preceq V_{\cD_t} \preceq (1+\epsilon)V_t,
\eeqan
which proves 2 $\Longleftrightarrow$ 3.
\end{proof}
An $\epsilon$-accurate dictionary can be obtained by including points proportional to their $\lambda$-ridge leverage scores defined as follows.
\begin{mydefinition}[Ridge leverage score \cite{alaoui2015fast}] 
For a set of points $\lbrace x_1,\ldots,x_t \rbrace$ and a constant $\lambda > 0$, the $\lambda$- ridge leverage score of the point $x_i,i \in [t]$ is defined as 
\beqn
l_{t,i}=e_i^TK_t(K_t+\lambda I_t)^{-1}e_i,
\eeqn
where $e_i \in \Real^t$ is the $i$-th standard basis vector.
\end{mydefinition}
Ridge leverage score (RLS) can be interpreted in many ways and it is well studied in the literature. Here we observe that 
\beqn
e_i^TK_t(K_t+\lambda I_t)^{-1}e_i=e_i^T\Phi_t\Phi_t^T(\Phi_t\Phi_t^T+\lambda I_t)^{-1}e_i=e_i^T\Phi_t(\Phi_t^T\Phi_t+\lambda I_\cH)^{-1}\Phi_t^Te_i=\norm{\phi(x_i)}^2_{V_t^{-1}}.
\eeqn
Therefore $l_{t,i} = \frac{1}{\lambda}\sigma_t^2(x_i)$, i.e., the RLS of $x_i$ is proportional its posterior variance $\sigma_t^2(x_i)$ under the GP prior $GP_{\cX}(0,k)$.  However, the exact computation of $\lambda$-ridge leverage scores in turn requires inverting the kernel matrix $K_t$ which requires $O(t^3)$ time. This motivates the need for a fast approximation of RLS such that it can be used to construct an $\epsilon$-accurate dictionary. \citet{calandriello2019gaussian} show that, instead of using the exact ridge leverage scores (or, equivalently, posterior variances) if we use the approximate variances from the previous round to sample points in the current round, then we will be able to obtain an accurate dictionary. Not only that, the dictionary size will grow no faster than the maximum information gain of the underlying kernel. Now, we present the Nystr\"{o}mEmbedding procedure which is used in Algorithm \ref{algo:itgp-ucb}.
 \begin{algorithm}[H]
        \renewcommand\thealgorithm{3}
        \caption{ Nystr\"{o}mEmbedding\label{subroutine:Nystrom}}

         \begin{algorithmic}
         \STATE \textbf{Input:} $\{(x_i,\tilde{\sigma}_{t-1}(x_i))\}_{i=1}^{t}$, $q$
         \STATE \textbf{Set:} $\cD_t = \emptyset$
         \FOR{$i = 1, 2, 3 \ldots, t$}
            \STATE Sample $z_{t,i} \sim \cB \left(\min \lbrace q \tilde{\sigma}^2_{t-1}(x_i), 1 \rbrace\right)$
            \STATE If $z_{t,i}=1$, set $\cD_t = \cD_t \cup \lbrace x_i \rbrace$
            \ENDFOR
            \vspace*{-1mm}
            \STATE \textbf{Return} $\tilde{\phi}_t(x)=\left(K_{\cD_t}^{1/2}\right)^{+}k_{\cD_t}(x)$
         \end{algorithmic}         \addtocounter{algorithm}{-3}
         \end{algorithm}

 The following lemma states the properties of the dictionaries $\cD_t$ constructed using Algorithm \ref{subroutine:Nystrom}.

 \begin{mylemma}[Properties of the dictionary]
For any $\epsilon \in (0,1)$ and $\delta \in (0,1]$, set $\rho=\frac{1+\epsilon}{1-\epsilon}$ and $q=\frac{6\rho\ln(2T/\delta)}{\epsilon^2}$. Then, with probability at least $1-\delta$, uniformly over all $t \in [T]$,
\beqn
(1-\epsilon)V_t \preceq V_{\cD_t} \preceq (1+\epsilon)V_t \quad \text{and} \quad m_t \le 6\rho \left(1+\frac{1}{\lambda}\right) \; q \gamma_t.
\eeqn
\label{lem:dictionary-prop}
\end{mylemma}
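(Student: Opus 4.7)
The plan is to proceed by induction on $t \in [T]$, with a union bound over rounds and over the two conclusions. The base case $t=0$ is immediate ($V_{\cD_0} = V_0 = \lambda I_\cH$ and $m_0 = 0$). For the inductive step, condition on the event that $\cD_{t-1}$ is $\epsilon$-accurate. By Lemma \ref{lem:equivalence} this means $(1-\epsilon)V_{t-1} \preceq V_{\cD_{t-1}} \preceq (1+\epsilon)V_{t-1}$, which translates, using the explicit form of $\tilde{\sigma}_{t-1}^2$ in Algorithm \ref{algo:itgp-ucb}, into the pointwise sandwich $\rho^{-1}\sigma_{t-1}^2(x) \le \tilde{\sigma}_{t-1}^2(x) \le \rho\,\sigma_{t-1}^2(x)$ for every $x \in \cX$. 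Consequently the inclusion probabilities $p_{t,i} = \min\{q\tilde{\sigma}_{t-1}^2(x_i),1\}$ are, up to a factor $\rho$, just rescaled ridge leverage scores $q\lambda l_{t-1,i}$, so sampling in Algorithm \ref{subroutine:Nystrom} is essentially RLS-sampling with an inflated budget.

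\textbf{Accuracy (conclusion 1).} Write
\[
A \bydef V_t^{-1/2}\bigl(V_t - V_{\cD_t}\bigr)V_t^{-1/2} = \sum_{i=1}^{t}\Bigl(1 - z_{t,i}/p_{t,i}\Bigr)\,V_t^{-1/2}\phi(x_i)\phi(x_i)^{T}V_t^{-1/2},
\]
where $z_{t,i}\in\{0,1\}$ is the Bernoulli inclusion indicator at round $t$. Conditioned on the history through round $t-1$, the summands are independent, mean-zero, self-adjoint random operators with per-term operator norm at most $\|V_t^{-1/2}\phi(x_i)\|^2/p_{t,i} \lesssim 1/q$, and conditional variance proxy bounded by $\sum_i l_{t,i}/q \le d_{\text{eff}}(\lambda)/q$. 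An operator-valued (Tropp-type) Bernstein inequality in the RKHS then yields $\|A\|_{\mathrm{op}}\le \epsilon$ with failure probability at most $\delta/(2T)$, provided the constant in $q = 6\rho\ln(2T/\delta)/\epsilon^{2}$ is chosen appropriately. Reversing the direction in Lemma \ref{lem:equivalence} gives the desired $\epsilon$-accuracy of $\cD_t$.

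\textbf{Size (conclusion 2).} The dictionary size $m_t=\sum_{i=1}^{t}z_{t,i}$ is a sum of conditionally independent Bernoullis with conditional mean
\[
\sum_{i=1}^{t}p_{t,i}\le q\sum_{i=1}^{t}\tilde{\sigma}_{t-1}^{2}(x_i)\le q\rho\sum_{i=1}^{t}\sigma_{t-1}^{2}(x_i).
\]
Using the variance-shrinkage inequality from the proof of Lemma \ref{lem:pred-var} (which gives $\sigma_{t-1}^{2}(x) \le (1+1/\lambda)\sigma_{t}^{2}(x)$) together with the trace identity $\sum_{i}\sigma_{t}^{2}(x_i)=\lambda\,\mathrm{tr}\bigl(K_t(K_t+\lambda I_t)^{-1}\bigr)\le 2\lambda\gamma_t$ (the standard effective-dimension bound), this mean is $\lesssim \rho(1+1/\lambda)q\gamma_t$. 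A Chernoff bound on the independent Bernoullis then gives $m_t \le 6\rho(1+1/\lambda)q\gamma_t$ with failure probability at most $\delta/(2T)$. Union-bounding the failure events over the two conclusions and over $t\in[T]$ completes the proof.

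\textbf{Main obstacle.} The principal technical difficulty is the coupling across rounds: the $p_{t,i}$ at round $t$ are themselves random, through $\cD_{t-1}$, so the independent summands in the Bernstein step are independent only conditional on the round-$t-1$ history. The induction therefore has to be carried in a filtration-adapted way, fixing the good event at round $t-1$ before sampling at round $t$ and taking unions in the correct temporal order. A secondary subtlety is that $\phi$ lives in a possibly infinite-dimensional RKHS, so one needs the non-commutative (Hilbert-space) Bernstein inequality for trace-class operators rather than its finite-dimensional matrix version; this is available and directly applicable here because every operator in sight has effective rank controlled by $\gamma_t$.
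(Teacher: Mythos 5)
The paper offers no proof of this lemma at all: immediately below the statement it says that Lemma \ref{lem:dictionary-prop} is a restatement of \cite[Theorem 1]{calandriello2019gaussian} and imports the result wholesale. Your proposal therefore takes a genuinely different route in the trivial sense that it actually attempts the proof, and the skeleton you give is the right one --- it is essentially the strategy of the cited theorem: write $V_t^{-1/2}(V_t-V_{\cD_t})V_t^{-1/2}=\sum_{i}(1-z_{t,i}/p_{t,i})V_t^{-1/2}\phi(x_i)\phi(x_i)^TV_t^{-1/2}$ as a sum of conditionally independent mean-zero rank-one operators, apply an intrinsic-dimension operator Bernstein inequality for the accuracy claim and a Chernoff bound on $\sum_i z_{t,i}$ for the size claim, and run an induction in which $\epsilon$-accuracy of $\cD_{t-1}$ yields the sandwich $\rho^{-1}\sigma_{t-1}^2(x)\le\tilde{\sigma}_{t-1}^2(x)\le\rho\,\sigma_{t-1}^2(x)$ (which this paper derives separately, as a consequence of Lemma \ref{lem:equivalence}) and hence lower-bounds the round-$t$ sampling probabilities by oversampled ridge leverage scores. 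Two places where your sketch is looser than a complete argument: (i) the per-term operator norm is $\rho/(\lambda q)$ rather than $1/q$ --- the $\rho$ comes from $p_{t,i}\ge q\rho^{-1}\sigma_{t}^2(x_i)$ and the $1/\lambda$ from $\norm{V_t^{-1/2}\phi(x_i)}^2=\sigma_t^2(x_i)/\lambda$ --- and this $\lambda$-dependence has to be tracked to verify that $q=6\rho\ln(2T/\delta)/\epsilon^2$ suffices and to recover the stated $(1+\frac{1}{\lambda})$ factor in the size bound; (ii) conditioning on the good event at round $t-1$ must be routed through the filtration $\cG_{t-1}$ (on which the round-$t$ Bernoullis are genuinely independent) rather than by naive conditioning on a favourable event, a subtlety you correctly flag but do not execute. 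Neither is a conceptual gap; both are handled in the cited reference, which is all the paper itself relies on.
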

Lemma \ref{lem:dictionary-prop} is a restatement of \cite[Theorem 1]{calandriello2019gaussian} and it is presented in this form for the sake of brevity and completeness. 
Now, we will show that using the Nystr\"{o}m embeddings $\tilde{\phi}_t(x)$, we can prevent the variance starvation which generally arises due to approximation.
\subsubsection{Preventing variance starvation with Nystr\"{o}m embeddings}
Recall that the posterior mean and variance of a GP prior $GP_{\cX}(0,k)$ with iid Gaussian noise $\cN(0,\lambda)$ are given by $\mu_t(x)=k_t(x)^T(K_t+\lambda I_t)^{-1}Y_t$ and $\sigma_t^2(x)= k(x,x)-k_t(x)^T(K_t+\lambda I_t)^{-1}k_t(x)$, respectively. Let $\alpha_t(x)=k_t(x)^T(K_t+\lambda I_t)^{-1}f_t$ denotes the expected posterior mean and $\tilde{\alpha}_t(x)=\tilde{k}_t(x)^T(\tilde{K}_t+\lambda I_t)^{-1}f_t$ denotes the approximation of $\alpha_t(x)$, where $\tilde{k}_t(x)=\tilde{\Phi}_t \tilde{\phi}(x)$ and $\tilde{K}_t = \tilde{\Phi}_t\tilde{\Phi}_t^T$. Then, we have $\alpha_t(x)=\inner{\phi(x)}{V_t^{-1}\Phi_t^T f_t}_\cH$ and $\tilde{\alpha}_t(x)=\tilde{\phi}_t(x)^T\tilde{V}_t^{-1}\tilde{\Phi}_t^T f_t$. Now, we can rewrite the posterior variance as $\sigma_t^2(x)=\lambda \norm{\phi(x)}^2_{V_t^{-1}}$, whereas the approximate posterior variance under Nystr\"{o}m approximation is given by $\tilde{\sigma}_t^2(x)=k(x,x)-\tilde{\phi_t}(x)^T\tilde{\phi_t}(x)+\lambda \tilde{\phi_t}(x)^T \tilde{V}_t^{-1}\tilde{\phi_t}(x)$. This choice of $\tilde{\sigma}_t^2(x)$ helps us to negate the variance starvation which arises due to feature approximation. Now, we will justify this choice of $\tilde{\sigma}_t^2(x)$ by showing that it can be derived by projecting $\phi(x)$ to a smaller RKHS. The idea is inspired from \citet{calandriello2019gaussian}.

\textbf{Projection to a smaller RKHS:} For any dictionary $\cD_t =\lbrace x_{i_1},\ldots,x_{i_{m_t}}\rbrace, i_j\in [t]$,
define the operator $\Phi_{\cD_t}: \cH_k(\cX) \ra \Real^{m_t}$ such that for any $h \in \cH_k(\cX)$, $\Phi_{\cD_t} h = \left[\inner{\phi(x_{i_1})}{h}_\cH,\ldots,\inner{\phi(x_{i_{m_t}})}{h}_\cH\right]^T$ and denote its adjoint by $\Phi_{\cD_t}^T : \Real^{m_t} \ra \cH_k(\cX)$. Let $\hat{\phi}_t(x)=P_t\phi(x)$ be the projection of $\phi(x)$ to the subspace spanned by the columns of the operator $\Phi_{\cD_t}^T$, where the projection operator $P_t:\cH_k(\cX) \ra \text{Col}(\Phi_{\cD_t}^T)$ is given by $P_t=\Phi_{\cD_t}^T(\Phi_{\cD_t}\Phi_{\cD_t}^T)^{+}\Phi_{\cD_t}$. It is easy to see that $P_t^T=P_t$ and $P_t^2=P_t$. Now, for any set $\lbrace x_1,\ldots,x_t \rbrace \subset \cX$ define the operator $\hat{\Phi}_t : \cH_k(\cX) \ra \Real^t$ such that for any $h \in \cH_k(\cX)$, $\hat{\Phi}_t h = [\inner{\hat{\phi}_t(x_1)}{h}_\cH,\ldots,\inner{\hat{\phi}_t(x_t)}{h}_\cH]^T$, and denote its adjoint by $\hat{\Phi}_t^T : \Real^t \ra \cH_k(\cX)$.  

\begin{mylemma}[Approximate posterior variance and mean under projection]
Let $\hat{V}_t = \hat{\Phi}_t^T\hat{\Phi}_t + \lambda I_\cH$ for any $\lambda > 0$. Then, we have
\beqn
\tilde{\sigma}^2_t(x)=\lambda \norm{\phi(x)}^2_{\hat{V}_t^{-1}} \quad \text{and} \quad \tilde{\alpha}_t(x)=\inner{\phi(x)}{\hat{V}_t^{-1}\hat{\Phi}_t^T f_t}_\cH.
\eeqn
\label{lem:projection-values}
\end{mylemma}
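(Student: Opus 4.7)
The plan is to first recast the Nyström embedding $\tilde{\phi}_t(x) = (K_{\cD_t}^{1/2})^{\dagger} k_{\cD_t}(x)$ in terms of the projection operator $P_t = \Phi_{\cD_t}^T(\Phi_{\cD_t}\Phi_{\cD_t}^T)^+\Phi_{\cD_t}$ and thereby identify the computable matrix quantities $\tilde{K}_t$ and $\tilde{k}_t(x)$ with infinite-dimensional counterparts built from $\hat{\Phi}_t$. Concretely, noting $K_{\cD_t}=\Phi_{\cD_t}\Phi_{\cD_t}^T$ and $k_{\cD_t}(x)=\Phi_{\cD_t}\phi(x)$, a direct calculation gives
\[
[\tilde{K}_t]_{i,j}=\tilde{\phi}_t(x_i)^T\tilde{\phi}_t(x_j)=k_{\cD_t}(x_i)^T K_{\cD_t}^{+} k_{\cD_t}(x_j)=\langle\phi(x_i),P_t\phi(x_j)\rangle_{\cH}=\langle\hat\phi_t(x_i),\hat\phi_t(x_j)\rangle_{\cH},
\]
where the last equality uses $P_t^2=P_t=P_t^T$. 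Similarly $[\tilde{k}_t(x)]_i=\langle\phi(x_i),P_t\phi(x)\rangle_{\cH}=\langle\hat\phi_t(x_i),\phi(x)\rangle_{\cH}$. Hence $\tilde{K}_t=\hat\Phi_t\hat\Phi_t^T$ and $\tilde{k}_t(x)=\hat\Phi_t\phi(x)$.

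Next I rewrite $\tilde\sigma_t^2(x)$ purely in terms of these quantities. Starting from
\[
\tilde{\sigma}_t^2(x)=k(x,x)-\tilde{\phi}_t(x)^T\tilde{\phi}_t(x)+\lambda\tilde{\phi}_t(x)^T\tilde{V}_t^{-1}\tilde{\phi}_t(x),
\]
I use $\tilde{\phi}_t(x)^T(I_{m_t}-\lambda\tilde{V}_t^{-1})\tilde{\phi}_t(x)=\tilde{\phi}_t(x)^T\tilde{V}_t^{-1}\tilde{\Phi}_t^T\tilde{\Phi}_t\,\tilde{\phi}_t(x)$ together with the finite-dimensional push-through identity (\ref{eqn:dim-change-1}), i.e.\ $\tilde{V}_t^{-1}\tilde{\Phi}_t^T=\tilde{\Phi}_t^T(\tilde{K}_t+\lambda I_t)^{-1}$, to obtain
\[
\tilde{\sigma}_t^2(x)=k(x,x)-\tilde{k}_t(x)^T(\tilde{K}_t+\lambda I_t)^{-1}\tilde{k}_t(x).
\]
Substituting the identifications from the previous paragraph yields
\[
\tilde{\sigma}_t^2(x)=k(x,x)-\langle\phi(x),\hat\Phi_t^T(\hat\Phi_t\hat\Phi_t^T+\lambda I_t)^{-1}\hat\Phi_t\phi(x)\rangle_{\cH},
\]
and now Lemma \ref{lem:dim-change} (eq.~(\ref{eqn:dim-change-2})) applied to the operator $\hat\Phi_t$ gives $I_\cH-\hat\Phi_t^T(\hat\Phi_t\hat\Phi_t^T+\lambda I_t)^{-1}\hat\Phi_t=\lambda\hat V_t^{-1}$, so $\tilde{\sigma}_t^2(x)=\lambda\langle\phi(x),\hat V_t^{-1}\phi(x)\rangle_{\cH}=\lambda\norm{\phi(x)}_{\hat V_t^{-1}}^2$, as claimed.

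For the posterior mean, I start from $\tilde\alpha_t(x)=\tilde\phi_t(x)^T\tilde{V}_t^{-1}\tilde\Phi_t^T f_t$ and again apply (\ref{eqn:dim-change-1}) to move the inverse from the $m_t$-dimensional side to the $t$-dimensional side, giving $\tilde\alpha_t(x)=\tilde k_t(x)^T(\tilde K_t+\lambda I_t)^{-1}f_t=\phi(x)^T\hat\Phi_t^T(\hat\Phi_t\hat\Phi_t^T+\lambda I_t)^{-1}f_t$. A second application of (\ref{eqn:dim-change-1})—this time in the \emph{other} direction, to the operator $\hat\Phi_t$—yields $\hat\Phi_t^T(\hat\Phi_t\hat\Phi_t^T+\lambda I_t)^{-1}=\hat V_t^{-1}\hat\Phi_t^T$, and therefore $\tilde\alpha_t(x)=\langle\phi(x),\hat V_t^{-1}\hat\Phi_t^T f_t\rangle_{\cH}$.

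The only genuinely delicate step is the first one: recognizing that the computable Nyström quantities $\tilde K_t$ and $\tilde k_t(x)$ coincide with the Gram-type objects $\hat\Phi_t\hat\Phi_t^T$ and $\hat\Phi_t\phi(x)$ obtained from the projected feature map $\hat\phi_t=P_t\phi$. Once this bridge is in place, everything else is two invocations of the matrix/operator push-through identities from Lemma \ref{lem:dim-change}; no analytic estimates are needed.
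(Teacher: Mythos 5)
Your proposal is correct and follows essentially the same route as the paper: both hinge on the key identifications $\tilde{K}_t=\hat{\Phi}_t\hat{\Phi}_t^T$ and $\tilde{k}_t(x)=\hat{\Phi}_t\phi(x)=\tilde{\Phi}_t\tilde{\phi}_t(x)$ via the projection $P_t$ (using $P_t=P_t^T=P_t^2$), followed by the push-through identities of Lemma \ref{lem:dim-change}. The only difference is cosmetic — the paper starts from $\lambda\norm{\phi(x)}^2_{\hat{V}_t^{-1}}$ and $\inner{\phi(x)}{\hat{V}_t^{-1}\hat{\Phi}_t^Tf_t}_\cH$ and simplifies toward $\tilde{\sigma}_t^2(x)$ and $\tilde{\alpha}_t(x)$, whereas you run the same chains in the opposite direction.
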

\begin{proof}
Since $K_{\cD_t}=\Phi_{\cD_t}\Phi_{\cD_t}^T$, we have the projection  $P_t=\Phi_{\cD_t}^T(K_{\cD_t})^{+}\Phi_{\cD_t}$. Now, observe that $\inner{\phi(x)}{\phi(y)}_{P_t}=\left((K_{\cD_t}^{1/2})^{\dagger}\Phi_{\cD_t}\phi(x)\right)^T\left((K_{\cD_t}^{1/2})^{\dagger}\Phi_{\cD_t}\phi(y)\right)=\left((K_{\cD_t}^{1/2})^{\dagger}k_{\cD_t}(x)\right)^T\left((K_{\cD_t}^{1/2})^{\dagger}k_{\cD_t}(y)\right)=\tilde{\phi}_t(x)^T\tilde{\phi}_t(y)$. Also, note that
$\hat{\Phi}_t^T=P_t\Phi_t^T$.
This implies $\hat{\Phi}_t\phi(x)=\Phi_tP_t\phi(x)=[\inner{\phi(x_1)}{\phi(x)}_{P_t},\ldots,\inner{\phi(x_t)}{\phi(x)}_{P_t}]^T=[\tilde{\phi}_t(x_1)^T\tilde{\phi}_t(x),\ldots,\tilde{\phi}_t(x_t)^T\tilde{\phi}_t(x)]^T=\tilde{\Phi}_t\tilde{\phi}_t(x)$. Further, the $(i,j)$-th entry of $\hat{\Phi}_t\hat{\Phi}_t^T$ is given by $[\hat{\Phi}_t\hat{\Phi}_t^T]_{i,j}=\inner{P_t\phi(x_i)}{P_t\phi(x_j)}_\cH=\inner{\phi(x_i)}{\phi(x_j)}_{P_t}=\tilde{\phi}_t(x_i)^T\tilde{\phi}_t(x_j)$ and hence, $\hat{\Phi}_t\hat{\Phi}_t^T=\tilde{\Phi}_t \tilde{\Phi}_t^T$.  
Then, we have 
\beqan
\lambda \norm{\phi(x)}^2_{\hat{V}_t^{-1}}&=& \lambda \inner{\phi(x)}{(\hat{\Phi}_t^T\hat{\Phi}_t + \lambda I_\cH)^{-1}\phi(x)}_\cH\\
&\stackrel{(a)}{=}& \inner{\phi(x)}{\left(I_\cH-\hat{\Phi}_t^T(\hat{\Phi}_t\hat{\Phi}_t^T+\lambda I_t)^{-1}\hat{\Phi}_t\right)\phi(x)}_\cH\\
& \stackrel{(b)}{=} & k(x,x)-\tilde{\phi}_t(x)^T\tilde{\Phi}_t^T(\tilde{\Phi}_t\tilde{\Phi}_t^T+\lambda I_t)^{-1}\tilde{\Phi}_t\tilde{\phi}_t(x)\\
&\stackrel{(c)}{=}& k(x,x)-\tilde{\phi}_t(x)^T\tilde{\Phi}_t^T\tilde{\Phi}_t(\tilde{\Phi}_t^T\tilde{\Phi}_t+\lambda I_{m_t})^{-1}\tilde{\phi}_t(x)\\
& = & k(x,x)-\tilde{\phi_t}(x)^T\tilde{\phi_t}(x)+\lambda \tilde{\phi_t}(x)^T \tilde{V}_t^{-1}\tilde{\phi_t}(x)=\tilde{\sigma}^2_t(x).
\eeqan
Here $(a)$ follows from \ref{eqn:dim-change-2}, $(b)$ is due to $\hat{\Phi}_t\phi(x)=\tilde{\Phi}_t\tilde{\phi}_t(x)$ and $\hat{\Phi}_t\hat{\Phi}_t^T=\tilde{\Phi}_t\tilde{\Phi}_t^T$, and $(c)$ follows from \ref{eqn:dim-change-1}.
Now observe that
\beqan
\inner{\phi(x)}{\hat{V}_t^{-1}\hat{\Phi}_t^T f_t}_\cH &=&\inner{\phi(x)}{(\hat{\Phi}_t^T\hat{\Phi}_t+\lambda I_\cH)^{-1}\hat{\Phi}_t^T f_t}_\cH \\ &\stackrel{(a)}{=}&\inner{\phi(x)}{\hat{\Phi}_t^T(\hat{\Phi}_t \hat{\Phi}_t^T+\lambda I_t)^{-1}f_t}_\cH\\&\stackrel{(b)}{=}&\tilde{\phi}_t(x)^T\tilde{\Phi}_t^T(\tilde{\Phi}_t\tilde{\Phi}_t^T+\lambda I_t)^{-1}f_t\\
&\stackrel{(c)}{=}&\tilde{\phi}_t(x)^T(\tilde{\Phi}_t^T\tilde{\Phi}_t+\lambda I_{m_t})^{-1}\tilde{\Phi}_t^Tf_t\\
&=&\tilde{\phi}_t(x)^T\tilde{V}_t^{-1}\tilde{\Phi}_t^Tf_t=\tilde{\alpha}_t(x).
\eeqan
Here $(a)$ and $(c)$ follow from \ref{eqn:dim-change-1}, and $(b)$ is due to $\hat{\Phi}_t\phi(x)=\tilde{\Phi}_t\tilde{\phi}_t(x)$ and $\hat{\Phi}_t\hat{\Phi}_t^T=\tilde{\Phi}_t\tilde{\Phi}_t^T$.
\end{proof}
\begin{mylemma}[Accuracy of approximate posterior variance]
For an $\epsilon$-accurate dictionary (Definition \ref{def:dictionary}), we have 
\beqn
\frac{1-\epsilon}{1+\epsilon} \sigma_t^2(x) \le \tilde{\sigma}_t^2(x) \le		 \frac{1+\epsilon}{1-\epsilon}\sigma_t^2(x).
\eeqn
\end{mylemma}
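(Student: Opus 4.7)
The plan is to leverage Lemma~\ref{lem:projection-values}, which identifies $\tilde{\sigma}_t^2(x) = \lambda \norm{\phi(x)}^2_{\hat{V}_t^{-1}}$, together with the defining identity $\sigma_t^2(x) = \lambda \norm{\phi(x)}^2_{V_t^{-1}}$. The target therefore reduces to the operator sandwich $\frac{1-\epsilon}{1+\epsilon}\,V_t \preceq \hat{V}_t \preceq \frac{1+\epsilon}{1-\epsilon}\,V_t$ on $\cH$: inverting this (which flips the PSD ordering) and evaluating the resulting quadratic forms at $\phi(x)$ with the $\lambda$ prefactor yields the two claimed bounds on $\tilde{\sigma}_t^2(x)/\sigma_t^2(x)$.

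The key structural observation is as follows. Since $\hat{\Phi}_t^T = P_t \Phi_t^T$ and $P_t^2 = P_t$, a short computation gives $\hat{V}_t = P_t V_t P_t + \lambda(I_\cH - P_t)$. Letting $\mathcal{S} = \mathrm{range}(P_t) = \mathrm{span}\{\phi(x_i): x_i \in \cD_t\}$, I claim that \emph{both} $\hat{V}_t$ and $V_{\cD_t} = \Phi_t^T S_t^2 \Phi_t + \lambda I_\cH$ are block-diagonal with respect to $\cH = \mathcal{S} \oplus \mathcal{S}^{\perp}$. For $h \in \mathcal{S}^{\perp}$, one has $P_t h = 0$, and every $\phi(x_i)$ with $i \in \cD_t$ lies in $\mathcal{S}$ and so is orthogonal to $h$, giving $\hat{V}_t h = \lambda h = V_{\cD_t} h$. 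For $h \in \mathcal{S}$, $P_t h = h$ simplifies $\hat{V}_t h = P_t V_t h$, from which $\inner{h}{\hat{V}_t h}_\cH = \inner{h}{V_t h}_\cH$; in other words, $\hat{V}_t|_{\mathcal{S}} = V_t|_{\mathcal{S}}$ as quadratic forms on $\mathcal{S}$.

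I would then invoke the $\epsilon$-accuracy condition in form 3 of Lemma~\ref{lem:equivalence}, namely $(1-\epsilon) V_t \preceq V_{\cD_t} \preceq (1+\epsilon) V_t$. Restricting this to the subspace $\mathcal{S}$ and using $V_t|_{\mathcal{S}} = \hat{V}_t|_{\mathcal{S}}$ yields $(1-\epsilon)\hat{V}_t|_{\mathcal{S}} \preceq V_{\cD_t}|_{\mathcal{S}} \preceq (1+\epsilon)\hat{V}_t|_{\mathcal{S}}$. On $\mathcal{S}^{\perp}$ both operators equal $\lambda I_{\mathcal{S}^{\perp}}$, so the same two-sided bound holds trivially there. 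Because both $\hat{V}_t$ and $V_{\cD_t}$ are block-diagonal on $\mathcal{S} \oplus \mathcal{S}^{\perp}$, adding the block-wise inequalities lifts them to an inequality on all of $\cH$: $(1-\epsilon)\hat{V}_t \preceq V_{\cD_t} \preceq (1+\epsilon)\hat{V}_t$.

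Finally, I chain the two sandwich relations: from $V_{\cD_t} \preceq (1+\epsilon) V_t$ combined with $V_{\cD_t} \succeq (1-\epsilon)\hat{V}_t$ I get $\hat{V}_t \preceq \frac{1+\epsilon}{1-\epsilon} V_t$, and symmetrically $\hat{V}_t \succeq \frac{1-\epsilon}{1+\epsilon} V_t$. Inverting gives $\frac{1-\epsilon}{1+\epsilon} V_t^{-1} \preceq \hat{V}_t^{-1} \preceq \frac{1+\epsilon}{1-\epsilon} V_t^{-1}$, and evaluating the $\lambda \phi(x)^T(\cdot)\phi(x)$ form finishes the argument. The main conceptual step is recognizing that $\hat{V}_t$ and $V_{\cD_t}$ share a common block-diagonal structure and agree exactly on $\mathcal{S}^{\perp}$; this is what allows the $\epsilon$-accuracy bound relating $V_t$ to $V_{\cD_t}$ to be transferred to an essentially equally tight bound relating $V_t$ to $\hat{V}_t$, at the price of only the mild worsening $(1\pm\epsilon) \to (1\pm\epsilon)/(1\mp\epsilon)$.
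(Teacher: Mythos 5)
Your proof is correct, and it reaches the same pivotal operator sandwich as the paper --- $\frac{1-\epsilon}{1+\epsilon}V_t \preceq \hat{V}_t \preceq \frac{1+\epsilon}{1-\epsilon}V_t$, followed by inversion of the PSD order and evaluation of $\lambda\inner{\phi(x)}{\cdot\,\phi(x)}_\cH$ --- but the mechanism you use to relate $\hat{V}_t$ to $V_{\cD_t}$ is genuinely different from the paper's. The paper argues by direct operator algebra: starting from $\hat{V}_t = P_tV_tP_t + \lambda(I_\cH - P_t)$ it substitutes $V_t \preceq \frac{1}{1-\epsilon}V_{\cD_t}$ inside the conjugation by $P_t$, expands $P_tV_{\cD_t}P_t$, and then absorbs the leftover $\frac{\lambda\epsilon}{1-\epsilon}P_t$ term using $P_t\Phi_t^TS_t = \Phi_t^TS_t$ and $P_t \preceq I_\cH$, arriving at $\hat{V}_t \preceq \frac{1}{1-\epsilon}V_{\cD_t} \preceq \frac{1+\epsilon}{1-\epsilon}V_t$ (and symmetrically for the lower bound). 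You instead observe that $\hat{V}_t$ and $V_{\cD_t}$ are both block-diagonal with respect to $\cH = \mathcal{S}\oplus\mathcal{S}^\perp$, that they coincide (with $\lambda I$) on $\mathcal{S}^\perp$, and that $\hat{V}_t$ agrees with $V_t$ as a quadratic form on $\mathcal{S}$; this lets you transfer the $\epsilon$-accuracy sandwich block-wise and obtain the two-sided relation $(1-\epsilon)\hat{V}_t \preceq V_{\cD_t} \preceq (1+\epsilon)\hat{V}_t$ in one symmetric stroke. Your route is arguably more transparent about \emph{why} the bound holds (the approximation only perturbs things on the dictionary subspace, and off that subspace nothing is lost), at the cost of having to verify the block-diagonality and the cross-term cancellation when lifting the block-wise inequalities to all of $\cH$; the paper's computation is shorter on the page but less illuminating. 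All the individual steps you invoke (self-adjointness and idempotence of $P_t$, membership of the dictionary features in $\mathcal{S}$, operator antitonicity of inversion for operators bounded below by $\lambda I_\cH$) check out.
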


\begin{proof}
From Lemma \ref{lem:projection-values}, we have $\tilde{\sigma}_t^2(x) = \lambda \inner{\phi(x)}{{\hat{V}_t^{-1}}\phi(x)}_\cH$. Now, observe that $\hat{V}_t=P_t\Phi_t^T\Phi_tP_t+ \lambda I_\cH=P_tV_tP_t+\lambda (I_\cH-P_t)$. From Lemma \ref{lem:equivalence}, we have $(1-\epsilon)V_t \preceq V_{\cD_t} \preceq (1+\epsilon)V_t$ for any $\epsilon$-accurate dictionary $\cD_t$.
This implies that
\beqan
\hat{V}_t &\preceq & \frac{1}{1-\epsilon}P_tV_{\cD_t}P_t+\lambda (I_\cH-P_t)\\ &=& \frac{1}{1-\epsilon}P_t\Phi_t^TS_t^2\Phi_tP_t + \frac{\lambda \epsilon}{1-\epsilon} P_t + \lambda I_\cH\\
& \stackrel{(a)}{\preceq} & \frac{1}{1-\epsilon} (\Phi_t^TS_t^2\Phi_t +\lambda I_\cH)\\  &\preceq & \frac{1+\epsilon}{1-\epsilon}V_t,
\eeqan
where $(a)$ follows from $P_t\Phi_t^TS_t=\Phi_t^TS_t$ and $P_t \preceq I_\cH$. Therefore, we have
\beqn
\tilde{\sigma}_t^2(x) \ge \frac{1-\epsilon}{1+\epsilon}\lambda \inner{\phi(x)}{{V_t^{-1}}\phi(x)}_\cH=\frac{1-\epsilon}{1+\epsilon}\sigma_t^2(x).
\eeqn
Similarly, we can show that $\hat{V}_t \succeq  \frac{1-\epsilon}{1+\epsilon} V_t$ and thus, in turn, $\tilde{\sigma}_t^2(x) \le \frac{1+\epsilon}{1-\epsilon}\sigma_t^2(x)$.
\end{proof}
Now, we will show that the confidence sets formed by ATA-GP-UCB (Algorithm \ref{algo:itgp-ucb}) under Nystr\"{o}m approximation is tighter compared to that of TGP-UCB.
\subsubsection{Confidence sets of ATA-GP-UCB under Nystr\"{o}m approximation}
First, we define the following two events. Fix any $\epsilon \in (0,1)$ and $\delta \in (0,1]$. Let $E_{1,t}$ denotes the event that the dictionary $\cD_t$ is $\epsilon$-accurate, i.e,
\beqn
(1-\epsilon)V_t \preceq V_{\cD_t} \preceq (1+\epsilon)V_t,
\eeqn 
and $E_{2,t}$ denotes the event that the size of the dictionary $\cD_t$ is at most $6\rho (1+\frac{1}{\lambda})q\gamma_t$, i.e.,
\beqn
m_t \le 6\rho \left(1+\frac{1}{\lambda}\right) \; q \gamma_t,
\eeqn
where $\rho=\frac{1+\epsilon}{1-\epsilon}$ and  $q=\frac{6\rho\ln(2T/\delta)}{\epsilon^2}$.
Then from Lemma \ref{lem:dictionary-prop}, we have $\prob{\cap_{t=1}^T (E_{1,t}\cap E_{2,t})} \ge 1-\delta$. 
Let $\cG_t=\sigma \left(\lbrace x_i, ( z_{i,j} )_{j=1}^i \rbrace_{i=1}^t\right), t\ge 1$ denotes the $\sigma$-algebra generated by the arms played and the outcomes of the Nystr\"{o}mEmbedding procedure(Algorithm \ref{subroutine:Nystrom}) up to time $t$. See that $(\cG_t)_{t \ge 1}$ defines a filtration, and both $E_{1,t}$ and $E_{2,t}$ are $\cG_t$ measurable.

\begin{mylemma}[Tighter confidence sets with Nystr\"{o}m embedding]
Fix any $\delta \in (0,1]$, $\epsilon \in (0,1)$ and set $\rho=\frac{1+\epsilon}{1-\epsilon}$. Then, ATA-GP-UCB under Nystr\"{o}m approximation, and with parameters $q=6\rho\ln(4T/\delta)/\epsilon^2$, $b_t = \left(v/\ln(4m_tT/\delta)\right)^{\frac{1}{1+\alpha}}t^{\frac{1-\alpha}{2(1+\alpha)}}$ and $\beta_{t+1}=B(1+\frac{1}{\sqrt{1-\epsilon}})+4 \sqrt{m_t/\lambda}\; v^{\frac{1}{1+\alpha}}\left(\ln(4m_tT/\delta)\right)^{\frac{\alpha}{1+\alpha}}t^{\frac{1-\alpha}{2(1+\alpha)}}$, ensures, with probability at least $1-\delta$, uniformly over all $t \in [T]$ and $x \in \cX$, that
\beqn
\abs{f(x)-\tilde{\mu}_{t-1}(x)} \le \beta_t\tilde{\sigma}_{t-1}(x), 
\eeqn
where $m_t$ is the dimension of the Nystr\"{o}m embedding $\tilde{\phi}_t$ constructed at round $t$.
\label{lem:func-conc-ata-gp-ucb-2}
\end{mylemma}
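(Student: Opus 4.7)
The plan is to adapt the template of Lemma \ref{lem:func-conc-ata-gp-ucb}, replacing the role of the data-independent QFF approximation with the projection interpretation of the Nystr\"{o}m embedding from Lemma \ref{lem:projection-values}. Introducing the intermediate quantity $\tilde{\alpha}_t(x) = \tilde{\phi}_t(x)^T \tilde{V}_t^{-1} \tilde{\Phi}_t^T f_t$, I would split
\beqn
|f(x) - \tilde{\mu}_t(x)| \le |f(x) - \tilde{\alpha}_t(x)| + |\tilde{\alpha}_t(x) - \tilde{\mu}_t(x)|,
\eeqn
bound each term separately, and combine via a union bound at confidence $\delta/2$ each over the truncation-failure event of Lemma \ref{lem:data-adaptive-truncation} and the dictionary events $E_{1,t} \cap E_{2,t}$ of Lemma \ref{lem:dictionary-prop}. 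The halving of $\delta$ explains the $\ln(4m_tT/\delta)$ factors in the prescribed $b_t$ and $\beta_{t+1}$, and the choice of $q$.

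For the truncation term, Cauchy--Schwarz gives $|\tilde{\alpha}_t(x) - \tilde{\mu}_t(x)| \le \|\tilde{\phi}_t(x)\|_{\tilde{V}_t^{-1}} \cdot \|\tilde{V}_t^{-1}\tilde{\Phi}_t^T f_t - \tilde{\theta}_t\|_{\tilde{V}_t}$. Because $\tilde{\phi}_t(x)^T \tilde{\phi}_t(x) \le k(x,x)$ (the Nystr\"{o}m kernel is dominated by $k$), the choice (ii) of $\tilde{\sigma}_t^2(x)$ ensures $\lambda\|\tilde{\phi}_t(x)\|^2_{\tilde{V}_t^{-1}} \le \tilde{\sigma}_t^2(x)$, and Lemma \ref{lem:data-adaptive-truncation} then yields the second summand of $\beta_{t+1}$. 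This step is essentially identical to the QFF analysis.

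For the approximation term, use Lemma \ref{lem:projection-values} to rewrite $\tilde{\alpha}_t(x) = \langle \phi(x), \hat{V}_t^{-1} \hat{\Phi}_t^T f_t\rangle_{\cH}$, where $\hat{V}_t = \hat{\Phi}_t^T \hat{\Phi}_t + \lambda I_\cH$ and $\hat{\Phi}_t = \Phi_t P_t$. Since $\hat{\Phi}_t f = \Phi_t P_t f$ while $f_t = \Phi_t f$, the identity $\hat{V}_t f - \hat{\Phi}_t^T f_t = \lambda f - \hat{\Phi}_t^T \Phi_t(I-P_t)f$ yields
\beqn
f(x) - \tilde{\alpha}_t(x) = \lambda \langle \phi(x), \hat{V}_t^{-1} f\rangle_{\cH} - \langle \phi(x), \hat{V}_t^{-1}\hat{\Phi}_t^T \Phi_t f_\perp\rangle_{\cH},
\eeqn
with $f_\perp := (I-P_t)f$. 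Cauchy--Schwarz together with $\hat{V}_t \succeq \lambda I_\cH$ immediately bounds the first summand by $B\tilde{\sigma}_t(x)$.

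The main obstacle is the second summand, where dictionary accuracy enters crucially. Writing it via Cauchy--Schwarz as $(\tilde{\sigma}_t(x)/\sqrt{\lambda}) \cdot \|\hat{\Phi}_t^T \Phi_t f_\perp\|_{\hat{V}_t^{-1}}$ and using the contraction $\hat{\Phi}_t \hat{V}_t^{-1} \hat{\Phi}_t^T \preceq I_t$ (from the identity in Lemma \ref{lem:dim-change}), it reduces to bounding $\|\Phi_t f_\perp\|_2^2 \le \|f_\perp\|_{V_t}^2$. The key observation is that $(I-P_t)\phi(x_i) = 0$ for every $x_i \in \cD_t$, which forces $S_t^2 \Phi_t f_\perp = 0$ and hence $V_{\cD_t} f_\perp = \lambda f_\perp$, giving $\|f_\perp\|^2_{V_{\cD_t}} = \lambda \|f_\perp\|^2_\cH \le \lambda B^2$. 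On the event $E_{1,t}$, Lemma \ref{lem:equivalence} provides $V_t \preceq \tfrac{1}{1-\epsilon} V_{\cD_t}$, which transfers the bound to $\|f_\perp\|_{V_t}^2 \le \lambda B^2/(1-\epsilon)$. Consequently the second summand is at most $B\tilde{\sigma}_t(x)/\sqrt{1-\epsilon}$, and combining the two summands gives $|f(x) - \tilde{\alpha}_t(x)| \le B(1 + 1/\sqrt{1-\epsilon})\tilde{\sigma}_t(x)$. Adding the truncation contribution, using $E_{2,t}$ to ensure the log-factor involves only $m_t = \tilde{O}(\gamma_t)$, and handling the $t=1$ base case via $\tilde{\sigma}_0(x) \ge k^{1/2}(x,x)$ then delivers the claimed uniform bound.
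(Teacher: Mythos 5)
Your proposal is correct and follows the same skeleton as the paper's proof: the same decomposition $\abs{f(x)-\tilde{\mu}_t(x)} \le \abs{f(x)-\tilde{\alpha}_t(x)} + \abs{\tilde{\alpha}_t(x)-\tilde{\mu}_t(x)}$, the same use of the projection identity $\tilde{\alpha}_t(x)=\inner{\phi(x)}{\hat{V}_t^{-1}\hat{\Phi}_t^T f_t}_\cH$, the same treatment of the truncation term via $\lambda\norm{\tilde{\phi}_t(x)}^2_{\tilde{V}_t^{-1}} \le \tilde{\sigma}_t^2(x)$ and Lemma \ref{lem:data-adaptive-truncation}, and the same union bound producing the $\ln(4m_tT/\delta)$ factors. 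The one place you genuinely diverge is in bounding the projection-error summand $\inner{\phi(x)}{\hat{V}_t^{-1}\hat{\Phi}_t^T\Phi_t(I_\cH-P_t)f}_\cH$: the paper proceeds by operator-norm submultiplicativity, bounding $\norm{\hat{V}_t^{-1/2}\hat{\Phi}_t^T}_\cH \le 1$ and $\norm{\Phi_t(I_\cH-P_t)}^2_\cH \le \frac{\lambda}{1-\epsilon}$ via the chain $I_\cH - P_t \preceq \lambda V_{\cD_t}^{-1} \preceq \frac{\lambda}{1-\epsilon}V_t^{-1}$ on $E_{1,t}$; you instead exploit the structural fact that $f_\perp=(I_\cH-P_t)f$ is orthogonal to every dictionary atom, so $S_t^2\Phi_t f_\perp=0$ and $V_{\cD_t}f_\perp=\lambda f_\perp$, then transfer $\norm{f_\perp}^2_{V_{\cD_t}}=\lambda\norm{f_\perp}^2_\cH$ to $\norm{f_\perp}^2_{V_t}$ using $E_{1,t}$. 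Both routes land on the identical constant $B\bigl(1+\frac{1}{\sqrt{1-\epsilon}}\bigr)$; the paper's version is a generic norm calculation, while yours makes visible \emph{why} only the accuracy of the dictionary (and not its size) enters this term, at the cost of one extra observation about the support of $S_t$. No gap.
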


\begin{proof}
From Lemma \ref{lem:projection-values}, we have
$\tilde{\alpha}_t(x)=\inner{\phi(x)}{\hat{V}_t^{-1}\hat{\Phi}_t^T f_t}_\cH$. Therefore, 
\beqan
\abs{f(x)-\tilde{\alpha}_t(x)} &=& \abs{\inner{\phi(x)}{f-\hat{V}_t^{-1}\hat{\Phi}_t^T f_t}_\cH}\\
&\stackrel{(a)}{\le}&\norm{\phi(x)}_{\hat{V}_t^{-1}} \norm{f-\hat{V}_t^{-1}\hat{\Phi}_t^T f_t}_{\hat{V}_t}\\
&=&\lambda^{-1/2} \norm{(\hat{\Phi}_t^T\hat{\Phi}_t+\lambda I_\cH)f -\hat{\Phi}_t^T \Phi_t f}_{\hat{V}_t^{-1}}\tilde{\sigma}_t(x)\\
&\stackrel{(b)}{=}& \lambda^{-1/2} \norm{\lambda f-\hat{\Phi}_t^T\Phi_t(I_\cH-P_t)f}_{\hat{V}_t^{-1}}\tilde{\sigma}_t(x)\\
&\stackrel{(c)}{\le}& \left(\lambda^{1/2}\norm{\hat{V}_t^{-1/2}f}_\cH+\lambda^{-1/2}\norm{{\hat{V}_t^{-1/2}}\hat{\Phi}_t^T\Phi_t(I_\cH-P_t)f}_\cH\right)\tilde{\sigma}_t(x)\\
&\stackrel{(d)}{\le}& \left(\norm{f}_\cH+\lambda^{-1/2}\norm{{\hat{V}_t^{-1/2}}\hat{\Phi}_t^T}_\cH\norm{\Phi_t(I_\cH-P_t)}_\cH\norm{f}_\cH\right)\tilde{\sigma}_t(x)\\
&\stackrel{(e)}{\le}& B\left(1+\lambda^{-1/2}\norm{\Phi_t(I_\cH-P_t)}_\cH\right)\tilde{\sigma}_t(x).
\eeqan
Here $(a)$ is by Cauchy-Schwartz inequality, $(b)$ uses the fact that $\hat{\Phi}_t=\Phi_t P_t$, $(c)$ is by triangle inequality, $(d)$ follows from $\norm{\hat{V}_t^{-1/2}f}_\cH \le \lambda^{-1/2}\norm{f}_{\cH}$, and $(e)$ follows from the fact that $\norm{{\hat{V}_t^{-1/2}}\hat{\Phi}_t^T}^2_\cH = \lambda_{\max}\left(\hat{\Phi}_t(\hat{\Phi}_t^T\hat{\Phi}_t+\lambda I_\cH)^{-1}\hat{\Phi}_t^T\right)=\lambda_{\max}\left(\hat{\Phi}_t\hat{\Phi}_t^T(\hat{\Phi}_t\hat{\Phi}_t^T+\lambda I_t)^{-1}\right) \le 1$, and that $\norm{f}_\cH \le B$.
Now see that $\text{Col}(\Phi_{\cD_t}^T)=\text{Col}(\Phi_t^T S_t)$, and hence $P_t=\Phi_t^T S_t(S_t\Phi_t\Phi_t^T S_t)^{+}S_t\Phi_t$.  Therefore
\beqn
I_\cH-P_t \preceq I_\cH - \Phi_t^T S_t(S_t\Phi_t\Phi_t^T S_t+\lambda I_\cH)^{-1}S_t\Phi_t\stackrel{(a)}{=}\lambda (\Phi_t^T S_t^2\Phi_t+\lambda I_\cH)^{-1}=\lambda V_{\cD_t}^{-1},
\eeqn
where $(a)$ follows from \ref{eqn:dim-change-2}. Now given a filtration $\cG_t$ such that $E_{1,t}$ is true, we have $I_\cH-P_t \preceq \frac{\lambda}{1-\epsilon}V_t^{-1}$, and hence $\norm{\Phi_t(I_\cH-P_t)}^2_\cH=\lambda_{\max}\left(\Phi_t(I_\cH-P_t)\Phi_t^T\right) \le \frac{\lambda}{1-\epsilon} \lambda_{\max}\left(\Phi_t(\Phi_t^T\Phi_t+\lambda I_\cH)^{-1}\Phi_t^T\right)=\frac{\lambda}{1-\epsilon} \lambda_{\max}\left(\Phi_t\Phi_t^T(\Phi_t\Phi_t^T+\lambda I_t)^{-1}\right) \le \frac{\lambda}{1-\epsilon}$. Therefore, given a filtration $\cG_t$ such that $E_{1,t
}$ is true, 
\beq
\abs{f(x)-\tilde{\alpha}_t(x)} \le B\left(1+\frac{1}{\sqrt{1-\epsilon}}\right)\tilde{\sigma}_t(x).
\label{eqn:func-error-2}
\eeq
Now, we have $\tilde{\mu}_t(x)=\tilde{\phi}_t(x)^T\tilde{\theta}_t$ and $\tilde{\alpha}_t(x)=\tilde{\phi}_t(x)^T\tilde{V}_t^{-1}\tilde{\Phi}_t^T f_t$. Also observe that $\lambda \norm{\tilde{\phi}_t(x)}^2_{\tilde{V}_t^{-1}}=\tilde{\sigma}_t^2(x)+\tilde{\phi}_t(x)^T \tilde{\phi}_t(x)-k(x,x)=\tilde{\sigma}_t^2(x)-\inner{\phi(x)}{(I_\cH-P_t)\phi(x)}_\cH \le \tilde{\sigma}_t^2(x)$, since by definition $P_t \preceq I_\cH$. Then, by Cauchy-Schwartz inequality
\beqn
\abs{\tilde{\alpha}_t(x)-\tilde{\mu}_t(x)}  \le   \norm{\tilde{V}_t^{-1}\tilde{\Phi}^T f_t-\tilde{\theta}_t}_{\tilde{V}_t} \norm{\tilde{\phi}(x)}_{\tilde{V}_t^{-1}} \le \lambda^{-1/2}\norm{\tilde{V}_t^{-1}\tilde{\Phi}_t^T f_t-\tilde{\theta}_t}_{\tilde{V}_t}\tilde{\sigma}_t(x).
\eeqn
Now, Lemma \ref{lem:data-adaptive-truncation} implies that for any $\delta \in (0,1]$, with probability at least $1-\delta$, uniformly over all $t \in [T]$ and $x \in \cX$,
\beq
\abs{\tilde{\alpha}_t(x)-\tilde{\mu}_t(x)}  \le 4 \sqrt{m_t/\lambda}\; v^{\frac{1}{1+\alpha}}\left(\ln(2m_tT/\delta)\right)^{\frac{\alpha}{1+\alpha}}t^{\frac{1-\alpha}{2(1+\alpha)}}\tilde{\sigma}_t(x).
\label{eqn:approx-error-2}
\eeq
By triangle inequality,
\beqn
\abs{f(x)-\tilde{\mu}_t(x)} \le \abs{f(x)-\tilde{\alpha}_t(x)}  + \abs{\tilde{\alpha}_t(x)-\tilde{\mu}_t(x)}.
\eeqn
Now, combining \ref{eqn:func-error-2} and \ref{eqn:approx-error-2}, for any $\delta \in (0,1]$ and given a filtration $(\cG_t)_{t \ge 1}$ such that $E_{1,t}$ is true for all $t \in [T]$, we have, with probability at least $1-\delta$, uniformly over all $t \in [T]$ and $x \in \cX$,
\beqn
\abs{f(x)-\tilde{\mu}_t(x)} 
\le \left(B\left(1+\frac{1}{\sqrt{1-\epsilon}}\right)+4 \sqrt{m_t/\lambda}\; v^{\frac{1}{1+\alpha}}\left(\ln(2m_tT/\delta)\right)^{\frac{\alpha}{1+\alpha}}t^{\frac{1-\alpha}{2(1+\alpha)}}\right)\tilde{\sigma}_t(x).
\eeqn
From Lemma \ref{lem:dictionary-prop}, the event $E_{1,t}$ is true for all $t\in [T]$ with probability at least $1-\delta$. 
Now taking an union bound, we obtain that for any $\delta \in (0,1]$, with probability at least $1-\delta$,  uniformly over all $t \in [T]$ and $x \in \cX$,
\beqn
\abs{f(x)-\tilde{\mu}_t(x)} 
\le \left(B\left(1+\frac{1}{\sqrt{1-\epsilon}}\right)+4 \sqrt{m_t/\lambda}\; v^{\frac{1}{1+\alpha}}\left(\ln(4m_tT/\delta)\right)^{\frac{\alpha}{1+\alpha}}t^{\frac{1-\alpha}{2(1+\alpha)}}\right)\tilde{\sigma}_t(x).
\label{eqn:true-func-conc-2}
\eeqn
Further observe that $\abs{f(x)-\tilde{\mu}_0(x)}=\abs{f(x)} \le Bk^{1/2}(x,x) \le B(1+1/\sqrt{1-\epsilon})\tilde{\sigma}_0(x)$. Now,
the result follows by setting $\beta_{t+1} = B\left(1+\frac{1}{\sqrt{1-\epsilon}}\right)+4 \sqrt{m_t/\lambda}\; v^{\frac{1}{1+\alpha}}\left(\ln(4m_tT/\delta)\right)^{\frac{\alpha}{1+\alpha}}t^{\frac{1-\alpha}{2(1+\alpha)}}$ for all $t \ge 0$.
\end{proof}
Now we are ready to prove the regret bound of ATA-GP-UCB under Nystr\"{o}m approximation.
\subsubsection{Proof of Theorem \ref{thm:regret-bound-nystrom}}
For any $\delta \in (0,1]$, we have, with probability at least $1-\delta$, uniformly over all $t \in [T]$, the instantaneous regret  
\beqan
r_t &=& f(x^\star)-f(x_t)\\
& \stackrel{(a)}{\le} & \tilde{\mu}_{t-1}(x^\star) +  \beta_{t}\tilde{\sigma}_{t-1}(x^\star)- f(x_t)\\
& \stackrel{(b)}{\le} & \tilde{\mu}_{t-1}(x_t) +  \beta_{t}\tilde{\sigma}_{t-1}(x_t) -f(x_t)\\
& \stackrel{(c)}{\le} & 2\beta_{t}\tilde{\sigma}_{t-1}(x_t)
\eeqan
Here $(a)$ and $(c)$ follow from Lemma \ref{lem:func-conc-ata-gp-ucb-2}, and $(b)$ is due to the choice of ATA-GP-UCB (Algorithm \ref{algo:itgp-ucb}). From Lemma \ref{lem:dictionary-prop},
given a filtration $(\cG_t)_{t \ge 1}$ such that the event $E_{2,t}$ is true for all $t \in [T]$, we have $m_t=O\left(\frac{\rho^2}{\epsilon^2}\gamma_t\ln(T/\delta)\right)$. This, in turn, implies that 
\beqn
\beta_t = O\left(B\left(1+\frac{1}{\sqrt{1-\epsilon}}\right)+\frac{\rho}{\epsilon}\sqrt{\gamma_t\ln(T/\delta)}\;v^{\frac{1}{1+\alpha}}\left(\ln\left(\frac{\gamma_t \ln(T/\delta)T}{\delta}\right)\right)^{\frac{\alpha}{1+\alpha}}t^{\frac{1-\alpha}{2(1+\alpha)}}\right).
\eeqn
Further, given a filtration $(\cG_t)_{t \ge 1}$ such that the event $E_{1,t}$ is true for all $t \in [T]$, we have
\beqn
\sum_{t=1}^{T}\tilde{\sigma}_{t-1}(x_t) \stackrel{(a)}{\le} \rho \sum_{t=1}^{T}\sigma_{t-1}(x_t) \stackrel{(b)}{\le} \rho \sqrt{T\sum_{t=1}^{T}\sigma_{t-1}^2(x_t)} \stackrel{(c)}{\le} \rho\sqrt{2(1+\lambda)T\gamma_T}=O\left(\rho\sqrt{T\gamma_T}\right).
\eeqn
Here $(a)$ follows from Lemma \ref{lem:equivalence}, $(b)$ follows from Cauchy-Schwartz inequality, and $(c)$ follows from Lemma \ref{lem:pred-var}.
Now from Lemma \ref{lem:dictionary-prop}, with probability at least $1-\delta$, both $E_{1,t}$ and $E_{2,t}$ are true for all $t\in[T]$. Hence, by virtue of an union bound, we obtain that for any $\delta \in (0,1]$, with probability at least $1-\delta$, the cumulative regret of ATA-GP-UCB under Nystr\"{o}m appproximation after $T$ rounds is
\beqn
R_T = O\left(\rho B\left(1+\frac{1}{\sqrt{1-\epsilon}}\right)\sqrt{T \gamma_T} + \frac{\rho^2}{\epsilon} \; v^{\frac{1}{1+\alpha}}\left(\ln\left(\frac{\gamma_T \ln(T/\delta)T}{\delta}\right)\right)^{\frac{\alpha}{1+\alpha}}\sqrt{\ln(T/\delta)}\gamma_TT^{\frac{1}{1+\alpha}}\right).
\eeqn

\end{appendix}

\end{document}